\newtheorem{theorem}{Theorem}[section]
\newtheorem{lemma}[theorem]{Lemma}
\newtheorem{definition}[theorem]{Definition}
\def\S{\mathcal{S}}
\def\P{{\mathscr P}}
\def\T{\mathcal{T}}
\def\Q{\mathcal{Q}}
\def\X{{\mathcal{X}}}
\def\Y{\mathcal{Y}}
\def\H{{\mathcal{H}}}
\def\T{\mathcal{T}}
\def\0{{\bf 0}}
\begin{document}
  \graphicspath{{./PIC/},{./FIG/}}
\title{{``Sparse + Low-Rank''} Tensor Completion Approach for Recovering Images and Videos}


\author{\IEEEauthorblockN{Chenjian Pan\IEEEauthorrefmark{1} and
Chen Ling\IEEEauthorrefmark{1} and Hongjin He\IEEEauthorrefmark{2} and Liqun Qi\IEEEauthorrefmark{3,4} and Yanwei Xu\IEEEauthorrefmark{3}
}
\IEEEauthorblockA{\IEEEauthorrefmark{1}Department of Mathematics, School of Science, Hangzhou Dianzi University,
Hangzhou, 310018, China.}
\IEEEauthorblockA{\IEEEauthorrefmark{2} School of Mathematics and Statistics, Ningbo University, Ningbo 315211, China.}
 \IEEEauthorblockA{\IEEEauthorrefmark{3}Theory Lab, Central Research Institute, 2012 Labs, Huawei Technologies Co., Ltd., Shatin, New Territory, Hong Kong, China. }
 \IEEEauthorblockA{\IEEEauthorrefmark{4}Department of Applied Mathematics, The Hong Kong Polytechnic University, Hung Hom, Kowloon, Hong Kong, China.}
\thanks{Manuscript received October XX, 2021; revised July XX, 2022.
Corresponding author: H. He (email: hehongjin@nbu.edu.cn).}}

\markboth{Manuscript,~Vol.~xx, No.~xx, July~2022}%
{Pan \MakeLowercase{\textit{et al.}}: ``'Sparse + Low-Rank' Tensor Completion Approach}
%



\IEEEtitleabstractindextext{%
\begin{abstract}
Recovering color images and videos from highly undersampled data is a fundamental and challenging task in face recognition and computer vision. By the multi-dimensional nature of color images and videos, in this paper, we propose a novel tensor completion approach, which is able to efficiently explore the sparsity of tensor data under the discrete cosine transform (DCT). Specifically, we introduce two {``sparse + low-rank''} tensor completion models as well as two implementable algorithms for finding their solutions. {The first one is a DCT-based sparse plus weighted nuclear norm induced low-rank minimization model. The second one is a DCT-based sparse plus $p$-shrinking mapping induced low-rank optimization model}. Moreover, we accordingly propose two implementable augmented Lagrangian-based algorithms for solving the underlying optimization models. A series of numerical experiments including color image inpainting and video data recovery demonstrate that our proposed approach performs better than many existing state-of-the-art tensor completion methods, especially for the case when the ratio of missing data is high.
\end{abstract}

\begin{IEEEkeywords}
Tensor completion, $p$-shrinkage thresholding, weighted nuclear norm, discrete cosine transform, image inpainting.
\end{IEEEkeywords}}

\maketitle

\IEEEdisplaynontitleabstractindextext

%
\IEEEpeerreviewmaketitle

\section{Introduction}\label{Intro}
\IEEEPARstart{W}{ith} the rapid developments of sensor technologies and video surveillance systems, the collected images and videos are naturally stored as multi-way arrays, which are also called tensors. As we know, the complete information of images and videos is very important to make a good decision for intelligent devices or data users. However, during the acquisition process, observed image and video data often contain missing entries (or pixels). In this situation, estimating these missing information of images and videos, which is also called tensor completion, is a fundamental and challenging problem in the communities of image processing and computer vision, e.g., see \cite{K06,HKBH13,CPT04,GSZW11} and references therein. 

It is well-known that tensor is a higher-order extension of matrix, so the tensor completion is also a natural generalization of matrix completion. Generally speaking, when the missing entries are sparse in an incomplete matrix, we could efficiently explore the local information around the unknown components to get a relatively ideal completion to the matrix, e.g., see \cite{AK99,DAB99,GL13}. However, when dealing with a highly undersampled matrix, it is incredibly difficult, even if not impossible, to accurately estimate these missing entries from an incomplete matrix without any global or prior information. Comparatively, the complex structure of tensors makes that recovering the missing entries from a highly undersampled tensor puts forward more theoretical and computational challenges at the interface of statistics and optimization, e.g., see \cite{SGCH19}. In the past decades, it is well-documented that exploiting the inherent global information, e.g., low-rank and sparsity, of the data is able to greatly improve the estimation quality of highly undersampled matrices and tensors, e.g., see \cite{LMWY13, GRY11, LFLY18,ZLLZ18, CHL14}, to name just a few.

As a direct extension of the low-rank matrix completion, the canonical {\it low-rank tensor completion} (LRTC) model is expressed mathematically as
\begin{equation}\label{LRTC-optim}
\begin{array}{cl}
\min\limits_{\mathcal{X}} &{\rm rank}(\mathcal{X})\\
\text{s.t.}&{\mathscr P}_{\Omega}(\mathcal{X})={\mathscr P}_{\Omega}(\mathcal{H}),
\end{array}
\end{equation}	
where both $\mathcal{X}$ and $\mathcal{H}$ are $N$-th order tensors, ${\rm rank}(\cdot)$ denotes the rank function, $\Omega$ is the index set corresponding to the observed entries of the incomplete tensor $\mathcal{H}$, and ${\mathscr P}_{\Omega}(\cdot)$ is the linear operator that keeps known elements in $\Omega$ while setting the others to be zeros. It has been documented in \cite{CR09,HL13} that directly minimizing the rank function of tensors, even for matrices, is an NP-hard problem. Moreover, unlike the unique definition of rank for matrices, there are diverse definitions for tensors such as CANDECOMP/PARAFAC (CP) rank, Tucker rank and Tensor Train (TT) rank (see \cite{KB09, Ose11}). Consequently, the diversity of tensor ranks often makes engineers difficult to choose an appropriate surrogate for characterizing the low-rankness of their problems. 

Roughly speaking, most state-of-the-art tensor completion methods can be grouped into two categories. (i) The first type refers to the tensor nuclear norm minimization approach, which unfolds the underlying tensor as a series of matrices so that we could efficiently employ the matrix nuclear norm as an approximation to the rank of a tensor. Therefore, many efficient algorithms tailored for matrix completion could be naturally extended to tensor completion based on the so-called tensor nuclear norm definitions, e.g., see \cite{JHZJD18, LMWY13, LFLY18, HOM16, HSCW14, BPTD17, JHZML16,XZLC19} and references therein. Mathematically, a seminal tensor nuclear norm minimization model introduced in \cite{LMWY13} takes the form
\begin{equation}\label{LNNTC-optim0}
\begin{array}{cl}
\min\limits_{\mathcal{X}}&\sum_{n=1}^N\alpha_n\|X_{(n)}\|_*\\
\text{s.t.}&{\mathscr P}_{\Omega}(\mathcal{X})={\mathscr P}_{\Omega}(\mathcal{H}),
\end{array}
\end{equation}
where $\alpha_n\geq 0~(n\in [N]:=\{1,2,\cdots,N\})$ can be regarded as weight parameters often satisfying $\sum_{n=1}^N\alpha_n\approx 1$,  matrix $X_{(n)}$ corresponds to the mode-$n$ unfolding of tensor $\mathcal{X}$ for every $n\in [N]$, and $\|\cdot\|_*$ represents the well-known nuclear norm referring to the sum of all singular values of a matrix. Note that a vector consisting of the ranks of unfolding matrices $X_{(n)}$'s $(n=1,2,\cdots N)$ is called the Tucker rank of the tensor $\mathcal{X}$ (see \cite{KB09}). Hence, the objective function in \eqref{LNNTC-optim0} is indeed a convexly weighted sum of the components of Tucker rank. 
Compared with the direct vectorization and matricization approches for tensor data, model \eqref{LNNTC-optim0} efficiently exploits the multi-mode structure, thereby possibly achieving a better approximation to the low-rankness of tensor data. Moreover, these unfolding completion models (i.e., tensor nuclear norm minimization models) are beneficial for algorithmic design due to the convexity of tensor nuclear norm functions. Recently, the state-of-the-art {\it alternating direction method of multipliers} (ADMM) and Douglas-Rachford splitting method have been successfully applied to tensor completion, e.g., see \cite{LMWY13,GRY11,ZA17, LFCLLY16, LFCLLY20}. 
(ii) The second type of tensor completion approach is based on tensor decomposition. Actually, we observe that many tensor nuclear norm based models utilize the so-called unfolding (or matricization) technique. However, Yuan and Zhang \cite{YZ16} showed that the matricization for higher-order tensors ignores the nature of tensors and potentially destroys some inherent properties of the data, thereby leading to suboptimal procedure, which encourages researchers to develop new completion methods from tensor decomposition perspectives. In the literature, some popular tensor decomposition techniques include CP decomposition, Tucker decomposition \cite{KB09}, T-SVD \cite{KM11} and Tensor-Train decomposition \cite{Ose11}. Like the matrix decomposition, the aforementioned tensor decomposition forms have also been verified as powerful dimensionality reduction tools, which can promote the low-rankness of tensor data to some extent. Empirically, these tensor decomposition based completion models have received great successes in big data analysis, pattern recognition, and traffic data recovery, e.g., see  \cite{ZZC15, ZLLZ18, CHL14, FJ15, YZC16, WTLZC19, LNZ20,  LSFCC16} and references therein.

When regarding color images and videos as general tensors, the aforementioned tensor completion approaches are certainly applicable to images and video inpainting problems. However, it often fails to achieve an ideal restoration quality for highly undersampled image data since only the low-rank property is exploited in general tensor completion models {(see Figs. \ref{RGB_image_visual} and \ref{RGB_image_visual_stru})}. As we know, the {\it total variation} (TV) regularization (see \cite{ROF92}) is a widely used tool in the community of image processing to preserve sharp discontinuities (edges) of an image, while removing noise and other unwanted fine scale detail. Hence, researchers judiciously incorporated TV regularization into low-rank tensor completion models for the purpose of exploiting the inherent structure of images. The promising numerical performance can be found in recent works, e.g., \cite{VMG12,JHZML16,QLCGZ20,LNZ20,ZNZJH19,JLLS18,QBNZ21}. From the mathematical modeling perspective, the underlying images {usually are sparse under the TV transform}. Consequently, we see that an $\ell_1$-norm is used to promote the sparsity of an image in the TV transform domain, which demonstrates that combining the low-rank and TV-based sparse prior information simultaneously can greatly improve the restoration quality. {However, as stated in \cite{WXYXT18}, the traditional TV regularizer is still not perfect, since it only guarantees an estimation presenting a locally smooth visualization, while ignoring the global perception and structure.  Such a drawback motives us to find new sparse surrogates to recover more accurate data. Actually,} in the image processing literature, the well-known {\it discrete cosine transform} (DCT) has been widely used for the most popular image compression standard JPEG, since DCT is powerful to obtain a (at least approximately) sparse representation of an image. 
For example, we consider a widely used gray video, i.e., \textbf{sidewalk} (see Fig. \ref{facade_sparse}), which is a third order tensor of size $220\times 352\times 30$. By performing DCT on each dimension of such a gray video, a large number of elements of approach to zeros, thereby leading to a high level sparsity when setting a truncated number (TN), e.g., $0.05$ and $0.1$, to drop all values less than such a TN. However, the video recovered directly from the truncated tensor is still close to the original one, which means that a small sample information is possible to recover an ideal (at least identifiable) one. Such an experimental observation also encourages us to consider a sparsity regularization on the coefficient tensor under the DCT procedure. Here, we refer the reader to \cite{AI07,DHQYWCZ19,HLHS17,LSS13,WXYXT18,MG18,ZW91} for recent applications of DCT on image recovery {and some completion methods considering the property of the tensor data in transformed domains, e.g., see \cite{JHZJD18,JNZH20,JZZN21,ZXJWN20}}. Compared to the TV-based tensor completion model, to the best of our knowledge, DCT-based approaches received much less considerable attention on color images and videos inpainting from tensor completion angle. Hence, we are motivated to make a further study on showing the ability of the DCT technique for recovering images and videos from highly undersampled data.
\begin{figure}[!htbp]
	\centering
	\includegraphics[width=0.15\textwidth]{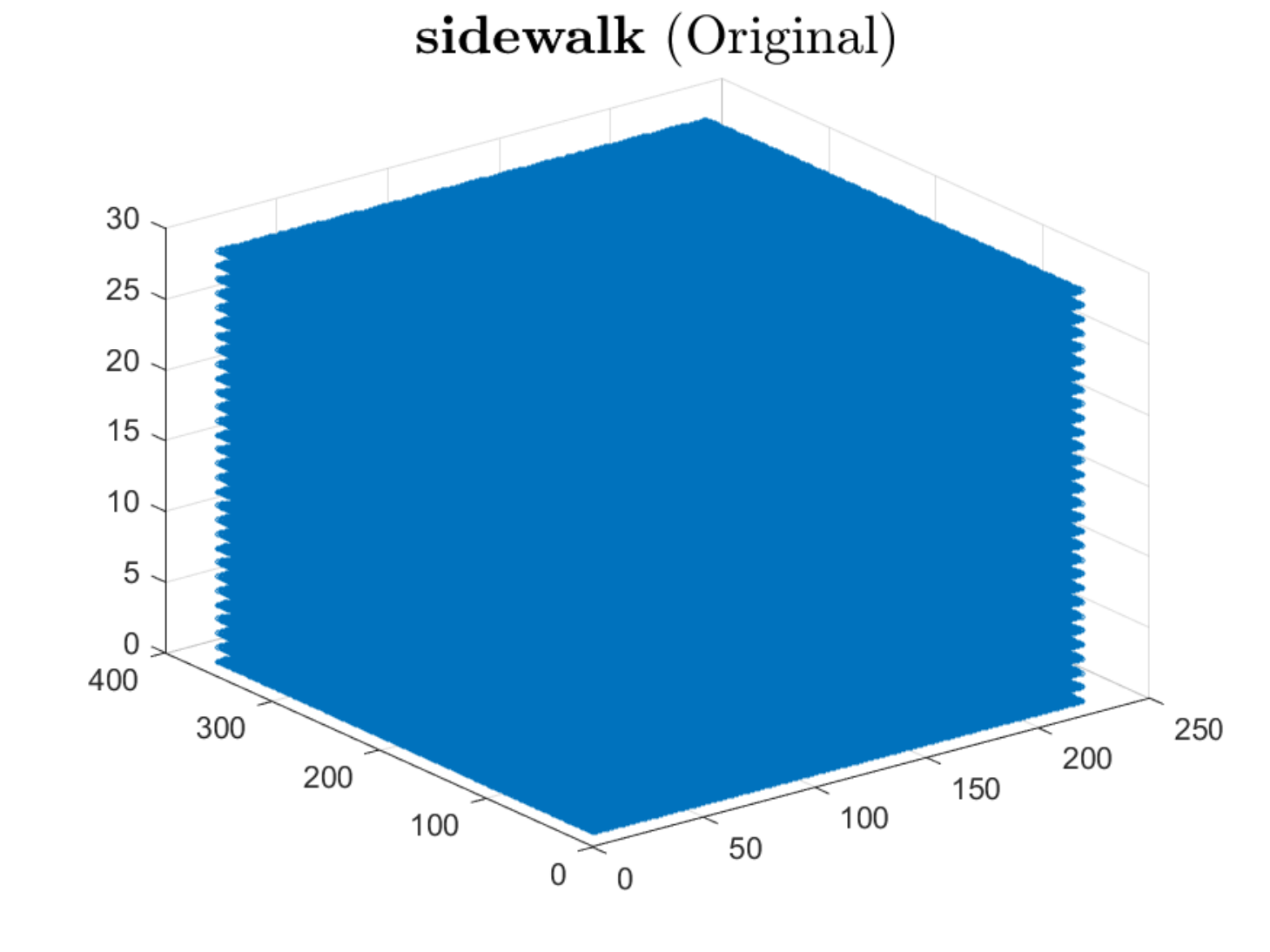}
	\includegraphics[width=0.15\textwidth]{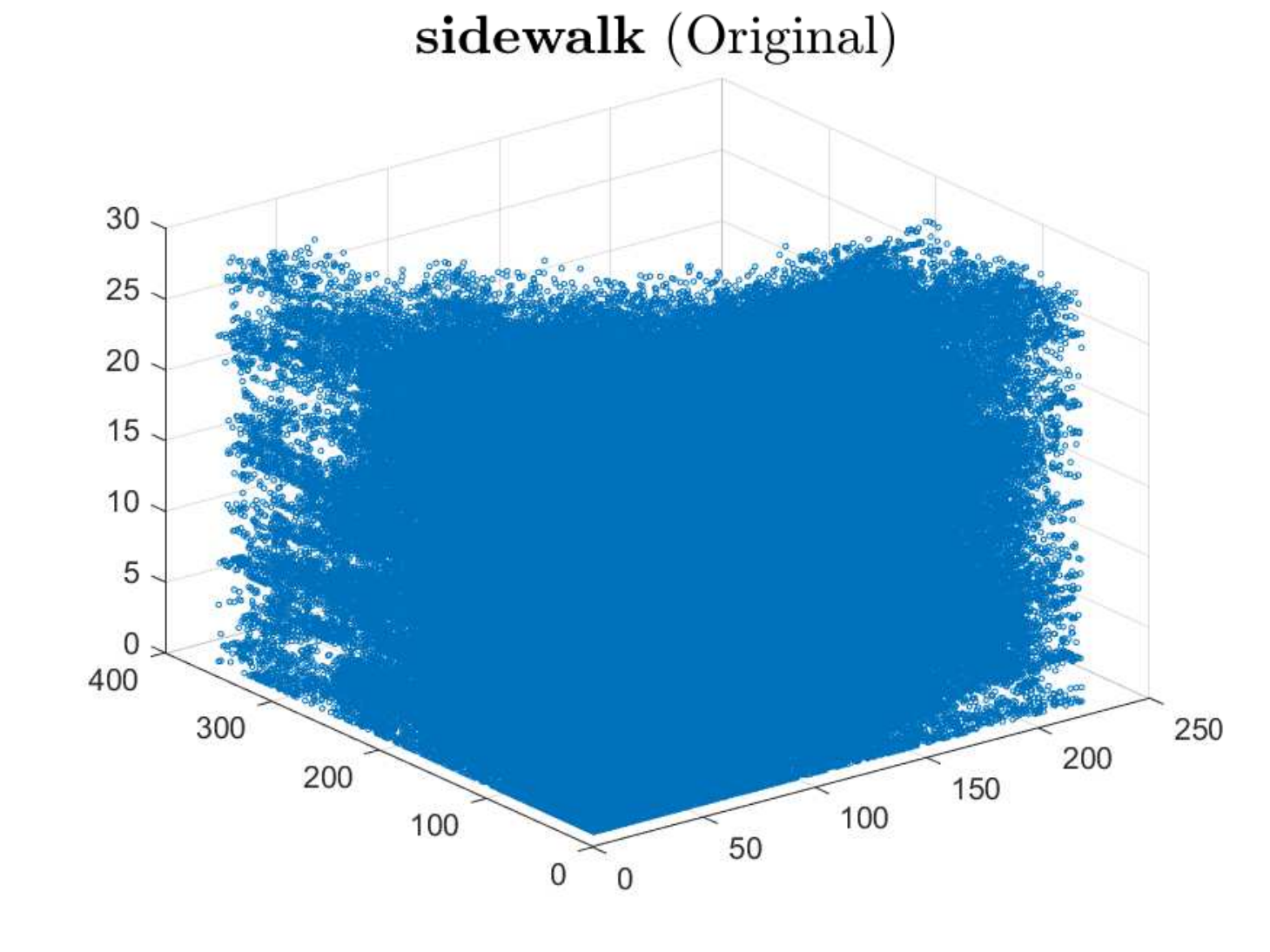}
	\includegraphics[width=0.15\textwidth]{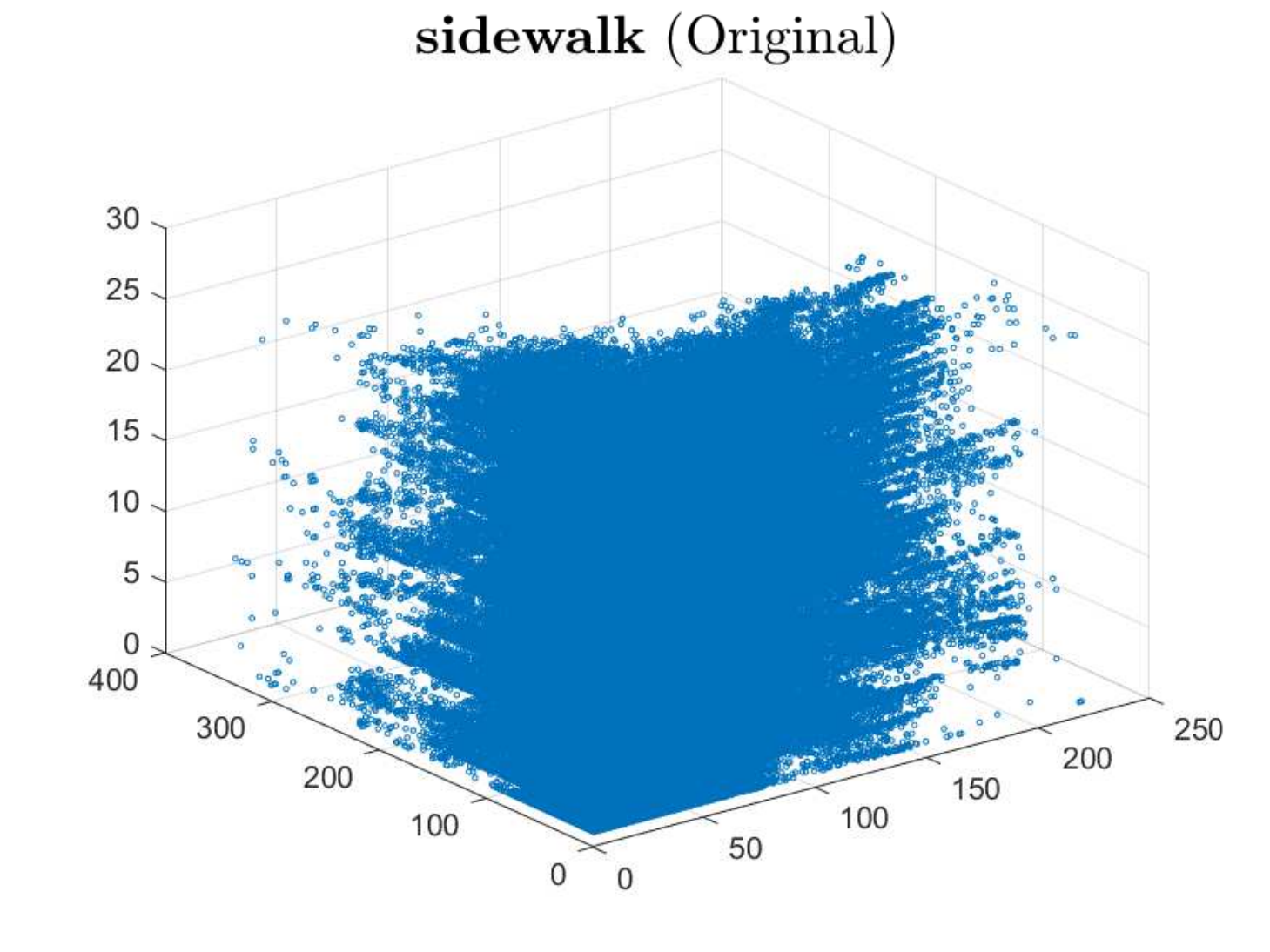}\\
		\text{~~~~~~~~Original~~~~~~~~~~~~~~TN=0.05~~~~~~~~~~~~~TN=0.1~~~~~~~~}
	\includegraphics[width=0.15\textwidth]{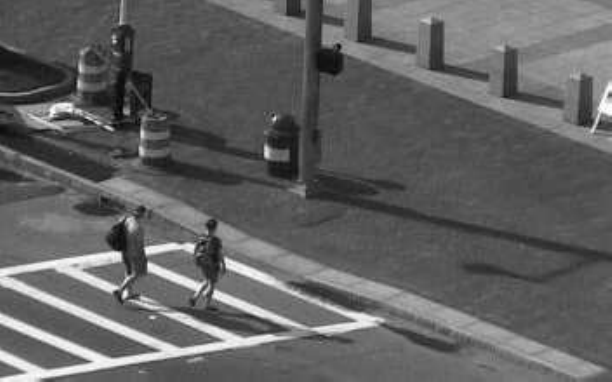}
	\includegraphics[width=0.15\textwidth]{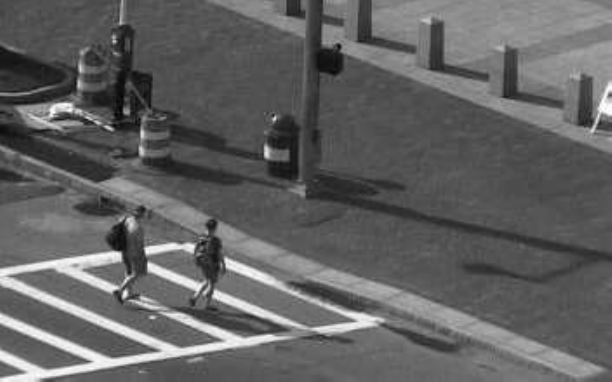}
	\includegraphics[width=0.15\textwidth]{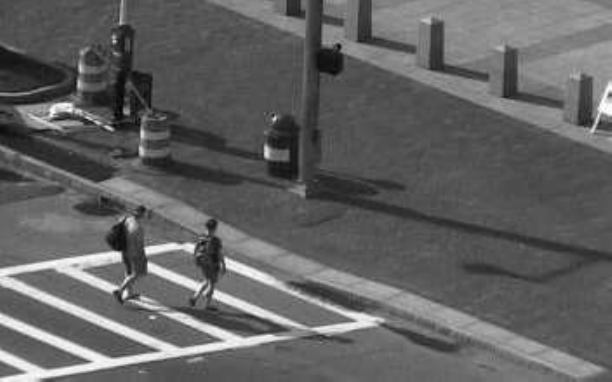}
	\text{~~~~~~~~Original~~~~~~~~~~~PSNR 35.24~~~~~~~~~PSNR 31.28~~~~~~~~}
		\caption{The first row show the visualization of the sparsity of the gray video \textbf{sidewalk} under DCT, where the white represents $0$. The second row illustrates the comparisons of the 10-th frame of the original video and the frames reconstructed from the truncated tensor.}
		\label{facade_sparse}
\end{figure}

In this paper, we introduce {two ``sparse + low-rank''} tensor completion approaches for recovering color images and videos. Our contribution is four-fold.
\begin{itemize}
\item Most nuclear norm based tensor completion models equally treat all singular values of the unfolding matrices. For some real-world data sets, small singular values are often sensitive to the incomplete (or corrupted by noise) observed information, thereby possibly leading to a suboptimal low-rank tensor. In this situation, we will propose {a minimization model, whose objective function is the sum of a DCT-based sparse regularization term and a weighted nuclear norm induced low-rank regularization term so that the singular values have different weights to enhance the global low-rank structure}.
\item In many unfolding approaches, the most popular low-rank surrogate of tensors is the nuclear norm of unfolding matrices. Recently, both theoretical and empirical results show that the so-called $p$-shrinkage thresholding algorithm outperforms the classical iterative soft thresholding algorithm induced by nuclear norm for low-rank and sparse recovery problems, e.g., see \cite{Ch14,Char09,SLH19}. Accordingly, the second contribution of this paper {is that we introduce a nonconvex optimization model, which minimizes the sum of a DCT-based sparse term and a $p$-shrinking mapping induced low-rank term. }
\item We develop two implementable augmented Lagrangian-based algorithms for the proposed tensor completion models, where all subproblems of our algorithms have closed-form solutions.
\item  We conduct a series of experiments on color images and surveillance videos. Our computational results demonstrate that our proposed {``sparse + low-rank'' tensor completion approaches outperform many state-of-the-art model-driven tensor completion methods, especially for highly undersampled cases.}
\end{itemize}
 
 The structure of this paper is as follows. In Section \ref{NarPrel}, we summarize some notations and recall some basic definitions including (weighted) nuclear norm of matrices, $p$-shrinkage mapping and proximal operators. In Section \ref{ModAlg}, we will first introduce a unified  ``sparse + low-rank''  tensor completion approach. Then, we split this section into two parts and propose two kinds of ``sparse + low-rank'' tensor completion models, respectively. Based on the augmented Lagrangian function, we also propose two implementable algorithms for the underlying tensor completion models. Numerically,  in Section \ref{ExpTests}, we conduct the performance of our approaches on images and videos recovery from highly undersampled data. Finally, some concluding remarks are provided in Section \ref{ConRemark}.
 
\section{Notations and Preliminaries}\label{NarPrel}
In this section, we summarize some notations and definitions on $p$-shrinkage mapping and weighted nuclear norm that will be used throughout this paper.

Tensor is a multidimensional array, which is an extension of matrix. The space of all $N$-th order real tensors is denoted by $\mathbb{R}^{I_1\times I_2\times\cdots\times I_N}$, where the order of a tensor is also called way or mode. Given an $N$-th order $\mathcal{A}\in \mathbb{R}^{I_1\times I_2\times\cdots\times I_N}$, we denote the $(i_1, i_2, \ldots, i_N )$-th component of $\mathcal{A}$ by $a_{i_1i_2\ldots i_N}$. So the $N$-th order tensor $\mathcal{A}$ is also denoted by $\mathcal{A}=(a_{i_1i_2\ldots i_N})$. Throughout this paper, tensors of order $N\geq 3$ are denoted by calligraphical letters, e.g., $\mathcal{A}, \mathcal{B},\ldots$. Generally, we use capital letters (e.g., $A, B,\ldots$), boldfaced lowercase letters (e.g., ${\bm a}, {\bm b},\ldots$), and lowercase letters (e.g., $a,b,\ldots$) to denote matrices, vectors, and scalars, respectively. For any two $N$-th order real tensors $\mathcal{A}=(a_{i_1i_2\ldots i_N})$ and $\mathcal{B}=(b_{i_1i_2\ldots i_N})$, the inner product between $\mathcal{A}$ and $\mathcal{B}$ is given by 
$$\langle\mathcal{A},\mathcal{B}\rangle:=\sum_{i_1,i_2,\ldots,i_N}a_{i_1i_2\ldots i_N} b_{i_1i_2\ldots i_N}.$$ 
Consequently, the Frobenius norm of tensor $\mathcal{A}$ associated with the above inner product is given by $\|\mathcal{A}\|_F=\sqrt{\langle\mathcal{A},\mathcal{A}\rangle}$. Given an $N$-th order tensor $\mathcal{A}$, the mode-$n$ matricization (or unfolding) of  $\mathcal{A}$ is denoted by $A_{(n)}$, and the $(i_1,i_2,\ldots,i_N)$-th entry of tensor $\mathcal{A}$ is mapped to the $(i_n,j)$-th entry of matrix $A_{(n)}$  in the lexicographical order, where
$$
j=1+\sum_{1\leq l\leq N,l\neq n} (i_l-1)J_l~~~~{\rm with}~~~\displaystyle J_l=\prod_{1\leq t\leq l-1,t\neq n}I_t.
$$

Given a matrix $A\in \mathbb{R}^{m\times n}$, we write its {\it singular value decomposition} (SVD) as $A=U\Lambda V^\top$, where $U$ and $V$ are orthogonal matrices and $\Lambda$ is a diagonal matrix, i.e., 
$$\Lambda={\rm diag}(\sigma(A))={\rm diag}\left(\sigma_1(A),\sigma_2(A),\cdots,\sigma_r(A)\right)$$
with $r:=\min\{m,n\}$, and the diagonals $\sigma_i(A)$'s are singular values of $A$ satisfying $\sigma_1(A)\geq \sigma_2(A)\geq \cdots \geq \sigma_r(A)$ without loss of generality. With the above preparation, we first recall the definition of (weighted) nuclear norm of a matrix (see \cite{DHQYWCZ19}).
\begin{definition}\label{weighted-nuclear-norm}
	For any given matrix $A\in\mathbb{R}^{m\times n}$, its nuclear norm is defined by 
	$$\|A\|_*=\sum_{i=1}^{r} \sigma_i(A)$$
	and the weighted nuclear norm is given by
	$$\|A\|_{*,{\bm w}}=\sum_{i=1}^{r}{w}_i\sigma_i(A)$$
	where $r:=\min\{m,n\}$ and $\sigma_i(A)$ is the $i$-th largest singular value of $A$, and ${\bm w}=(w_1,\cdots,w_r)$ is a weight vector, which, as suggested in \cite{HOM16},  is given by 
	$${w}_i = \delta / (\sigma_i(A)+\epsilon),\quad i=1,2,\cdots,r,$$
	where $\delta\in\mathbb{R}_+$ is a constant and $\epsilon$ is a small positive number to avoid division by $0$. In algorithmic implementation, we can iteratively update ${w}_i$'s.
\end{definition}

Below, we recall the $p$-shrinkage mapping introduced in \cite{Char09}.
\begin{definition}\label{def_pshrink}
Given ${\bm a}\in \mathbb{R}^n$, $\mu > 0$ and $p\leq 1$, the $p$-shrinkage thresholding operator is defined in component-wise by
\begin{equation}\label{pshrink}
{\rm \bf pshrink}({\bm a}, \mu, p) = \text{sign}({\bm a})\odot \max\left\{|{\bm a}|-\mu|{\bm a}|^{p-1},\;0\right\},
\end{equation}
where `${\rm sign}(\cdot)$' and `$|\cdot |$' are the sign function and absolute value function in component-wise, respectively, and `$\odot$' represents the component-wise product between two vectors. In particular, when setting $p=1$, the $p$-shrinkage operator \eqref{pshrink} immediately reduces to the well-known soft-thresholding, which is denoted by
\begin{equation}\label{soft-shrink}
{\rm shrink}({\bm a}, \mu) = \text{sign}({\bm a})\odot \max\left\{|{\bm a}|-\mu,\;0\right\}.
\end{equation}
\end{definition}
Clearly, the smaller $p$ is, the less ${\rm shrink}({\bm a}, \mu, p) $ shrinks large inputs. We refer the reader to \cite{SLH19} for an illustration to the philosophy of the $p$-shrinkage operator. 

For any given ${\bm a}\in\mathbb{R}^n$, the proximal operator associated with $\theta:\mathbb{R}^n\to\mathbb{R}$ about $\tau>0$ is defiend as 
	$$\text{prox}_{\theta,\tau}({\bm a}):= \arg\min_{{\bm x}\in\mathbb{R}^n}\left\{\theta({\bm x})+\frac{\tau}{2}\|{\bm x}-{\bm a}\|^2\right\}.$$
Consequently, as shown in \cite[Theorem 4]{GL11} (see also \cite{LTYL16}), for any $\mu>0$, $A\in {\mathbb R}^{m\times n}$ and a weight vector ${\bm w}$, the global optimal solution to the following optimization problem
\begin{equation*}
\min_{X} \|X\|_{*,{\bm w}} + \frac{1}{2\mu}\|X-A\|^2_F
\end{equation*}
is given by the weighted SVD thresholding 
$$X^*=U{\rm \bf wshrink}(\Lambda,\mu,{\bm w})V^\top,$$
where $A=U\Lambda V^\top$ is the SVD of $A$ and 
\begin{equation}\label{wsoft-shrink}
{\rm \bf wshrink}(\Lambda, \mu,{\bm w}) = \max\left\{\sigma(A)-\mu {\bm w},\;0\right\}.
\end{equation}
Moreover, it follows from \cite{Ch14} that the $p$-shrinkage mapping defined in Definition \ref{def_pshrink} can also be interpreted as the proximal operator of a penalty function $\Phi_p^\mu(\cdot):\mathbb{R}^n\to\mathbb{R}$, i.e.,
	$${\rm \bf pshrink}({\bm a}, \mu, p)= \arg\min_{{\bm x}\in\mathbb{R}^n} \left\{\Phi_p^\mu({\bm x})+\frac{1}{2\mu}\|{\bm x}-{\bm a}\|^2\right\},$$
	where $\Phi_p^\mu({\bm x}):=\sum_{i=1}^{n}\phi_p^\mu(x_i)$ with $\phi_p^\mu(x_i)$ being even, concave, nondecreasing and continuous on $[0,\infty]$, differentiable on $(0,\infty)$, nondifferentiable at 0 with the subdifferential being $\partial \phi_p^\mu(0) = [-1,1]$. We refer the reader to \cite[Theorem 1]{Ch14} (also \cite{WC16,SLH19}) for more details. Notice that the $p$-shrinkage operator is available to matrix variables. Specifically, for the optimization problem
\begin{equation}\label{Phi-prox}
\min_{X} \Phi_p^\mu(X) + \frac{1}{2\mu}\|X-A\|^2_F,
\end{equation}
we have its globally optimal solution given by
\begin{equation}\label{solve-p}
	X^*=U{\rm \bf pshrink}(\Lambda,\mu,p)V^\top,
\end{equation}
where $A=U\Lambda V^\top$ is the SVD of $A$ and 
\begin{equation*}
{\rm \bf pshrink}(\Lambda, \mu,p) ={\rm \bf pshrink}(\sigma(A), \mu,p).
\end{equation*}

\section{Models and Algorithm}\label{ModAlg}
In this section, we aim to develop {DCT-based sparse plus nonconvex functions induced low-rank} tensor completion approaches to images and videos recovery. Mathematically, we first introduce a unified  ``{sparse + low-rank}'' tensor completion model for $N$-th order tensors as follows:
\begin{equation}\label{question-2}
\begin{array}{cl}
\min\limits_{\mathcal{X}} &{\rm rank}(\mathcal{X})+\lambda\|\mathcal{{\mathscr D}(X)}\|_0\\
\text{s.t.}& {\mathscr P}_{\Omega}(\mathcal{X}) = {\mathscr P}_{\Omega}(\H),
\end{array}
\end{equation}
where ${\mathscr D}(\cdot)$ denotes the multi-dimensional DCT operator satisfying orthogonality, $\|\cdot\|_0$ represents the so-called $\ell_0$-norm to characterize the sparsity of DCT's coefficients, and $\lambda>0$ is a tuning parameter. Due to the appearances of rank function and $\ell_0$ norm, the optimization model \eqref{question-2} is a highly non-convex and NP-hard problem. Usually, we can find an approximate solution of \eqref{question-2} by employing the (non-) convex relaxations of rank function and $\ell_0$-norm, respectively. Therefore, in this section, we first introduce a DCT-based {sparse plus weighted nuclear norm (WNN) induced} low-rank tensor completion model. Then, we propose a DCT-based {sparse plus $p$-shrinking mapping} induced low-rank tensor completion model. Additionally, we will propose two implementable augmented Lagrangian-based splitting methods to solve the underlying models.

\subsection{{DCT-based sparse + WNN induced low-rank minimization model}}
As shown in the literature, directly minimizing the sum of the rank function and $\ell_0$-norm of tensor $\mathcal{X}$ in \eqref{question-2} is not an easy task. So, how to approximate the rank of tensors is crucial for improving the completion quality. In this subsection, we use the sum of the weighted nuclear norms of the unfolding matrices to approximate the rank of $\mathcal{X}$ and employ the $\ell_1$-norm to replace the $\ell_0$-norm for promoting the sparsity of images under the DCT. Accordingly, we call the model {DCT-based sparse + WNN induced low-rank} minimization model, which is expressed as the form
\begin{equation}\label{WTNN-question-1}
\begin{array}{cl}
\displaystyle\min_{\mathcal{X}}&\displaystyle\sum_{i=1}^{N}\alpha_i\|X_{(i)}\|_{*,{\bm w}}+\lambda\|\mathcal{{\mathscr D}(X)}\|_1\\
\text{s.t.}& \P_{\Omega}(\mathcal{X}) = \P_{\Omega}(\H).
\end{array}
\end{equation}
Apparently, two parts of the objective function in model \eqref{WTNN-question-1} are nonsmooth, but both have promising structures so that 
their proximal operators enjoy explicit forms. To exploit the structure of model \eqref{WTNN-question-1}, we here first introduce auxiliary variables to separate the two nonsmooth terms in the objective function, thereby leading to a separable optimization model, i.e.,
\begin{equation}\label{ADMM-WTNN}
\begin{array}{cl}
\displaystyle\min_{\mathcal{X}} &\sum_{i=1}^{N}\alpha_i\|Y_{i,(i)}\|_{*,{\bm w}}+\lambda\|\T\|_1\\
\text{s.t.}&\mathcal{X} = \mathcal{Y}_{i},\quad  \text{for all}~i \in [N],\\
&\T = {\mathscr D}(\X),\\
&\P_{\Omega}(\X) = \P_{\Omega}(\H).
\end{array}
\end{equation}
where $Y_{i,(i)}$ represents the mode-$i$ unfolding of tensor $\mathcal{Y}_i$ for $i\in[N]$.  Clearly, model \eqref{ADMM-WTNN} is an equality-constrained optimization problem, then its augmented Lagrangian function reads as
\begin{align}\label{WTNN-Augment_lag}
&{\bm L}(\Y_i,\X,\T,\S_i,\Q) \nonumber \\
&:=\sum_{i=1}^{N}\left( \alpha_i\|Y_{i,(i)}\|_{*,{\bm w}}+\left\langle \S_i,\X-\Y_i \right\rangle +\frac{\beta}{2}\|\X-\Y_i\|_F^2 \right) \nonumber\\
&\quad +\lambda\|\T\|_1+\left\langle \Q,\T-{\mathscr D}(\X) \right\rangle+\frac{\beta}{2}\|\T-{\mathscr D}(\X)\|_F^2,
\end{align}
where $\mathcal{S}_i$ and $\mathcal{Q}$ are Lagrangian multipliers associated to $X_{(i)}=Y_{i,(i)}$ and $\T={\mathscr D}(\X)$, respectively,  and $\beta$ are positive penalty parameters. {By \eqref{WTNN-Augment_lag}, we follow the alternating spirit of ADMM to design an implementable algorithm. Theoretically, all primal variables can be updated in any order due to their equal roles. However, their updating order is slightly sensitive to the numerical performance. Notice that the weight of the low-rank term is usually greater than the sparse part, we empirically first update $\mathcal{Y}_i$ and then calculate $\mathcal{X}$ due to newly introduced equality constraint. Thirdly, we update $\mathcal{T}$ for the reason that we can immediately utilize the latest information of $\mathcal{X}$. Finally, we update the Lagrangian multipliers $\mathcal{S}_i$ and $\mathcal{Q}$ simultaneously. Therefore, }we accordingly update the variables in \eqref{WTNN-Augment_lag} via the order $\mathcal{Y}_i\to\mathcal{X}\to\mathcal{T}\to\mathcal{S}_i\to\mathcal{Q}$. Specifically, for given $(\mathcal{X}^k,\mathcal{T}^k,\mathcal{S}_i^k,\mathcal{Q}^k)$, we update the $(k+1)$-th iterates via the following process:
\begin{itemize}
\item The update of $\mathcal{Y}_i$ ($i\in[N]$) reads as
\begin{align}\label{WTNN-solve-Y}
\Y^{k+1}_i & = \arg\min_{\mathcal{Y}_i} {\bm L}\left({\Y_i,\X^k,\T^k,\S^k_i,\Q^k}\right) \nonumber \\
&=\arg\min_{\mathcal{Y}_i} \left\{\alpha_i\|Y_{i,(i)}\|_{*,{\bm w}} +\frac{\beta_k}{2}\left\|\Y_{i}-\widehat{\Y}_{i}^k\right\|_F^2\right\}\nonumber \\
&={\rm{fold}}\left(U_k{\rm \bf wshrink}\left(\Lambda^k,\; \frac{\alpha_i}{\beta_k},\;{\bm w}_i\right)V_k^\top  \right),
\end{align}
where `fold($\cdot$)' corresponds to the inverse operator of mode-$i$ unfolding, `${\rm \bf wshrink}(\cdot,\cdot,\cdot)$' is given by \eqref{wsoft-shrink} and
$$\widehat{\Y}_{i}^k=\X^k+\frac{1}{\beta_k}{\S}_{i}^k\quad\text{and}\quad\widehat{Y}_{i,(i)}^k=U_k\Lambda^k V_k^\top.$$
\item {With the latest $\Y^{k+1}_i $, we obtain $\mathcal{X}^{k+1}$ via solving 
\begin{equation*}
\min_{\mathcal{X}} \left\{{\bm L}\left({\Y^{k+1}_i,\X,\T^k,\S^k_i,\Q^k}\right) \;|\; \P_{\Omega}(\mathcal{X}) = \P_{\Omega}(\mathcal{H})\right\},
\end{equation*}	
which can be expressed explicitly by
\begin{equation}\label{WNN-X-Update}
\left\{
\begin{array}{l}
\mathcal{X}^{k+1}_{\Omega} = \mathcal{H}_{\Omega},\\
\X^{k+1}_{\Omega^c} =\frac{1}{N+1} \widehat{\mathcal{X}}^{k}_{\Omega^c}, 
\end{array}\right.
\end{equation}
where $\Omega^c$ denotes the complementary set of $\Omega$ and
$$\widehat{\mathcal{X}}^{k}= \sum_{i=1}^{N}\left(\mathcal{Y}_i^{k+1}-\frac{1}{\beta_k}\mathcal{S}_i^k\right)+{\mathscr D}^{-1}\left(\mathcal{T}^k+\frac{1}{\beta_k}\mathcal{Q}^k\right).$$}
\item  For the $\mathcal{T}$-subproblem, we have 
\begin{align}\label{WTNN-solve-T}
\T^{k+1}  &= \arg\min_{\mathcal{T}} {\bm L}\left({\Y^{k+1}_i,\X^{k+1},\T,\S^k_i,\Q^k}\right) \nonumber \\
&= \arg\min_{\mathcal{T}} \left\{ \lambda\|\mathcal{T}\|_1+\frac{\beta_k}{2}\left\|\T-\widehat{{\mathcal T}}^{k} \right\|_F^2\right\} \nonumber \\
&= \text{shrink}\left(\widehat{{\mathcal T}}^{k},\frac{\lambda}{\beta_k}\right),
\end{align}
where `shrink($\cdot,\cdot$)' is given by \eqref{soft-shrink} and
$$\widehat{{\mathcal T}}^{k}={\mathscr D}(\X^{k+1})-\frac{1}{\beta_k}\Q^k.$$
\item With the above latest $(\mathcal{Y}_i^{k+1},\mathcal{X}^{k+1},\mathcal{T}^{k+1})$, we update the two Lagrangian multipliers $\S^{k+1}_i $ and $\Q^{k+1}$ via
\begin{equation}\label{wnn-ups}
\S^{k+1}_i  = \S^{k}_i+\beta_k\left(\X^{k+1} - \Y^{k+1}_i\right) 
\end{equation}
and
\begin{equation}\label{wnn-upq}
\Q^{k+1}  =  \Q^{k} +\beta_k\left(\T^{k+1}-{\mathscr D}(\X^{k+1})\right),
\end{equation}
respectively.
\end{itemize}

With the above preparations, we formally summarize the updating schemes for model \eqref{ADMM-WTNN} in Algorithm \ref{ADMM-solve2}.
\begin{algorithm}[!htbp]
	\caption{ADMM for Model (\ref{ADMM-WTNN})}\label{ADMM-solve2}
	\hspace*{0.02in} {\bf Input:} 
	Initial starting points $\X^0,\Y^0,\T^0,\S_0,\Q_0$ and $\beta_0>0$.\\
	\hspace*{0.2in}{1: Update $\Y_i^{k+1}$ simultaneously via (\ref{WTNN-solve-Y}) for $i=[N]$};\\
	\hspace*{0.2in}{2: Update $\X^{k+1}$ via \eqref{WNN-X-Update}};\\
	\hspace*{0.2in}{3: Update $\T^{k+1}$ via (\ref{WTNN-solve-T})};\\
	\hspace*{0.2in}{4: Update $\S_i^{k+1}$ simultaneously via \eqref{wnn-ups} for $i=[N]$};\\
	\hspace*{0.2in}{5: Update $\Q^{k+1}$ via \eqref{wnn-upq}};\\
	\hspace*{0.2in}{6: Update $\beta_{k+1}= \varrho \beta_{k}$ with $\varrho>1$;}\\
	\hspace*{0.2in}{7: Until a termination criterion is fulfilled}.\\
	\hspace*{0.02in} {\bf Output:} 
	$\X^*$.
\end{algorithm}

\subsection{{DCT-based sparse + $p$-shrinking mapping induced low-rank optimization model}}
In model \eqref{WTNN-question-1}, we employ the sum of the weighted nuclear norms of the unfolding matrices $X_{(i)}$ to approximate the low-rankness of the data. However, we do not know whether such an approximation is enough ideal for color images and videos tensor data. Recently, the so-called $p$-shrinking algorithm is widely used in low-rank and sparse recovery problems. Hence, in this subsection, we further consider the combination of $p$-shrinking low-rank penalty and DCT-driven sparse regularization, and propose the following tensor completion model:
\begin{equation}\label{question-1}
\begin{array}{cl}\min\limits_{\mathcal{X}} &\sum_{i=1}^{N} \alpha_i \Phi_p^u(X_{(i)})+\lambda\|\mathcal{{\mathscr D}(X)}\|_1\\
\text{s.t.}& \P_{\Omega}(\mathcal{X}) = \P_{\Omega}(\H),
\end{array}
\end{equation}
where $p\leq 1$ and $\Phi_p^u(\cdot)$ is a nonconvex penalty function defined in \eqref{Phi-prox}.  Compared to model \eqref{WTNN-question-1}, the only difference is that we here employ the $\Phi_p^u(X_{(i)})$ to replace the weighted nuclear norm term $\|X_{(i)}\|_{*,\bm{w}}$. So, both \eqref{WTNN-question-1} and \eqref{question-1} share the same structure except the low-rank-inducing term. {For simplicity, we use the symbols in \eqref{ADMM-WTNN} again}, then model \eqref{question-1} can be rewritten as the following separable optimization problem, i.e.,
\begin{equation}\label{question1}
\begin{array}{cl}
 \min\limits_{\mathcal{X}} & \sum_{i=1}^{N} \alpha_i \Phi_p^u(Y_{i,(i)})+\lambda\|\mathcal{T}\|_1\\
\text{s.t.}& \mathcal{X} = \mathcal{Y}_i,\quad \text{for all }~i \in [N],\\
& \mathcal{T} = {\mathscr D}(\mathcal{X}),\\
&\P_{\Omega}(\mathcal{X}) = \P_{\Omega}(\H).
\end{array}
\end{equation}
Correspondingly, the augmented Lagrangian function for \eqref{question1} reads as
\begin{align}\label{2Augment_lag}
&{\bm L}(\Y_i,\X,\T,\S_i,\Q) \nonumber \\
&:=\sum_{i=1}^{N}\left( \alpha_i\Phi_p^\mu(Y_{i,(i)})+\left\langle \S_i,\X-\Y_i \right\rangle +\frac{\beta}{2}\|\X-\Y_i\|_F^2 \right) \nonumber\\
&\quad +\lambda\|\T\|_1+\left\langle \Q,\T-{\mathscr D}(\X) \right\rangle+\frac{\beta}{2}\|\T-{\mathscr D}(\X)\|_F^2.
\end{align}
Based upon \eqref{2Augment_lag}, by invoking the update idea of Algorithm \ref{ADMM-solve2}, i.e., $\mathcal{Y}_i\to\mathcal{X}\to\mathcal{T}\to\mathcal{S}_i\to\mathcal{Q}$, we derive the detailed iterative schemes for solving model \eqref{question1}.
\begin{itemize}
\item Update $\mathcal{Y}_i^{k+1}$  via 
\begin{align}\label{M-kk}
\Y^{k+1}_i & = \arg\min_{\mathcal{Y}_i} {\bm L}\left({\Y_i,\X^k,\T^k,\S^k_i,\Q^k}\right) \nonumber \\
&=\arg\min_{\mathcal{Y}_i} \left\{\alpha_i\Phi_p^\mu(Y_{i,(i)}) +\frac{\beta_k}{2}\left\|\Y_{i}-\widehat{\Y}_{i}^k\right\|_F^2\right\}\nonumber \\
&={\rm{fold}}\left(\tilde{U}_k{\rm \bf pshrink}\left(\tilde{\Lambda}^k,\; \frac{\alpha_i}{\beta_k},\;p\right)\tilde{V}_k^\top  \right),
\end{align}
where `${\rm \bf pshrink}(\cdot,\cdot,\cdot)$' is given by \eqref{pshrink} and
$$\widehat{\Y}_{i}^k=\X^k+\frac{1}{\beta_k}\S_{i}^k\quad\text{and}\quad\widehat{Y}_{i,(i)}^k=\tilde{U}_k\tilde{\Lambda}^k\tilde{V}_k^\top.$$
\item {We obtain $\mathcal{X}^{k+1}$ via solving 
	\begin{equation*}
	\min_{\mathcal{X}} \left\{{\bm L}\left({\Y^{k+1}_i,\X,\T^k,\S^k_i,\Q^k}\right) \;|\; \P_{\Omega}(\mathcal{X}) = \P_{\Omega}(\mathcal{H})\right\},
	\end{equation*}	
	which can also be expressed explicitly by
	\begin{equation}\label{IpST-X-Update}
	\left\{
	\begin{array}{l}
	\mathcal{X}^{k+1}_{\Omega} = \mathcal{H}_{\Omega},\\
	\X^{k+1}_{\Omega^c} =\frac{1}{N+1} \widehat{\mathcal{X}}^{k}_{\Omega^c}
	\end{array}\right.
	\end{equation}
}
where 
$$\widehat{\mathcal{X}}^{k}= \sum_{i=1}^{N}\left(\mathcal{Y}_i^{k+1}-\frac{1}{\beta_k}\mathcal{S}_i^k\right)+{\mathscr D}^{-1}\left(\mathcal{T}^k+\frac{1}{\beta_k}\mathcal{Q}^k\right).$$
\item  Update $\mathcal{T}^{k+1}$ via 
\begin{align}\label{pshrink-T}
\T^{k+1}  &= \arg\min_{\mathcal{T}} {\bm L}\left({\Y^{k+1}_i,\X^{k+1},\T,\S^k_i,\Q^k}\right) \nonumber \\
&= \text{shrink}\left({\mathscr D}(\X^{k+1})-\frac{1}{\beta_k}\Q^k,\frac{\lambda}{\beta_k}\right).
\end{align}
\item Finally, update the two Lagrangian multipliers $\mathcal{S}_i^{k+1}$ and $\mathcal{Q}^{k+1}$ via \eqref{wnn-ups} and \eqref{wnn-upq}, respectively.
\end{itemize}

Formally, we summarize the iterative schemes for \eqref{question1} in Algorithm \ref{ADMM-solve}.
\begin{algorithm}[!htbp]
	\caption{ADMM for Model \eqref{question1}.}\label{ADMM-solve}
	\hspace*{0.02in} {\bf Input:} 
	Initial starting points $\X^0,\Y^0,\T^0,\S_0,\Q_0$.\\
	\hspace*{0.2in}{1: Update $\Y_i^{k+1}$ simultaneously via \eqref{M-kk} for $i=[N]$};\\
	\hspace*{0.2in}{2: Update $\X^{k+1}$ via \eqref{IpST-X-Update}};\\
	\hspace*{0.2in}{3: Update $\T^{k+1}$ via \eqref{pshrink-T}};\\
	\hspace*{0.2in}{4: Update $\S_i^{k+1}$ simultaneously via \eqref{wnn-ups} for $i=[N]$};\\
	\hspace*{0.2in}{5: Update $\Q^{k+1}$  via \eqref{wnn-upq}};\\
	\hspace*{0.2in}{6: Update $\beta_{k+1}= \varrho \beta_{k}, \varrho>1$;}\\
	\hspace*{0.2in}{7: Until a termination criterion is fulfilled}.\\
	\hspace*{0.02in} {\bf Output:} 
	$\X^*$.
\end{algorithm}

\subsection{Convergence results}
In this subsection, we follow the way used in \cite{OTBKK16} to prove the convergence result of Algorithm \ref{ADMM-solve2} for problem (\ref{ADMM-WTNN}) under some assumptions. Since Algorithm \ref{ADMM-solve} share the almost same iterative schemes expect the $\Y$-subproblem, we skip the convergence analysis of Algorithm \ref{ADMM-solve} for model (\ref{question1}) for the conciseness of the paper.

%

\begin{lemma}\label{LUpperb}
	Suppose that both $\{S^k\}:=\{\S_1^k,\dots,\S_N^k\}$ and $\{\Q^k\}$ are bounded, the	sequences $\{\Y^k\}:=\{\Y_1^k,\cdots,\Y^k_{N}\}$ and $\{\T^k\}$ produced by Algorithm \ref{ADMM-solve2} are bounded.
\end{lemma}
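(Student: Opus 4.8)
The strategy is to bound the three primal sequences jointly: first I reduce boundedness of $\{\Y^k\}$ and $\{\T^k\}$ to that of $\{\X^k\}$ using the closed forms of the $\Y$- and $\T$-steps, and then I establish boundedness of $\{\X^k\}$ by exhibiting the update \eqref{WNN-X-Update} as a near-identity map on the block $\Omega^c$ whose defect is summable. For the $\Y$-step \eqref{WTNN-solve-Y}, at the level of the mode-$i$ unfolding $Y_{i,(i)}^{k+1}$ and $\widehat Y_{i,(i)}^k$ carry the same singular subspaces $U_k,V_k$, and the $j$-th diagonal entry of $\Lambda^k-{\rm \bf wshrink}(\Lambda^k,\alpha_i/\beta_k,{\bm w}_i)$ equals $\min\{\sigma_j(\widehat Y_{i,(i)}^k),\,(\alpha_i/\beta_k)({\bm w}_i)_j\}$; since the weights in Definition \ref{weighted-nuclear-norm} satisfy $({\bm w}_i)_j=\delta/(\sigma_j+\epsilon)\le\delta/\epsilon$ uniformly, this yields $\|\Y_i^{k+1}-\widehat\Y_i^k\|_F\le c_i/\beta_k$ with $c_i:=\alpha_i\delta\sqrt r/\epsilon$. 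Recalling $\widehat\Y_i^k=\X^k+\beta_k^{-1}\S_i^k$ and $\beta_k\ge\beta_0$, it follows that $\|\Y_i^{k+1}-\X^k\|_F\le\beta_0^{-1}(\sup_\ell\|\S_i^\ell\|_F+c_i)$, so $\{\Y_i^k\}$ is bounded as soon as $\{\X^k\}$ is. Analogously, in \eqref{WTNN-solve-T} soft-thresholding by $\lambda/\beta_{k}$ displaces each entry by at most $\lambda/\beta_{k}$, hence $\T^{k}={\mathscr D}(\X^{k})-\beta_{k-1}^{-1}\Q^{k-1}+W^{k}$ with $\|W^{k}\|_F\le\lambda\sqrt d/\beta_{k-1}$, $d:=\prod_{j=1}^{N}I_j$, and orthogonality of ${\mathscr D}$ gives $\|\T^{k}\|_F\le\|\X^{k}\|_F+\beta_{k-1}^{-1}\|\Q^{k-1}\|_F+\lambda\sqrt d/\beta_{k-1}$, again bounded once $\{\X^k\}$ is.

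To close the loop I substitute $\Y_i^{k+1}=\X^k+\beta_k^{-1}\S_i^k+R_i^k$ (with $\|R_i^k\|_F\le c_i/\beta_k$) and ${\mathscr D}^{-1}(\T^k)=\X^k-\beta_{k-1}^{-1}{\mathscr D}^{-1}(\Q^{k-1})+{\mathscr D}^{-1}(W^k)$ into the expression for $\widehat{\mathcal X}^k$ in \eqref{WNN-X-Update}. The $\beta_k^{-1}\S_i^k$ terms cancel against the $-\beta_k^{-1}\S_i^k$ already present, leaving $\widehat{\mathcal X}^k=(N+1)\X^k+\mathcal E^k$ with $\mathcal E^k:=\sum_{i=1}^{N}R_i^k-\beta_{k-1}^{-1}{\mathscr D}^{-1}(\Q^{k-1})+{\mathscr D}^{-1}(W^k)+\beta_k^{-1}{\mathscr D}^{-1}(\Q^k)$. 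By the hypothesis that $\{\Q^k\}$ is bounded, $\|\mathcal E^k\|_F\le C/\beta_{k-1}$ for some constant $C$ independent of $k$. Then \eqref{WNN-X-Update} reads $\X^{k+1}_\Omega=\H_\Omega$ and $\X^{k+1}_{\Omega^c}=\X^k_{\Omega^c}+(N+1)^{-1}\mathcal E^k_{\Omega^c}$, and since Step 6 of Algorithm \ref{ADMM-solve2} enforces $\beta_k=\varrho^k\beta_0$ with $\varrho>1$ we get $\sum_k\|\mathcal E^k\|_F<\infty$. Hence $\{\X^k_{\Omega^c}\}$ has summable increments, so it is Cauchy and in particular bounded; together with $\X^k_\Omega\equiv\H_\Omega$ this shows $\{\X^k\}$ is bounded, and then the two displacement estimates above give the boundedness of $\{\Y^k\}$ and $\{\T^k\}$.

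The main obstacle I anticipate is the circular coupling of the three sequences: $\{\Y^k\}$, $\{\T^k\}$, and $\{\X^k\}$ all enter one another's updates in the ADMM cycle, so none of them is bounded in isolation, and a direct descent estimate for the nonconvex augmented Lagrangian \eqref{WTNN-Augment_lag} is awkward to run under a diverging penalty. The device that breaks the circle is the observation that, restricted to $\Omega^c$, the $\X$-update \eqref{WNN-X-Update} is a perturbation of the identity whose perturbation is $O(1/\beta_k)$ and hence summable under the geometric rule $\beta_{k+1}=\varrho\beta_k$; this upgrades ``boundedness'' to ``Cauchy'', which is precisely what the cancellation of the multiplier terms produces. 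A secondary point to watch is that both displacement bounds rely on the thresholds $\alpha_i({\bm w}_i)_j/\beta_k$ and $\lambda/\beta_k$ being uniformly bounded in $k$ and $j$; for the weighted nuclear norm this is exactly why the safeguard $\epsilon>0$ is built into Definition \ref{weighted-nuclear-norm}.
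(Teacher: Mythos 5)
Your proposal is correct, but it takes a genuinely different route from the paper. The paper argues through the augmented Lagrangian: it shows that ${\bm L}(\Y^k,\X^k,\T^k,\S^k,\Q^k)$ differs from its value at the previous multipliers by a term proportional to $\frac{\beta_k+\beta_{k-1}}{2\beta_{k-1}^2}\bigl(\sum_i\|\S_i^k-\S_i^{k-1}\|_F^2+\|\Q^k-\Q^{k-1}\|_F^2\bigr)$, which is summable under $\beta_k=\varrho\beta_{k-1}$ and the boundedness of the multipliers; combined with the descent property of the alternating minimization this bounds $\{{\bm L}\}$ from above, and rearranging then bounds the nonnegative terms $\sum_i\alpha_i\|Y^k_{i,(i)}\|_{*,\bm w}+\lambda\|\T^k\|_1$, from which boundedness of $\{\Y^k\}$ and $\{\T^k\}$ is deduced. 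You instead exploit the closed-form proximal maps: each primal update is an $O(1/\beta_k)$ perturbation of an explicit affine map, the multiplier terms cancel exactly in the $\X$-update so that $\X^{k+1}_{\Omega^c}$ is an identity iteration with summable perturbations, and the displacement bounds then transfer boundedness to $\Y$ and $\T$. Your route is more elementary, yields a strictly stronger conclusion (the Cauchy property of $\{\X^k\}$, which the paper only establishes later in Theorem 3.2), and notably sidesteps a delicate point in the paper's argument: with the adaptive weights $w_j=\delta/(\sigma_j+\epsilon)$ one has $\|Y\|_{*,\bm w}=\sum_j\delta\sigma_j/(\sigma_j+\epsilon)\le r\delta$ for \emph{every} matrix, so an upper bound on $\|Y^k_{i,(i)}\|_{*,\bm w}$ does not by itself force $\{Y^k_{i,(i)}\}$ to be bounded — your uniform bound $w_j\le\delta/\epsilon$ on the shrinkage thresholds handles exactly this. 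The trade-off is that the paper's argument is regularizer-agnostic (it needs only nonnegativity and, implicitly, level-boundedness), while yours leans on the specific closed forms of the weighted SVD thresholding and soft-thresholding steps; a minor point to flag is that your decomposition of $\T^k$ via the previous $\T$-update is only valid for $k\ge 1$, which does not affect the boundedness conclusion.
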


\begin{proof}
	From the definition of  ${\bm L}(\Y,\X,\T,\S,\Q)$, we have 
	\begin{align}\label{lineg}
	&{\bm L}(\Y^k,\X^k,\T^k,\S^k,\Q^k) - {\bm L}(\Y^k,\X^k,\T^k,\S^{k-1},\Q^{k-1})\nonumber \\
	&=\sum_{i=1}^{N}\left\langle \S_i^k-\S_i^{k-1},\X^k-\Y_i^k \right\rangle + \langle \Q^k-\Q^{k-1},\T^k-\mathscr{D}(\X^k) \rangle\nonumber\\
	&~~~+\frac{\beta_k-\beta_{k-1}}{2}\left( \sum_{i=1}^{N}\|\X^k-\Y_i^k\|_F^2+\|\T^k-\mathscr{D}(\X^k)\|_F^2 \right)\nonumber\\
	&=\frac{1}{\beta_{k-1}}\left( \sum_{i=1}^{N}\|\S_i^k-\S_i^{k-1}\|_F^2+\|\Q^k-\Q^{k-1}\|_F^2\right)\nonumber\\
	&~~~+\frac{\beta_k-\beta_{k-1}}{2\beta_{k-1}^2}\left( \sum_{i=1}^{N}\|\S_i^k-\S_i^{k-1}\|_F+\|\Q^k-\Q^{k-1}\|_F^2 \right)\nonumber\\
	&=\frac{\beta_k+\beta_{k-1}}{2\beta_{k-1}^2}\left( \sum_{i=1}^{N}\|\S_i^k-\S_i^{k-1}\|_F^2 +\|\Q^k-\Q^{k-1}\|_F^2 \right),
	\end{align}
	where the second equality follows from (\ref{wnn-ups}) and (\ref{wnn-upq}). Then, it follows from the iterative schemes of Algorithm \ref{ADMM-solve2} and (\ref{lineg}) that
	\begin{align}\label{LUpper}
	&{\bm L}(\Y^{k+1},\X^{k+1},\T^{k+1},\S^k,\Q^k)\nonumber\\
	&\leq{\bm L}(\Y^k,\X^k,\T^k,\S^k,\Q^k)\nonumber\\
	&={\bm L}(\Y^k,\X^k,\T^k,\S^{k-1},\Q^{k-1})\nonumber\\
	&~~~+\frac{\beta_k+\beta_{k-1}}{2\beta_{k-1}^2}\left(  \sum_{i=1}^{N}\|\S_i^k-\S_i^{k-1}\|_F^2+\|\Q^k-\Q^{k-1}\|_F^2 \right).
	\end{align}
By invoking $\beta_k= \varrho \beta_{k-1}$ with $\varrho>1$, we immediately prove $\sum_{i=1}^{\infty}\frac{\beta_k+\beta_{k-1}}{2\beta_{k-1}^2}<\infty$. As a consequence of (\ref{LUpper}), the sequence $\{ {\bm L}(\Y^k,\X^k,\T^k,\S^k,\Q^k) \}$ is upper-bounded due to the boundedness of $\{\S^k\}$ and $\{Q^k\}$. Rearranging terms of $\{ {\bm L}(\Y^k,\X^k,\T^k,\S^k,\Q^k) \}$ immediately yields
	\begin{align*}
	&\sum_{i=1}^{N}\left(\alpha_i\|Y^k_{i,(i)}\|_{*,\bm w}\right)+\lambda\|\T^k\|_1\nonumber\\
	&={\bm L}(\Y^k,\X^k,\T^k,\S^{k-1},\Q^{k-1})-\frac{\beta_{k-1}^{-1}}{2}\left( \|\Q^k\|_F^2-\|\Q^{k-1}\|_F^2 \right)\nonumber\\
	&~~~-\frac{\beta_{k-1}^{-1}}{2}\sum_{i=1}^{N}\left( \|\S_i^k\|_F^2-\|\S_i^{k-1}\|_F^2 \right),
	\end{align*}
	which, together with the boundedness of the sequences $\{{\bm L}(\Y^k,\X^k,\T^k,\S^{k-1},\Q^{k-1})\}$, $\{\S^k\}$ and $\{\Q^k\}$, implies that  $\{Y^k_{i,(i)}\}$ and $\{\T^k\}$ for all $i\in [N]$ are bounded, since $\|\cdot\|_{*,\bm w}$ and $\|\cdot\|_1$ are nonnegative. 
\end{proof}

\begin{theorem}\label{kkt-theorem}
	Let $\left\{ \left( \Y^k,\X^k,\T^k,\S^k,\Q^k \right)   \right\}$ be a sequence generated by Algorithm \ref{ADMM-solve2}.
	Suppose that both $\{\S^k\}$ and $\{\Q^k\}$ are bounded, and further satisfy $\lim\limits_{k\rightarrow\infty} \|\S^{k+1}-\S^k\|_F=0$ and  $\lim\limits_{k\rightarrow\infty} \|\Q^{k+1}-\Q^k\|_F = 0$, respectively. Then,
	\begin{itemize}
		\item[(i).] $\{\X^k\},\{\Y^k\},\{\T^k\}$ are bounded Cauchy sequences.
		\item[(ii).] any accumulation point $\left\{ \left( \Y^\infty,\X^\infty,\T^\infty,\S^\infty,\Q^\infty \right) \right\}$ satisfies the KKT condition of model \eqref{ADMM-WTNN}.
	\end{itemize}
\end{theorem}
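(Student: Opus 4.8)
The plan is to follow the two-part structure of the statement: first show that the three primal sequences are Cauchy (hence convergent), by converting the block minimisation of the augmented Lagrangian ${\bm L}$ in \eqref{WTNN-Augment_lag} into a sufficient-decrease estimate and then promoting ``successive differences vanish'' to genuine summability via the geometric growth of $\beta_k$; second, pass to the limit in the first-order optimality conditions of the $\Y_i$-, $\X$- and $\T$-subproblems to recover the KKT system of \eqref{ADMM-WTNN}. Throughout, the standing hypotheses $\|\S^{k+1}-\S^k\|_F\to0$ and $\|\Q^{k+1}-\Q^k\|_F\to0$ are what pin down feasibility at the limit and kill the error terms carried along by the multiplier recursions, while boundedness of $\{\S^k\},\{\Q^k\}$ together with Lemma \ref{LUpperb} already secures boundedness of everything else.

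For part (i): the $\X$-update minimises, over the affine set $\{\P_\Omega(\X)=\P_\Omega(\H)\}$, a quadratic that is $(N+1)\beta_k$-strongly convex (the term $\tfrac{\beta_k}{2}\|\T^k-\mathscr{D}(\X)\|_F^2$ contributes a full extra $\beta_k$ because $\mathscr{D}$ is orthogonal), and the $\T$-update minimises $\lambda\|\cdot\|_1+\tfrac{\beta_k}{2}\|\cdot-\widehat{\mathcal{T}}^{k}\|_F^2$, so the $\X$-step decreases ${\bm L}$ by at least $\tfrac{(N+1)\beta_k}{2}\|\X^{k+1}-\X^k\|_F^2$ and the $\T$-step by at least $\tfrac{\beta_k}{2}\|\T^{k+1}-\T^k\|_F^2$, while the $\Y_i$-update, being a \emph{global} minimiser of its proximal problem, at least does not increase ${\bm L}$ even though $\|\cdot\|_{*,{\bm w}}$ need not be convex. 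Chaining this with the multiplier-correction identity \eqref{lineg}--\eqref{LUpper}, telescoping, and using $\sum_k\tfrac{\beta_k+\beta_{k-1}}{\beta_{k-1}^2}<\infty$ together with the uniform lower bound on ${\bm L}(\Y^k,\X^k,\T^k,\S^{k-1},\Q^{k-1})$ established inside Lemma \ref{LUpperb}, I obtain
\[
\sum_{k}\beta_k\bigl(\|\X^{k+1}-\X^k\|_F^2+\|\T^{k+1}-\T^k\|_F^2\bigr)<\infty .
\]
Since $\beta_k=\varrho^k\beta_0$ with $\varrho>1$ makes $\sum_k\beta_k^{-1}<\infty$, Cauchy--Schwarz turns this into $\sum_k\|\X^{k+1}-\X^k\|_F<\infty$ and $\sum_k\|\T^{k+1}-\T^k\|_F<\infty$, so $\{\X^k\}$ and $\{\T^k\}$ are Cauchy; for $\Y$, \eqref{wnn-ups} gives $\Y_i^{k+1}-\Y_i^k=(\X^{k+1}-\X^k)-\beta_k^{-1}(\S_i^{k+1}-\S_i^k)+\beta_{k-1}^{-1}(\S_i^k-\S_i^{k-1})$, whose right-hand side is again summable ($\{\S^k\}$ bounded, $\sum_k\beta_k^{-1}<\infty$), so $\{\Y^k\}$ is Cauchy as well. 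Boundedness of all three is then automatic (or follows from Lemma \ref{LUpperb}).

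For part (ii): call $\X^\infty,\Y^\infty,\T^\infty$ the unique limits and, along a subsequence realising a given accumulation point, let $\S^k\to\S^\infty$, $\Q^k\to\Q^\infty$ (possible by boundedness). Feasibility is immediate: $\|\X^{k+1}-\Y_i^{k+1}\|_F=\beta_k^{-1}\|\S_i^{k+1}-\S_i^k\|_F\to0$ and $\|\T^{k+1}-\mathscr{D}(\X^{k+1})\|_F=\beta_k^{-1}\|\Q^{k+1}-\Q^k\|_F\to0$ give $\X^\infty=\Y_i^\infty$ and $\T^\infty=\mathscr{D}(\X^\infty)$, and $\X^k_\Omega=\H_\Omega$ for all $k$ gives $\P_\Omega(\X^\infty)=\P_\Omega(\H)$. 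For stationarity I rewrite each subproblem's optimality condition through \eqref{wnn-ups}--\eqref{wnn-upq}. The $\T$-condition is the clean one: $0\in\lambda\,\partial\|\cdot\|_1(\T^{k+1})+\beta_k\bigl(\T^{k+1}-\mathscr{D}(\X^{k+1})+\beta_k^{-1}\Q^k\bigr)$ collapses, via \eqref{wnn-upq}, to $0\in\lambda\,\partial\|\cdot\|_1(\T^{k+1})+\Q^{k+1}$, which passes to the limit by outer semicontinuity of $\partial\|\cdot\|_1$. The $\Y_i$-condition reads $\S_i^k+\beta_k(\X^k-\Y_i^{k+1})\in\alpha_i\,\partial\|\cdot\|_{*,{\bm w}}\bigl(Y^{k+1}_{i,(i)}\bigr)$ (folded), and the $\X$-condition says the restriction to $\Omega^c$ of $\sum_i\S_i^{k+1}-\mathscr{D}^{-1}(\Q^{k+1})+\mathscr{D}^{-1}\bigl(\beta_k(\T^{k+1}-\T^k)\bigr)$ vanishes; taking limits in these two — with the \emph{limiting} subdifferential of $\|\cdot\|_{*,{\bm w}}$, whose graph is closed — would yield $\S_i^\infty\in\alpha_i\,\partial\|\cdot\|_{*,{\bm w}}(Y^\infty_{i,(i)})$ and $[\sum_i\S_i^\infty-\mathscr{D}^{-1}(\Q^\infty)]_{\Omega^c}=0$, i.e.\ precisely the KKT system of \eqref{ADMM-WTNN}.

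The main obstacle is exactly the ``$\beta_k$-scaled'' residual $\beta_k(\X^{k+1}-\X^k)$ — equivalently $\beta_k(\Y_i^{k+1}-\Y_i^k)$ or $\beta_k(\T^{k+1}-\T^k)$ — hidden in the last two optimality conditions: to reach the exact inclusions above one must show it tends to $0$, whereas the descent estimate only delivers $\beta_k\|\X^{k+1}-\X^k\|_F^2\to0$, i.e.\ $\sqrt{\beta_k}\,\|\X^{k+1}-\X^k\|_F\to0$, which is a factor $\sqrt{\beta_k}\to\infty$ short of what is needed. Closing this gap is where the two standing hypotheses must be combined with the explicit $\X$-update \eqref{WNN-X-Update}: the latter lets one express $\beta_k(\X^{k+1}-\X^k)$ on $\Omega^c$ in terms of $\mathscr{D}^{-1}(\Q^k)$, $\sum_i\S_i^{k+1}$ and the vanishing increments $\S^k-\S^{k-1}$, $\Q^k-\Q^{k-1}$, so that along the chosen subsequence this residual converges and its limit is forced to be consistent with the limiting $\X$-optimality — this bookkeeping is the part adapted from \cite{OTBKK16}. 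A secondary technical point worth flagging in the write-up is that, with the data-dependent weights of Definition \ref{weighted-nuclear-norm}, $\|\cdot\|_{*,{\bm w}}$ is not convex, so one cannot invoke the convex subgradient inequality and must argue with the limiting subdifferential throughout; since $\Y_i^{k+1}$ is a genuine global proximal minimiser, the associated subgradient stays bounded and converges along the subsequence, which is enough.
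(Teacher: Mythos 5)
Your proposal reaches the same conclusions, but part (i) travels a genuinely different road from the paper. The paper does not use a sufficient-decrease/telescoping argument at all: it extracts the Cauchy property directly from the first-order optimality condition of the $\Y_i$-subproblem, which after substituting the multiplier update \eqref{wnn-ups} reads $X_{(i)}^k-X_{(i)}^{k+1}\in\beta_k^{-1}\bigl(\alpha_i\partial_C\|Y_{i,(i)}^{k+1}\|_{*,\bm w}-S_{i,(i)}^{k+1}\bigr)$; boundedness of the subgradients and of $\{\S^k\}$ then gives $\|\X^k-\X^{k+1}\|_F=O(\beta_k^{-1})$, and summability follows from $\sum_k\beta_k^{-1}<\infty$. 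Your route --- strong convexity of the $\X$- and $\T$-subproblems, telescoping against the summable multiplier perturbation of \eqref{lineg}--\eqref{LUpper}, then Cauchy--Schwarz --- is also valid (the augmented Lagrangian is bounded below along the iterates because the cross terms are controlled by $\tfrac{1}{2\beta_k}\|\S_i^k\|_F^2$ with bounded multipliers and $\beta_k\to\infty$), and it has the advantage of not requiring a uniform bound on the subgradients of $\|\cdot\|_{*,\bm w}$; the price is that it only yields $\sqrt{\beta_k}\,\|\X^{k+1}-\X^k\|_F\to0$, whereas the paper's argument gives the stronger per-iteration rate $O(\beta_k^{-1})$. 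Part (ii) of your proposal is structurally the same as the paper's (pass to the limit in the three optimality conditions rewritten through \eqref{wnn-ups}--\eqref{wnn-upq}). The obstacle you flag --- that the $\Y_i$- and $\X$-conditions carry residuals $\beta_k(\X^k-\X^{k+1})$ and $\beta_k(\T^{k+1}-\T^k)$ whose vanishing is needed to land exactly on the KKT inclusions --- is real, but it is equally present and unaddressed in the paper, which simply ``takes limits'' in \eqref{kkt-Y} and in the $\X$-condition even though its own estimate only makes $\beta_k(\X^k-\X^{k+1})$ bounded rather than null. Your write-up is therefore, if anything, more candid than the source on this point; to make either argument airtight one must still supply the missing step forcing these scaled residuals to zero, and your plan of extracting it from the explicit $\X$-update together with the vanishing multiplier increments is a reasonable, though not yet executed, way to do so.
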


\begin{proof}\label{kkt-proof}
	For any $i\in [N]$, it easily follows from the iterative scheme \eqref{wnn-ups} and $\lim\limits_{k\rightarrow\infty}\beta_k = \infty$ that the sequence $\{\X^k\}$ is bounded due to Lemma \ref{LUpperb}. Moreover, by the properties of $\{\S^k\}$ and $\{\Q^k\}$, we have
		\begin{equation}\label{cq}
			\left\{\begin{array}{l}
			\lim_{k\rightarrow\infty}\|\X^{k+1}-\Y_i^{k+1}\|_F=0, \\
		\lim_{k\rightarrow\infty}\|\T^{k+1}-\mathscr{D}(\X^{k+1})\|_F=0,
			\end{array}\right.
		\end{equation} 
	  which further implies that $\{\Y^k\}$ and $\{\T^k\}$ approach to feasible solutions.


	Now, we show that $\{\X^k\}$, $\{\Y^k\}$, $\{\T^k\}$ are Cauchy sequences. Here, we only show $\{\Y^k\}$ being a Cauchy sequence, while the other sequences can be proved in a similar way.
	
	Invoking the first-order optimality condition of the $\Y$-subproblm yields
	\begin{align*}
		0&\in\alpha_i\partial_C\|Y_{i,(i)}^{k+1}\|_{*,\bm w}-\beta_k\left(X_{(i)}^k-Y_{i,(i)}^{k+1}+\frac{1}{\beta_{k}}S_{i,(i)}^k \right),
	\end{align*}
	which, together with (\ref{wnn-ups}), leads to
	\begin{align}\label{fir-op-Y}
		X_{(i)}^k-X_{(i)}^{k+1}\in\frac{\alpha_i\partial_C\|Y_{i,(i)}^{k+1}\|_{*,\bm w}-S_{i,(i)}^{k+1}}{\beta_{k}}.
	\end{align}
	By the boundeness of both $\{Y_{i,(i)}^k\}$ and $\{\S_i^k\}$ and properties of $\|\cdot\|_{*,\bm w}$, it follows from \eqref{fir-op-Y} that $\|X_{(i)}^k-X_{(i)}^{k+1}\|_F=O(\beta_k^{-1})$, which together with $\sum_{i=0}^{\infty}\beta_k^{-1}=\frac{\varrho}{\beta_0(\varrho-1)}<\infty$, implies that $\{\X^k\}$ is a Cauchy sequence, and it immediately has a limit point.

Let $\X^{\infty}$, $\Y^{\infty}$,  $\T^{\infty}$,  $\S^{\infty}$,  $\Q^{\infty}$ be the limit points of $\{\X^k\}$, $\{\Y^k\}$, $\{\T^k\}$, $\{\S^k\}$, $\{\Q^k\}$, respectively. It first follows from \eqref{cq} that 
\begin{equation*}
\X^{\infty}=\Y_i^{\infty}, (i=1,\ldots,N)\;\;\text{and} \;\;\T^{\infty}=\mathscr{D}(\X^{\infty}).
\end{equation*}
Rearranging \eqref{fir-op-Y} arrives at 
	\begin{align}\label{kkt-Y}
		0&\in \alpha_i\partial_C\|Y_{i,(i)}^{k+1}\|_{*,\bm w}-S_{i,(i)}^{k+1}-\beta_k\left(X_{(i)}^k-X_{(i)}^{k+1}  \right)\nonumber \\
		&= \alpha_i\partial_C\|\Y_{i}^{k+1}\|_{*,\bm w}-\S_{i}^{k+1}-\beta_k\left(\X^k-\X^{k+1}  \right).
	\end{align}	
Consequently, taking limit on \eqref{kkt-Y} immediately yields $\S_{i}^\infty \in \alpha_i\partial_C\|\Y_{i}^{\infty}\|_{*,\bm w}$.
	

By the first-order optimality condition of the $\X$-subproblem, we have $\bar{X}$
\begin{equation*}
\left\{ \begin{array}{l}
\sum\limits_{i=1}^{N} \left( \S_i^k + \beta_k(\X^{k+1}-\Y_i^{k+1}) \right) \\
\qquad + \beta_k\left(\X^{k+1}-\mathscr{D}^{-1}(\T^k)\right)-\mathscr{D}^{-1}(\Q^k)=0,\\  
\P_{\Omega}(\mathcal{X}^{k+1}) = \P_{\Omega}(\H),
\end{array}\right.
\end{equation*}
Taking $k\to \infty$ on the above two equalities leads to 
\begin{equation*}
\sum\limits_{i=1}^{N} \S_i^\infty =\mathscr{D}^{-1}(\Q^\infty)\; \text{ and }\;
\P_{\Omega}(\mathcal{X}^{\infty}) = \P_{\Omega}(\H).
\end{equation*}
	
Finally, the first-order optimality condition of the $\T$-subproblem reads as
	\begin{align*}
		0&\in\partial \lambda\|\T^{k+1}\|_1+\beta_k\left( \T^{k+1}-\mathscr{D}(X^{k+1})+\beta_k^{-1}\Q^k  \right)\nonumber\\
		 & = \partial\lambda\|\T^{k+1}\|_1+\Q^k+\beta_k\left( \T^{k+1}-\mathscr{D}(\X^{k+1}) \right)\nonumber\\
		 &=\partial \lambda\|\T^{k+1}\|_1+\Q^{k+1}.
	\end{align*}
Consequently, we have $-\Q^{*}\in\partial\lambda\|\T^{*}\|_1$ when $k\rightarrow\infty$. Therefore, we conclude that the accumulation point satisfies the KKT condition of model \eqref{ADMM-WTNN}.
\end{proof}

Hereafter, we only give the time complexity analysis of Algorithm \ref{ADMM-solve2} and omit the analysis of Algorithm \ref{ADMM-solve} since both of them share the almost iterative scheme and the same SVD.

The main time cost of Algorithm \ref{ADMM-solve2} is consuming by performing SVD and multi-dimensional DCT. In each iteration, the complexity of subproblem $\Y^{k+1}_{i}~(i\in [N])$ is $O\left( \sum_{i=1}^{N}(I_i)^2\times\Pi_{j\neq i}I_j \right)$. The complexities of subproblem $\X^{k+1}$ and $\T^{k+1}$ are both $O\left( \Pi_{i=1}^NI_i\times\log\Pi_{i=1}^NI_i \right)$. In addition, the complexities to update $\S_i^{k+1}~(i\in [N])$ and $\Q^{k+1}$ are $O(N\Pi_{i=1}^NI_i)$ and $O(\Pi_{i=1}^NI_i)$. So, the time complexity of each iteration is $O\left( \sum_{i=1}^{N}(I_i)^2\times\Pi_{j\neq i}I_j+\Pi_{i=1}^NI_i\times\log\Pi_{i=1}^NI_i \right)$
Thus, the total time complexity of Algorithm \ref{ADMM-solve2} is $O\left( t\left( \sum_{i=1}^{N}(I_i)^2\times\Pi_{j\neq i}I_j+\Pi_{i=1}^NI_i\times\log\Pi_{i=1}^NI_i \right)\right)$, where $t$ is the number of iterations.

\section{Numerical Experiments}\label{ExpTests}
In this section, we are concerned with the numerical performance of our approach proposed in Section \ref{ModAlg} on images and videos recovery. Here, we will consider two kinds of images and videos data sets: 1) RGB images; 2) surveillance videos. All algorithms were implemented in MATLAB R2018b (64bit) and experiments were conducted on a laptop computer with Intel(R) Core(TM) i7-7500 CPU @2.70GHz and 8GB memory. Throughout this section, we denote Algorithms \ref{ADMM-solve2} and \ref{ADMM-solve} by `DCT-WNN' and `DCT-IpST' for simplicity, respectively. Moreover, we also compare the proposed algorithms with eight state-of-the-art tensor completion approaches as follows:

\begin{itemize}
\item IpST\cite{SLH19}: Iterative $p$-shrinkage thresholding algorithm for solving low Tucker rank tensor recovery problem, which only employs a nonconvex penalty function $\Phi_p^\mu(\cdot)$ given in \eqref{Phi-prox} to replace the traditional nuclear norm of unfolding matrices. 
\item TTNNL1\cite{HLHS17}: Using the truncated tensor nuclear norm for low-rank approximation, and a sparse regularization term combined with the 3-D DCT bases. 
\item F-TNN\cite{JNZH20}\footnote{https://github.com/TaiXiangJiang/Framelet-TNN}: Framelet representation of tensor nuclear norm for third-order tensor completion.
\item TNN-3DTV\cite{JLLS18}: Anisotropic total variation regularized low-rank tensor completion based on tensor nuclear norm, which utilizes the 3D total variation regularization to exploit the structure of images.
\item TCTF\cite{ZLLZ18}\footnote{https://panzhous.github.io/}: Tensor factorization for low-rank tensor completion, which uses the operation of T-product to decompose a tensor into two smaller tensors.
\item  WSTNN\cite{ZHZJM20}\footnote{https://yubangzheng.github.io/ybz/}: Tensor N-tubal rank and its convex relaxation for low-rank tensor recovery, which defines the weighted sum of the tensor nuclear norm.
\item SMF-LRTC\cite{ZHJZJM19}\footnote{https://github.com/zhaoxile/Low-rank-tensor-completion-via-smooth-matrix-factorization}: Low-rank tensor completion via smooth matrix factorization, which exploits the piecewise smoothness prior of tensors by introducing smoothness constraints on the factor matrices. Here, we only use it for video experiments, because such a model is designed for  video and hyperspectral image data sets.
\item DP3LRTC\cite{ZXJWN20}: Using tensor nuclear norm to characterize the global low-rankness prior and plugging a denoising neural network, which learned from a large number of natural images. It is a deep learning method.
\end{itemize}

For the fair comparison, we take
\begin{equation}
	\text{RelCha} = \frac{\|\X^{k+1}-\X^k\|_F}{\|\X_{\rm{true}}\|_F}\leq 10^{-4},
\end{equation}
as the stopping criterion for all methods, where $\X_{\rm{true}}$ is the true tensor.
Moreover, the following metrics are chosen to evaluate the recovery performance of the different algorithms.

 \begin{itemize}
\item  Peak Signal-to-Noise Ratio (PSNR \cite{ZLLZ18}) is defined as: $$\text{PSNR}=10\log_{10}\frac{(\#\mathcal{X}_{\rm true})\cdot(\X_{\rm true}^{\max} )^2}{\|\X^{*}-\X_{\rm true}\|_F^2}$$
to measure the quality of the restored tensor data by an algorithm, where $(\#\mathcal{X}_{\rm true})$ denotes the number of elements of $\mathcal{X}_{\rm true}$, $\X_{\rm true}^{\max}$ represents the largest element of $\X_{\rm true}$, and $\X^*$ corresponds to the restored tensor. 
\item  Structural Similarity (SSIM \cite{WBSS04})\footnote{{\sc Matlab} package: https://ece.uwaterloo.ca/$\sim$z70wang/research/ssim/.} is defined as:
\begin{equation*}
{\rm SSIM} = \displaystyle \frac{ (2\mu_X\mu_{X^*}+a_1)(2\sigma_{XX^*}+a_2)}{ (\mu_X^2+\mu^2_{X^*}+a_1)(\sigma_X^2+\sigma^2_{X^*}+a_2)},
\end{equation*}
where $X$ and $X^*$ denote the greyscale images for the original image and its recovered image, respectively; $a_1$ and $a_2$ are constants; $\mu_X$ and $\mu_{X^*}$ denote the average values, while $\sigma_X$ and $\sigma_{X^*}$ denote the standard deviation of $X$ and $X^*$, respectively; and $\sigma_{XX^*}$ denote the covariance matrix between $X$ and $X^*$.	
 \end{itemize}

Since there are some parameters in models and algorithms, for both models \eqref{ADMM-WTNN} and \eqref{question1}, we set $(\alpha_1,\alpha_2,\alpha_3) = (1/3,1/3,10^{-3})$ for image inpainting and $(\alpha_1,\alpha_2,\alpha_3)  = (1/3,1/3,1/3)$ for video inpainting, respectively. Moreover, we take $\lambda = 0.05$ and $\lambda = 10^{-2}$ for \eqref{ADMM-WTNN} and \eqref{question1}, respectively. For the algorithmic parameters, we set $p = 0.2$ for DCT-IpST throughout the experiments. In addition, we set $\varrho=1.2$ and $\beta^0=10^{-5}$, respectively. All parameters of the other compared algorithms were taken as the default values used in the paper.

\subsection{RGB image inpainting}\label{sec_RGB}
In this subsection, we consider the RGB image inpainting problem, where RGB images namely include Red, Green, and Blue channels and the number of channels corresponds to the modes of a third order tensor. Here, we conduct the numerical performance our approach on $16$ widely used images{\footnote{http://r0k.us/graphics/kodak/}}, which are summarized in Fig. \ref{RGB_image}. Here, the eight images in the first two rows of Fig. \ref{RGB_image} are $256\times 256\times 3$, and the remaining eight images are $512\times 768\times 3$ except image \textbf{woman} of size $768\times 512\times 3$. These images have no visual features such as low-rank and sparsity. However,  as shown in Fig. \ref{fig_rgbsparse}, the sparse structure is apparent when applying DCT to these images, which also sufficiently supports that the main idea of this paper is reasonable.

\begin{figure}[!htbp]
	\centering
	\text{~~~airplane~~~~~~~~baboon~~~~~~~~~facade~~~~~~~~~~house~~~~~}	
	\includegraphics[width=0.11\textwidth]{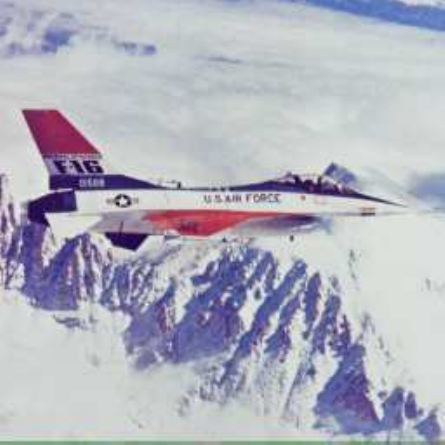}
	\includegraphics[width=0.11\textwidth]{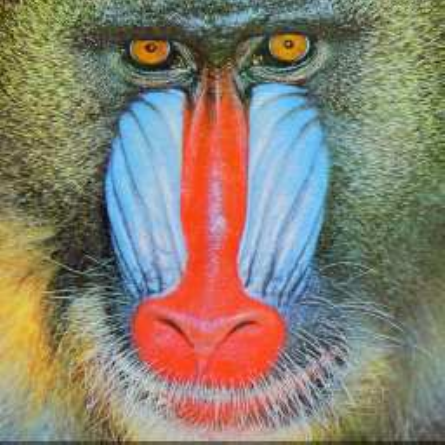}
	\includegraphics[width=0.11\textwidth]{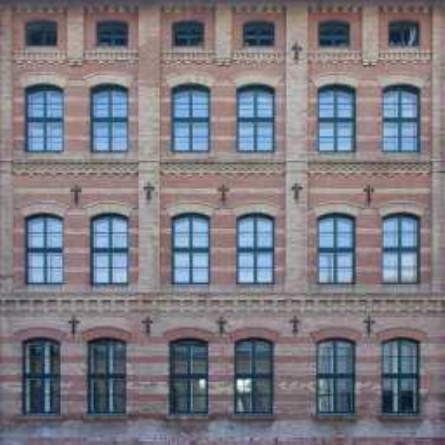}
	\includegraphics[width=0.11\textwidth]{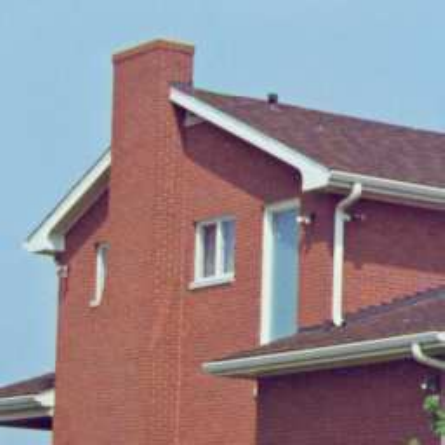}\\
	\text{~~~~peppers~~~~~~~~sailboat~~~~~~~~~~giant~~~~~~~~~~butterfly~~~}
	\includegraphics[width=0.11\textwidth]{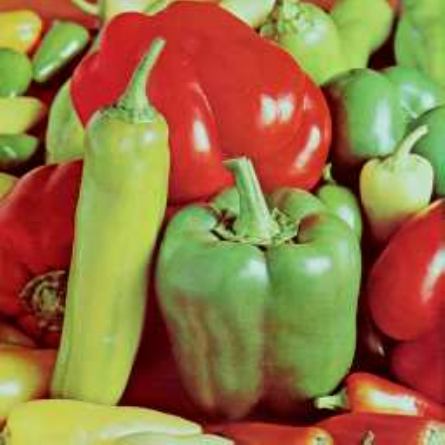}
	\includegraphics[width=0.11\textwidth]{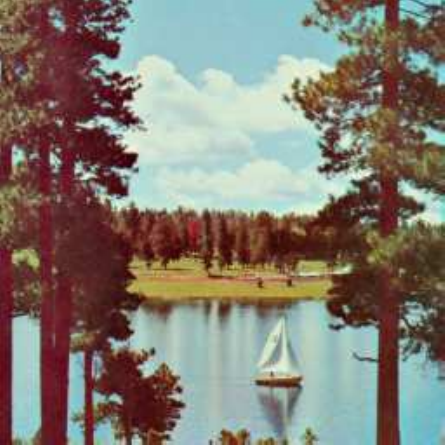}
	\includegraphics[width=0.11\textwidth]{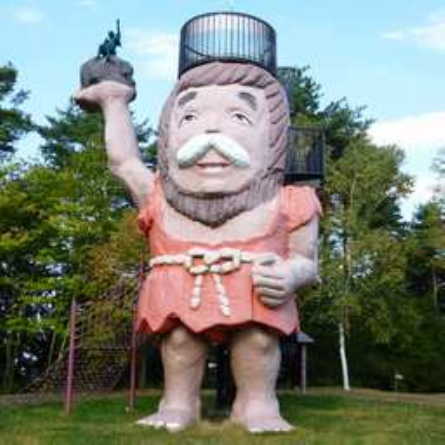}
	\includegraphics[width=0.11\textwidth]{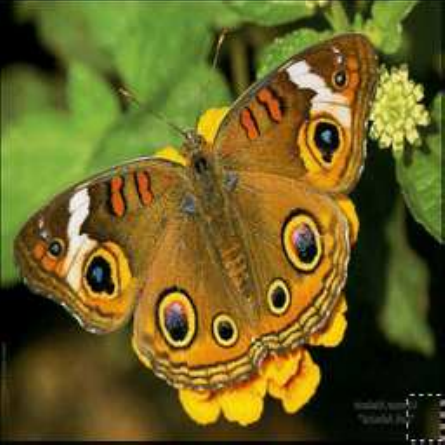}\\
	\text{~~~~~caps~~~~~~~~~~woman~~~~~~motorbikes~~~~~~~buildings~~}
	\includegraphics[width=0.12\textwidth]{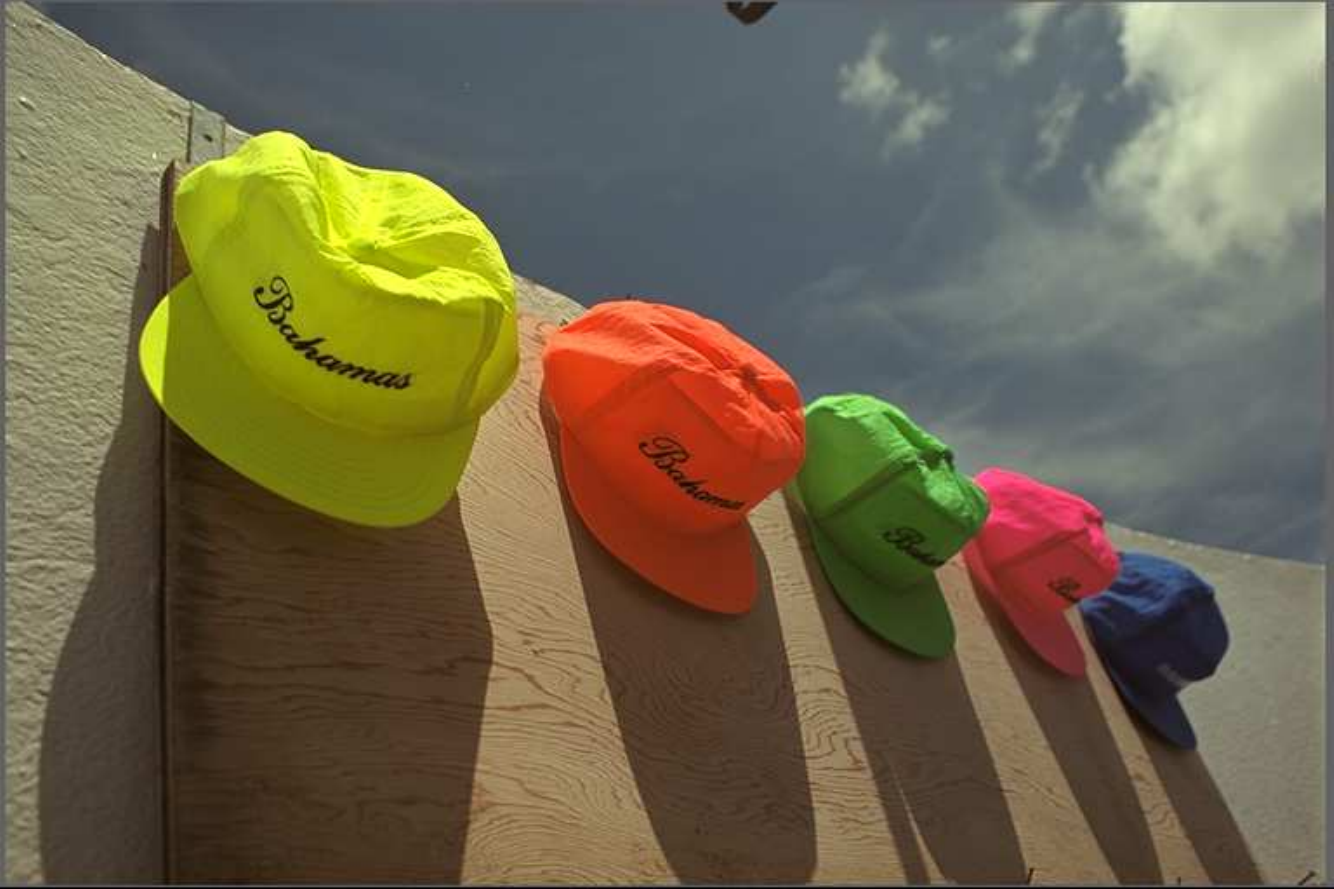}
	\includegraphics[width=0.09\textwidth,height=0.06\textheight]{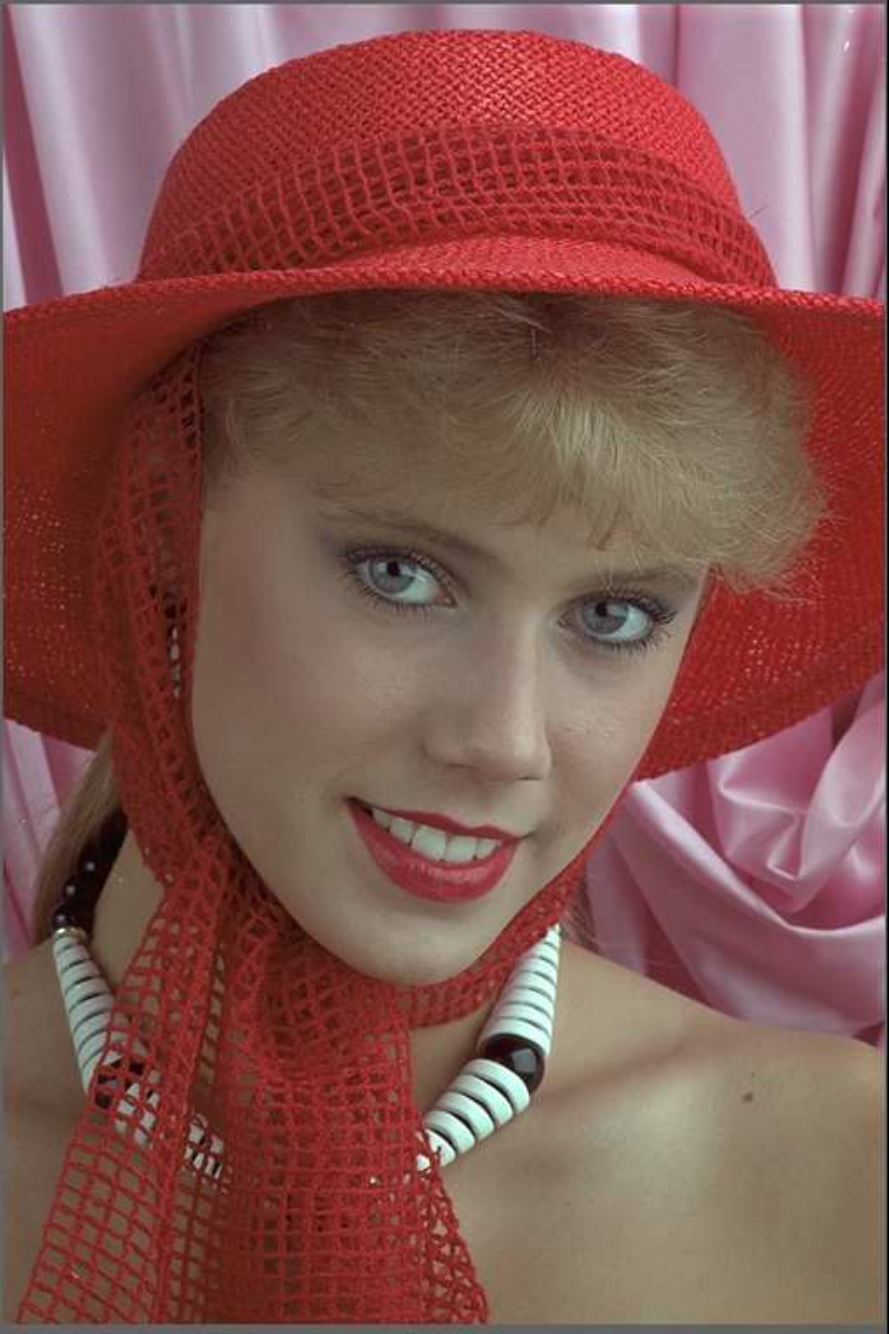}
	\includegraphics[width=0.12\textwidth]{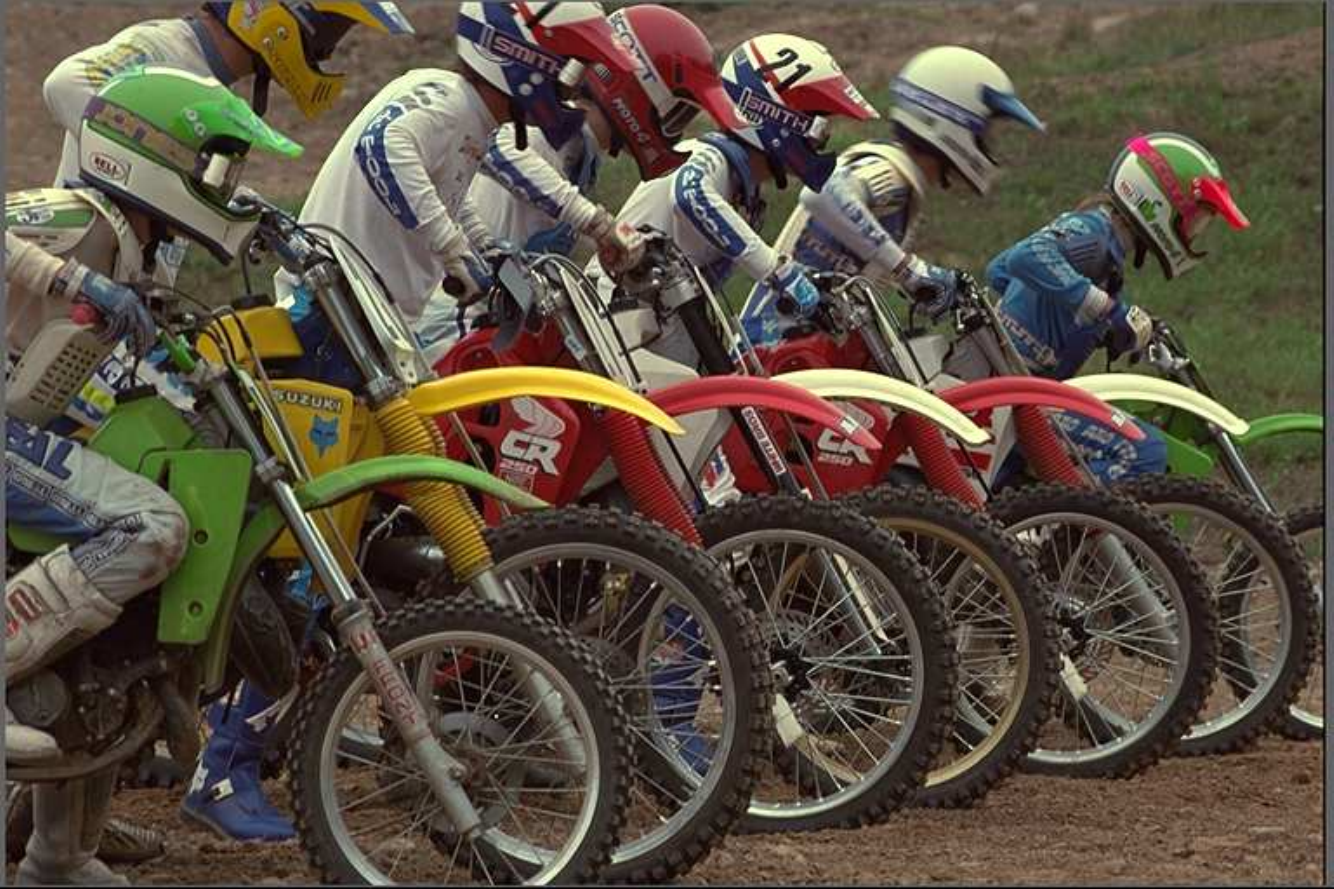}
	\includegraphics[width=0.12\textwidth]{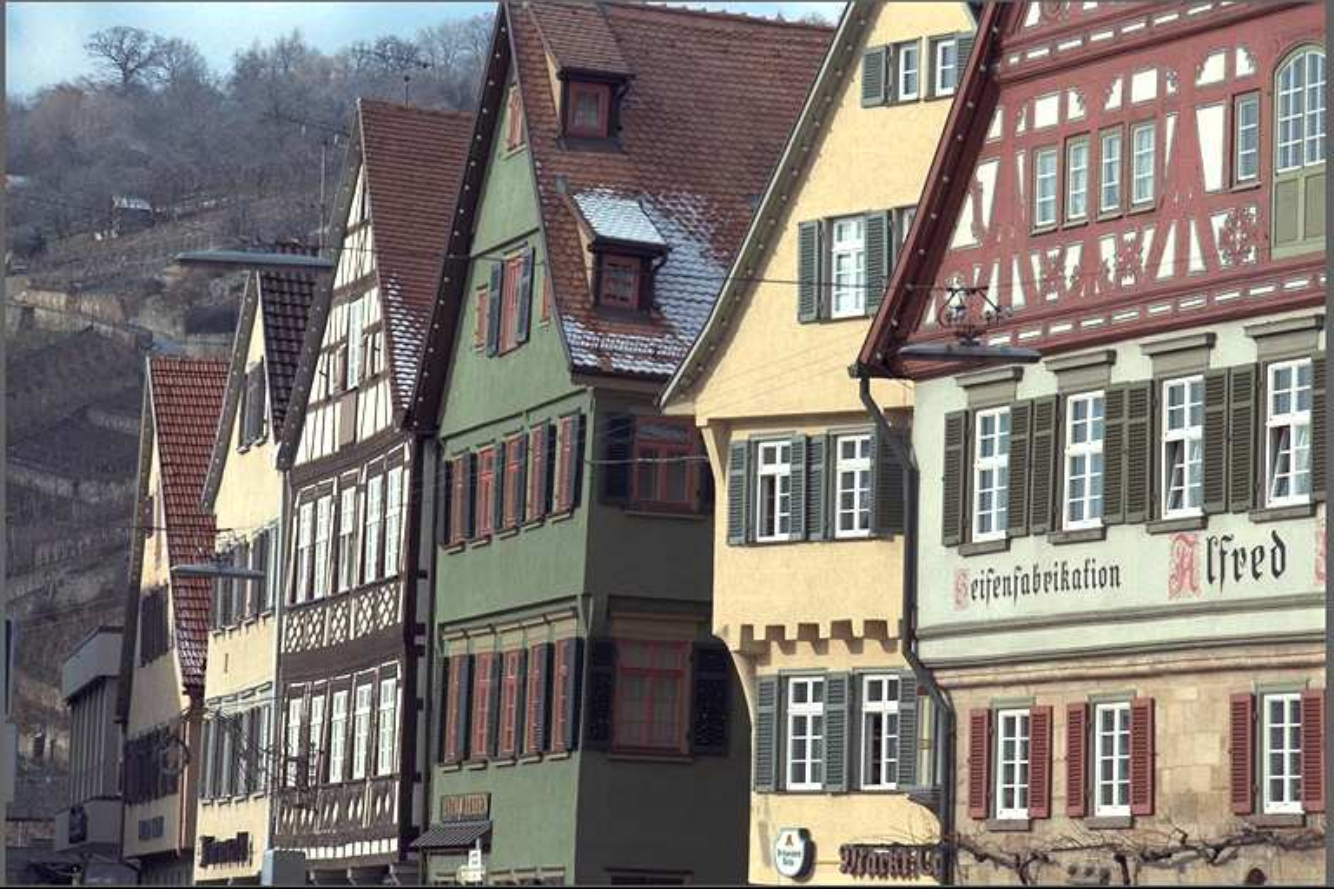}\\
	\text{~~~~valley~~~~~~~~~~~girl~~~~~~~~~~~aircraft ~~~~~~~~window~~}
	\includegraphics[width=0.11\textwidth]{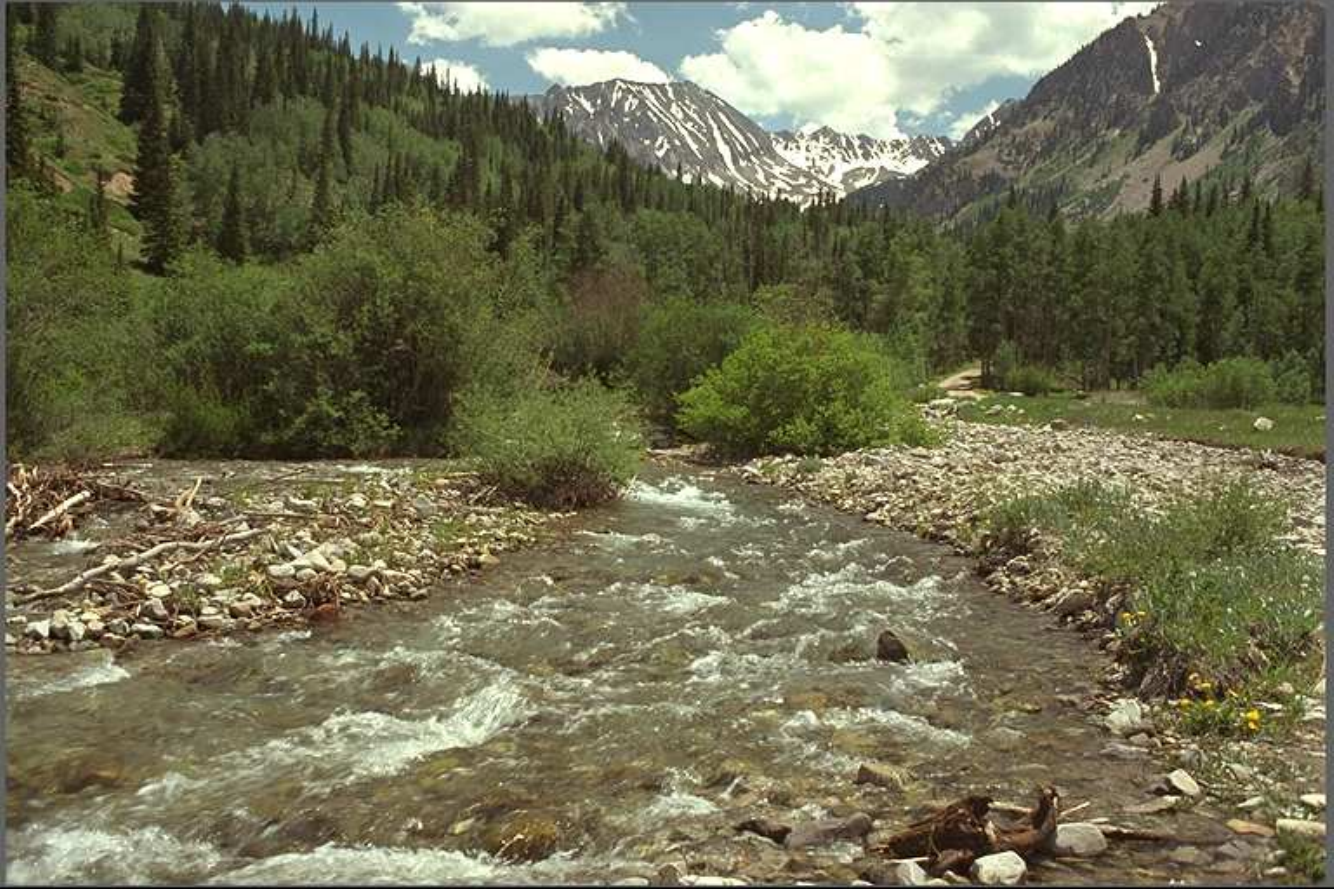}
	\includegraphics[width=0.11\textwidth]{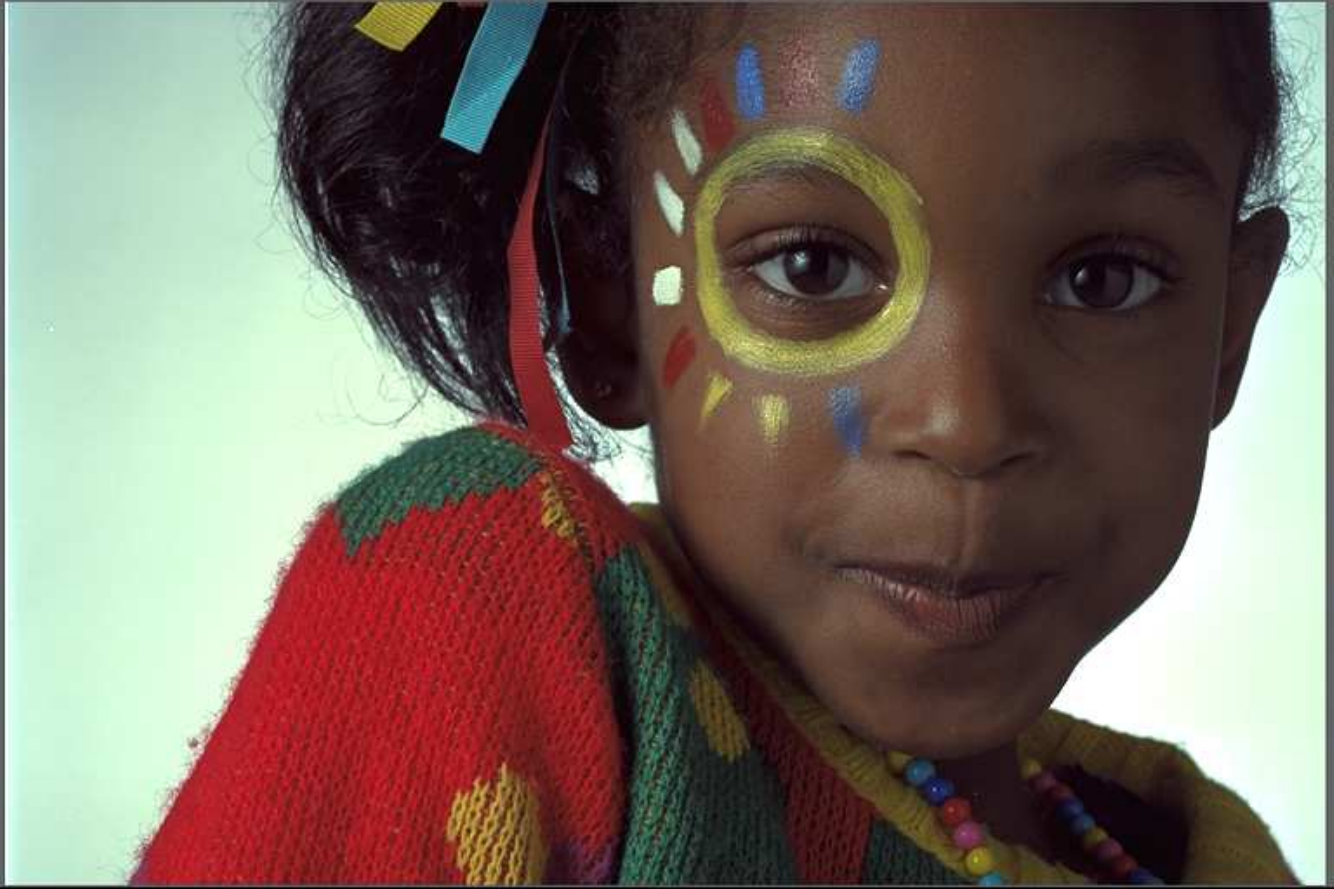}
	\includegraphics[width=0.11\textwidth]{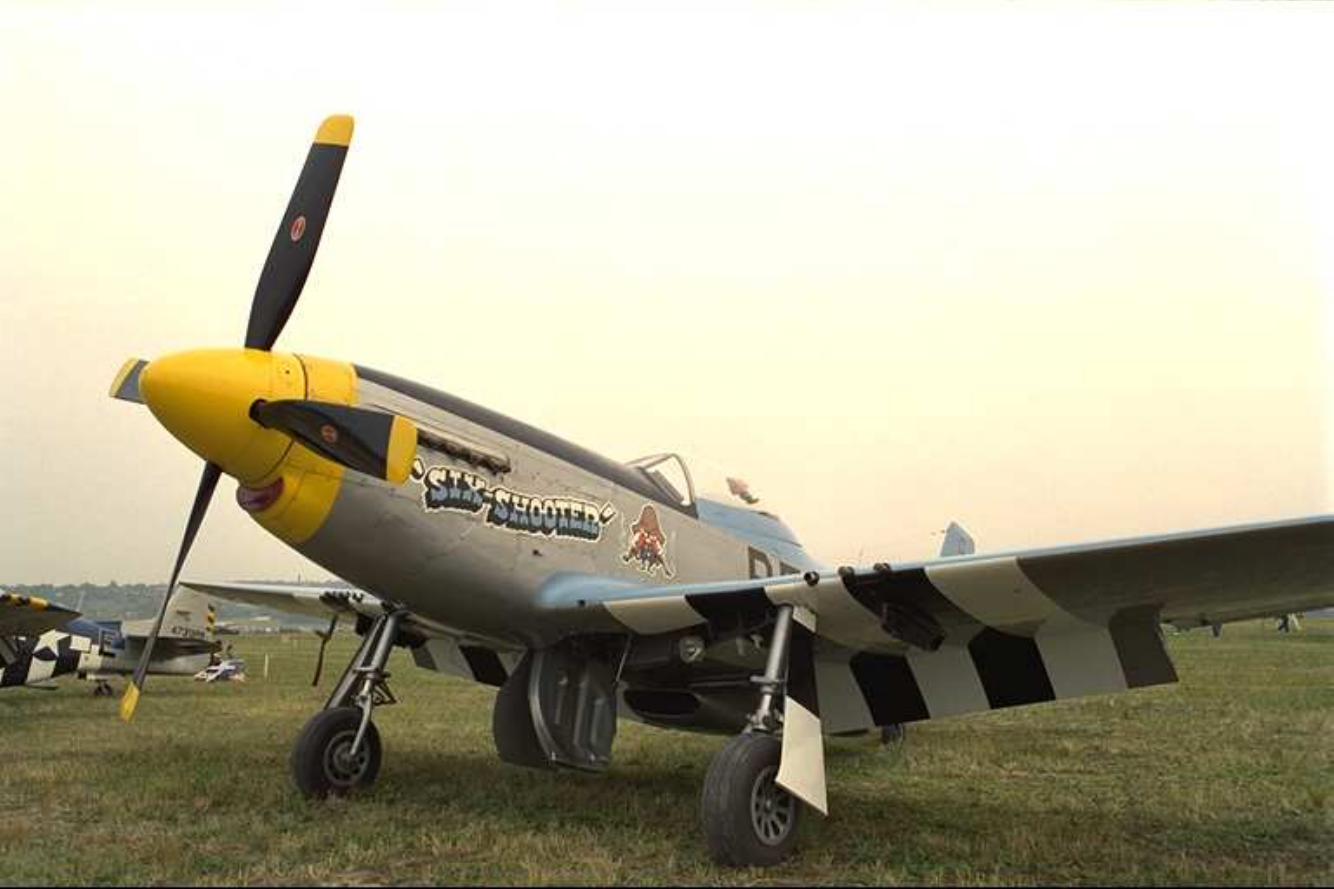}
	\includegraphics[width=0.11\textwidth]{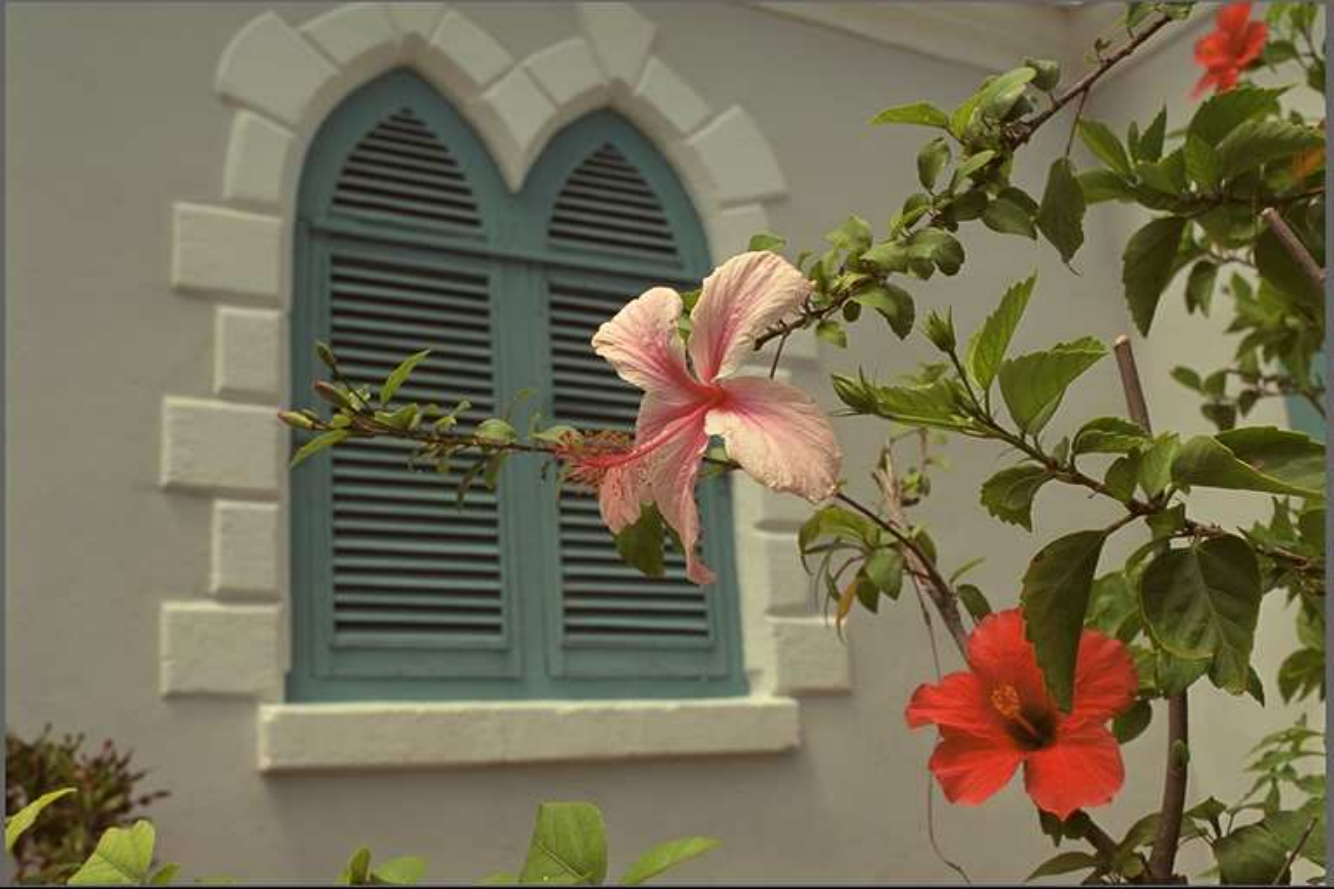}
	\caption{The sixteen RGB images for experiments.}
	\label{RGB_image}
\end{figure}

\begin{figure}[!htbp]
	\centering
	\includegraphics[width=0.5\textwidth]{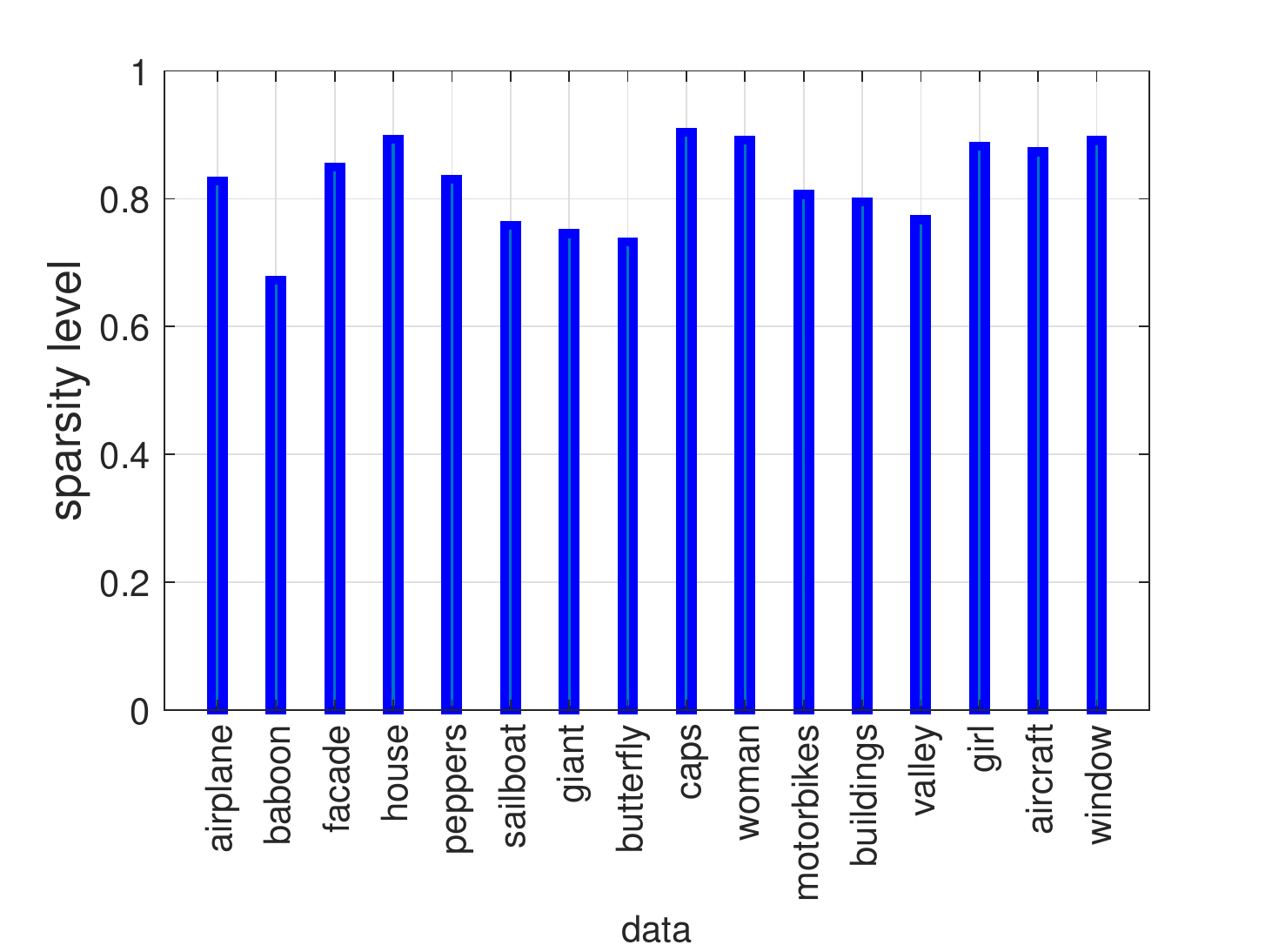}\\
	\caption{{Sparsity level of sixteen RGB images under DCT, where ``sparsity level'' is defined as the proportion of the number of zero elements in a tensor.}}\label{fig_rgbsparse}
\end{figure}

Now, we are concerned with the inpainting performance of our approach on the cases where the pixels of images are missed in a random way. In Fig. \ref{RGB_image_visual}, we first consider four scenarios on the sampling rate (`{\sffamily sr}'), i.e., {${\sf sr}=\{ 3\%, 5\%,10\%,20\%\}$, for five images. Clearly, except the deep learning method DP3LRTC}, the recovered images by our methods, i.e., DCT-WNN and DCT-IpST, are better than the other compared model-driven methods for the scenarios where {\sf sr}'s are lower than $10\%$ (see the first four columns of Fig. \ref{RGB_image_visual}). To investigate the full sensitivity of these methods to {\sf sr}, we further consider five scenarios on {\sf sr}'s from $3\%$ to $30\%$ for the images listed in Fig. \ref{RGB_image}, and plot PSNR values with respect to {\sf sr}'s in Fig. \ref{RGB_image_psnr}. It can be easily seen from Fig. \ref{RGB_image_psnr} that both DCT-WNN and DCT-IpST outperform the other compared methods except the deep learning method DP3LRTC in terms of taking higher PSNR values for all scenarios with ${\sf sr}\leq 20\%$.

\begin{figure}[h]
	\centering
	\text{~~~~~caps~~~~~~~~~~girl~~~~~~~~window ~~~~aircraft~~~~motorbikes~~~~~~}
	\includegraphics[width=.48\textwidth]{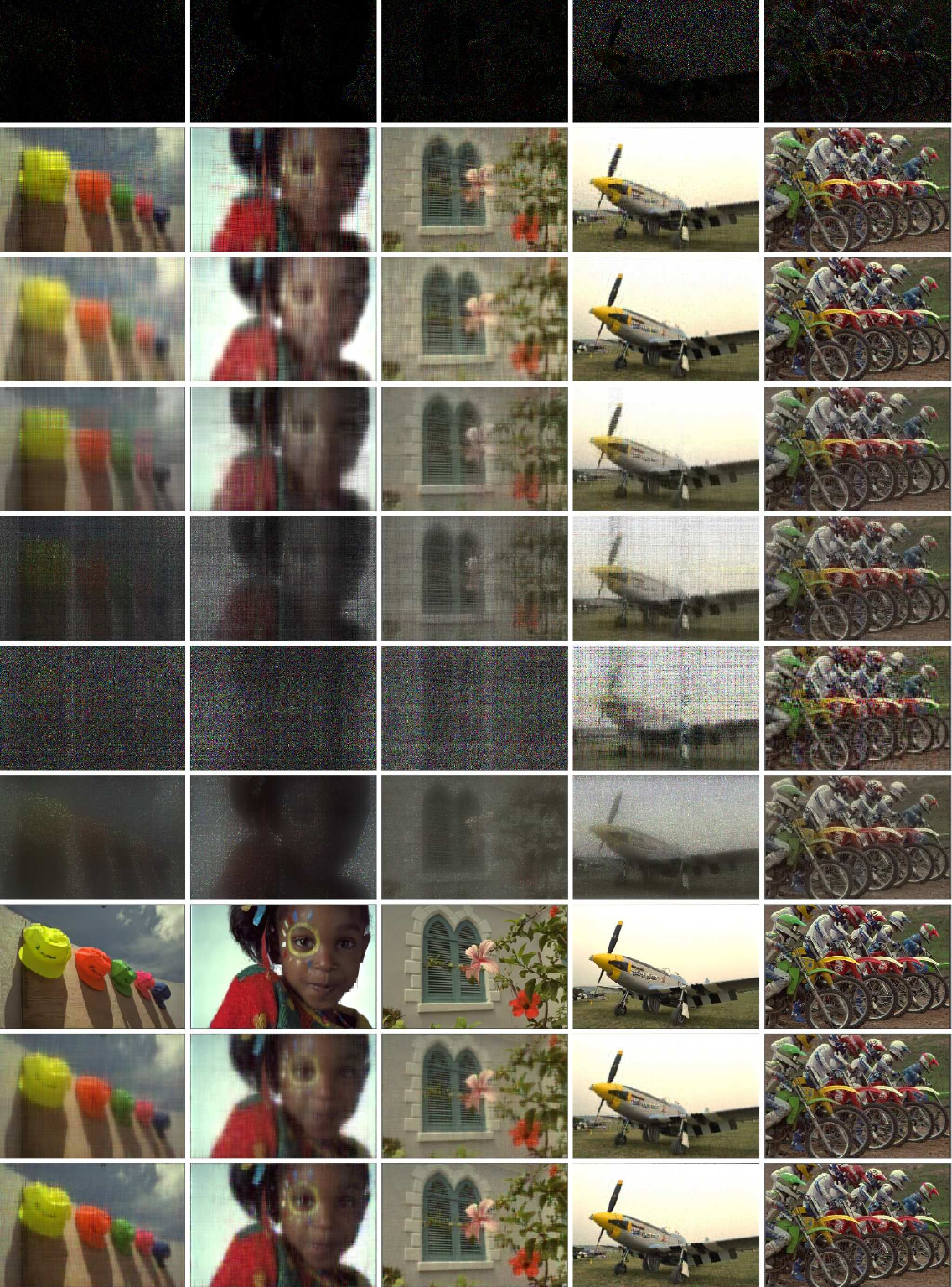}
	\caption{Images recovered by the nine algorithms for the cases with random missing information. The top row corresponds to the observed incomplete images. From left to right: downsampled images with ${\sf sr}=3\%, 3\%, 5\%, 10\% $ and $20\%$, respectively. From the second row to bottom: Images recovered by IpST, TTNNL1, F-TNN, TNN-3DTV, TCTF, WSTNN, DP3LRTC, DCT-IpST and DCT-WNN, respectively.}
	\label{RGB_image_visual}
\end{figure}

\begin{figure*}[!htbp]
	\centering
	\includegraphics[width=0.24\textwidth]{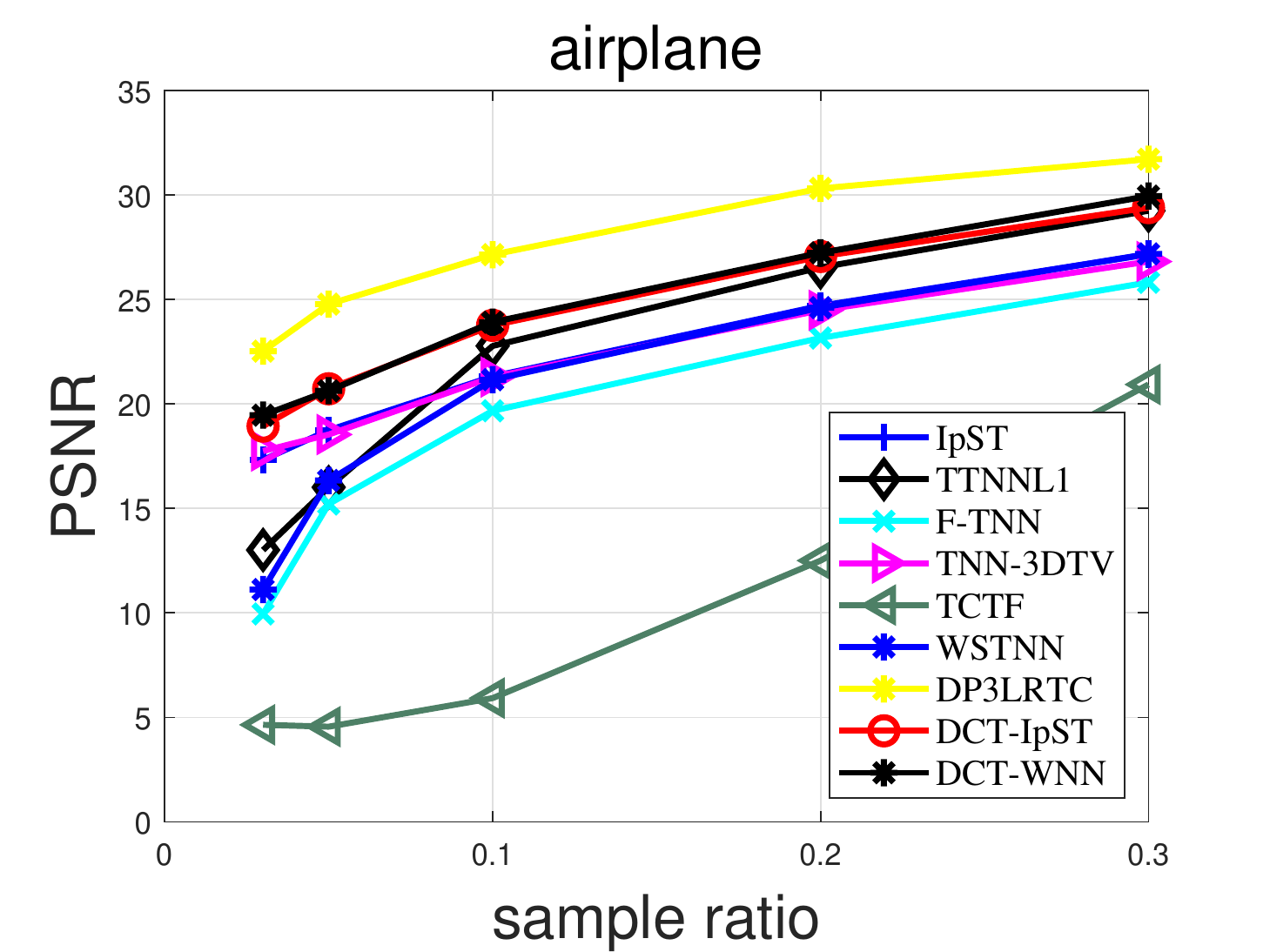}
	\includegraphics[width=0.24\textwidth]{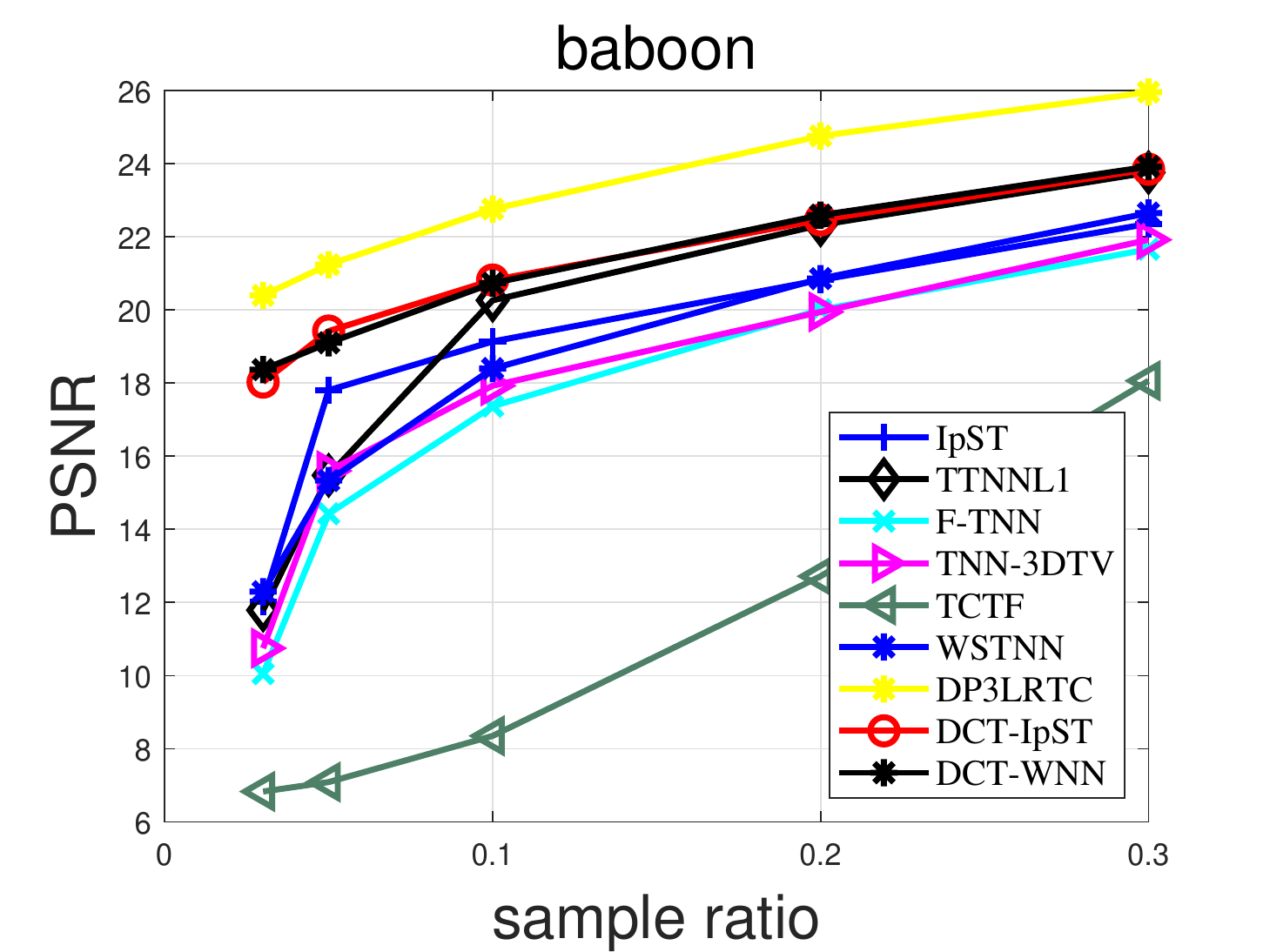}
	\includegraphics[width=0.24\textwidth]{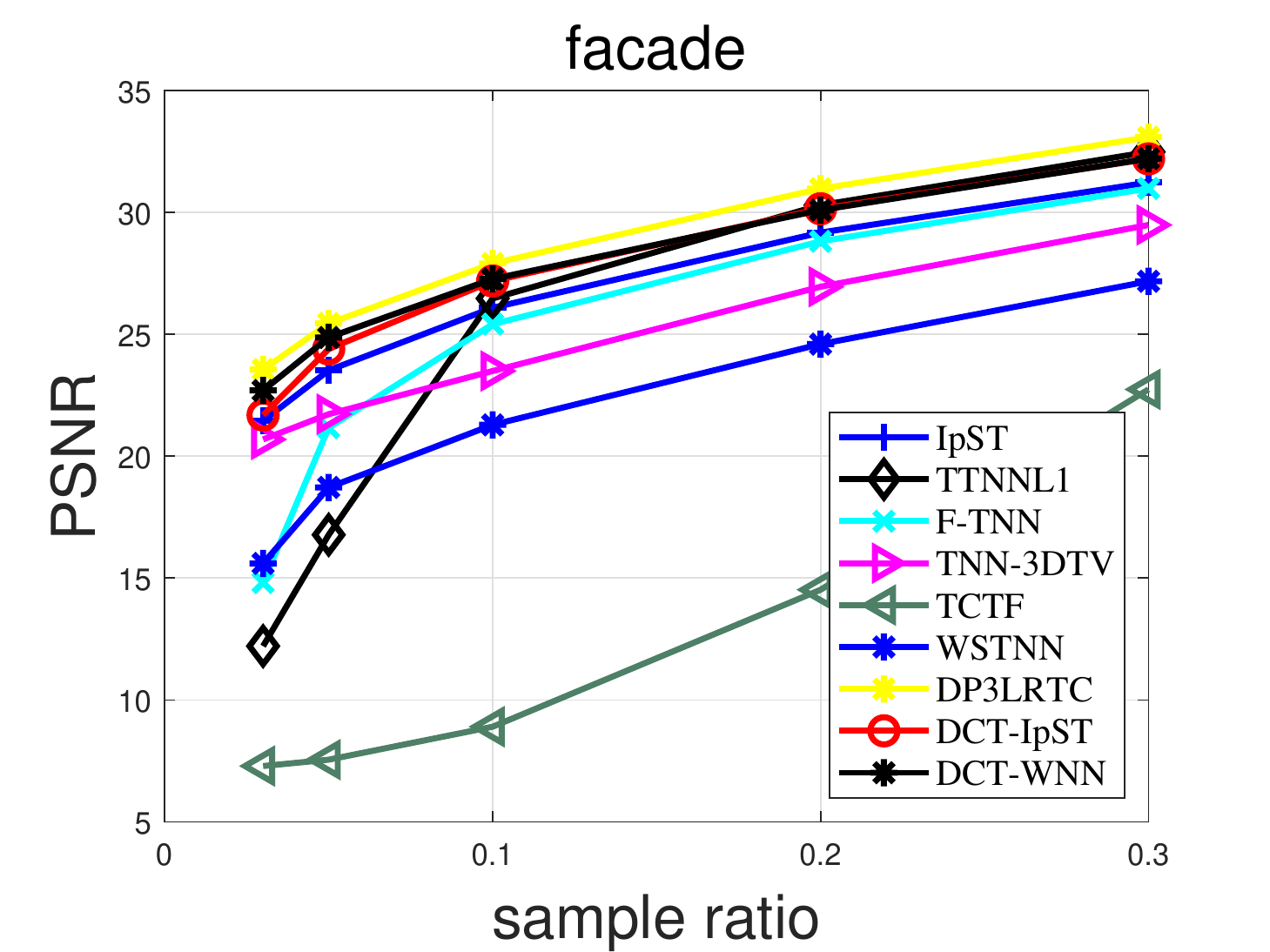}
	\includegraphics[width=0.24\textwidth]{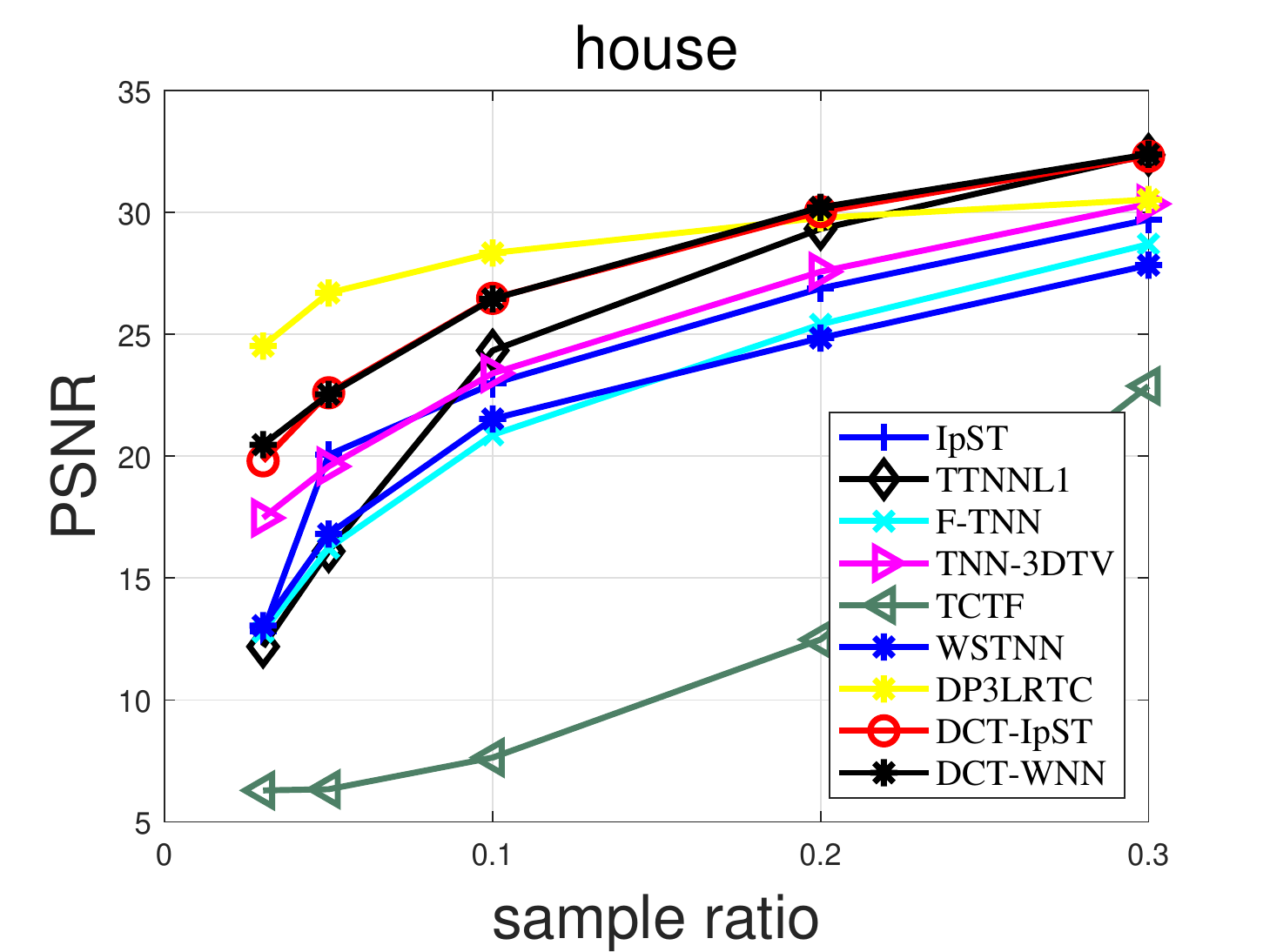}\\
	\includegraphics[width=0.24\textwidth]{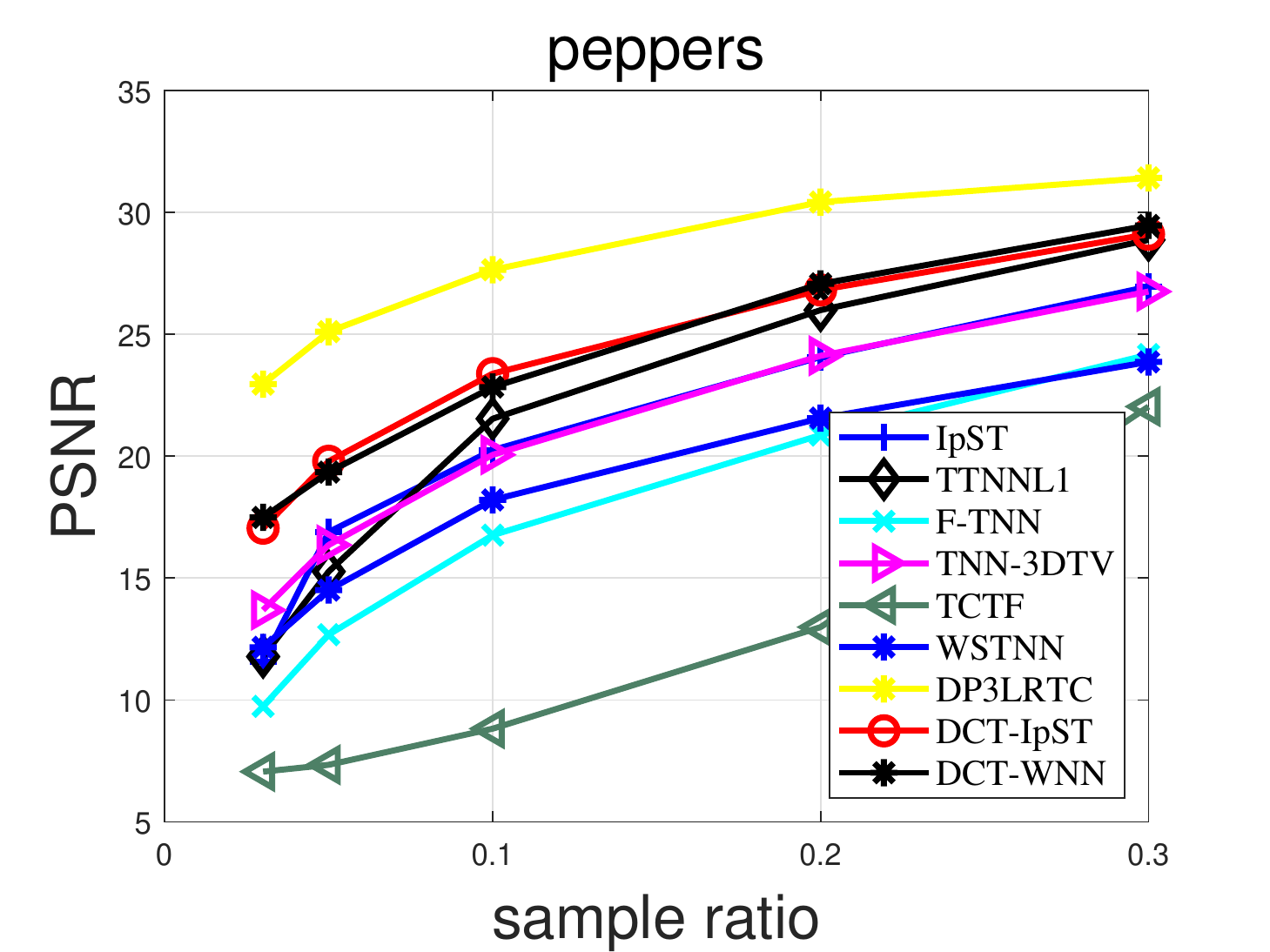}
	\includegraphics[width=0.24\textwidth]{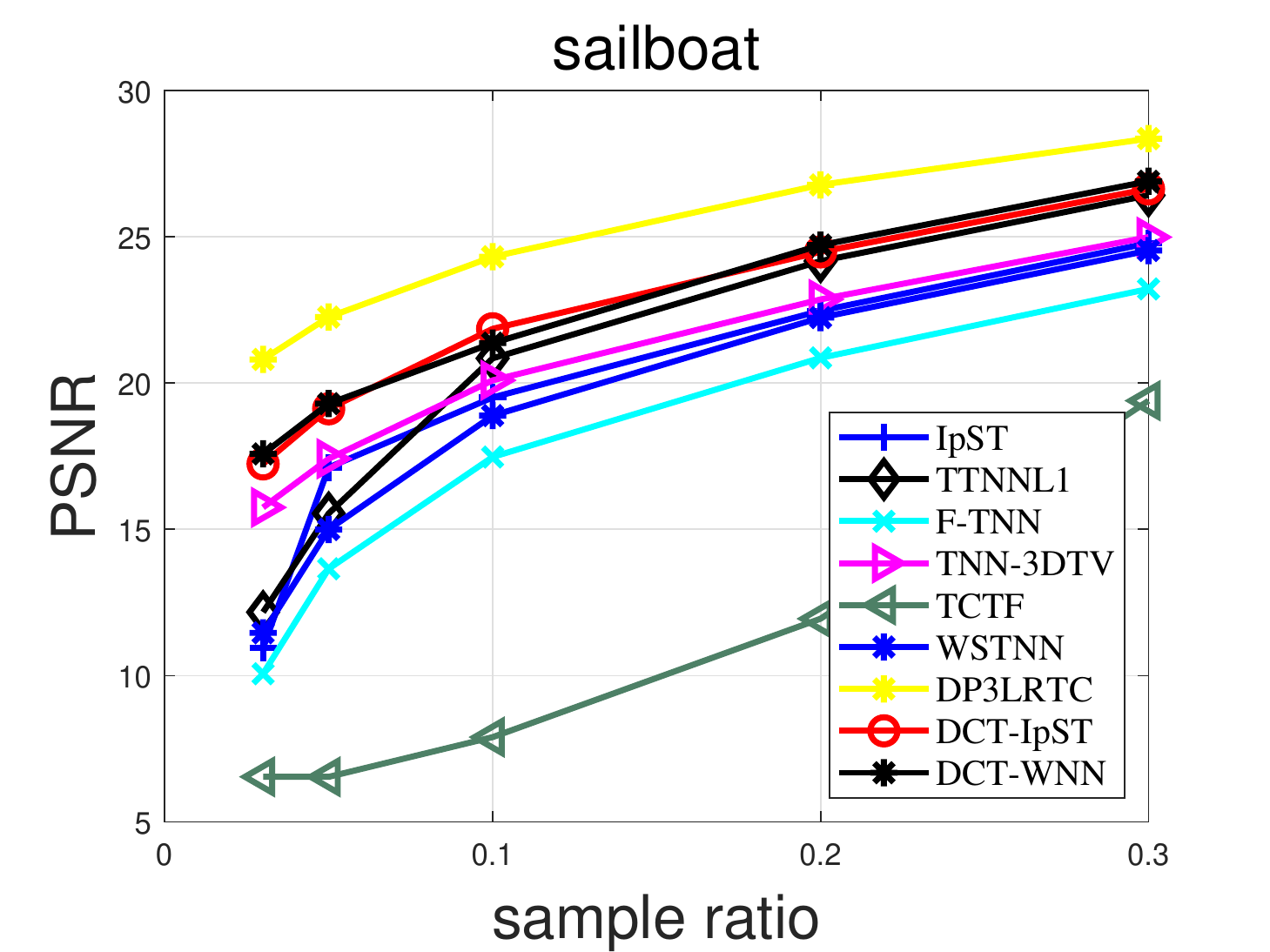}
	\includegraphics[width=0.24\textwidth]{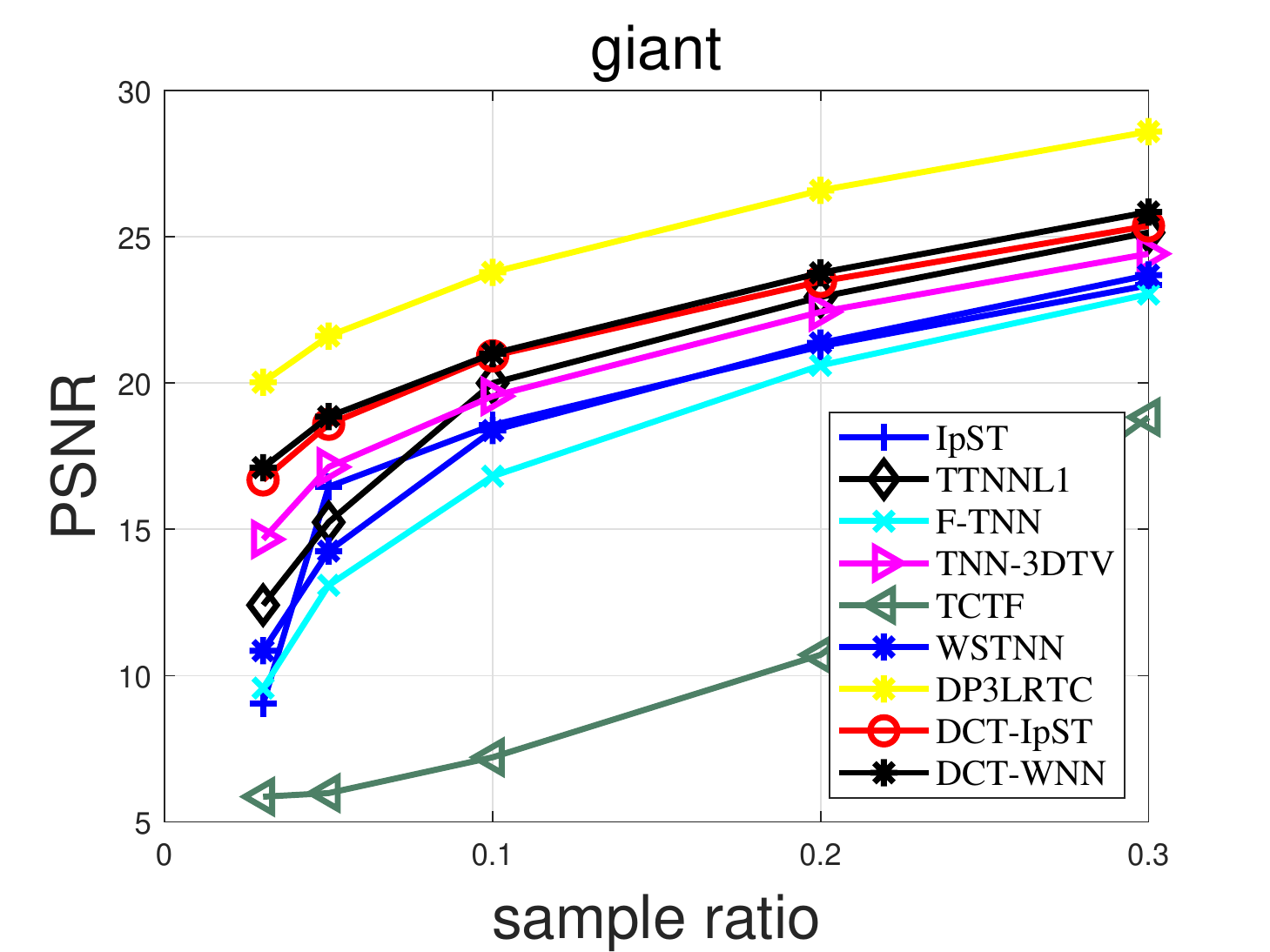}
	\includegraphics[width=0.24\textwidth]{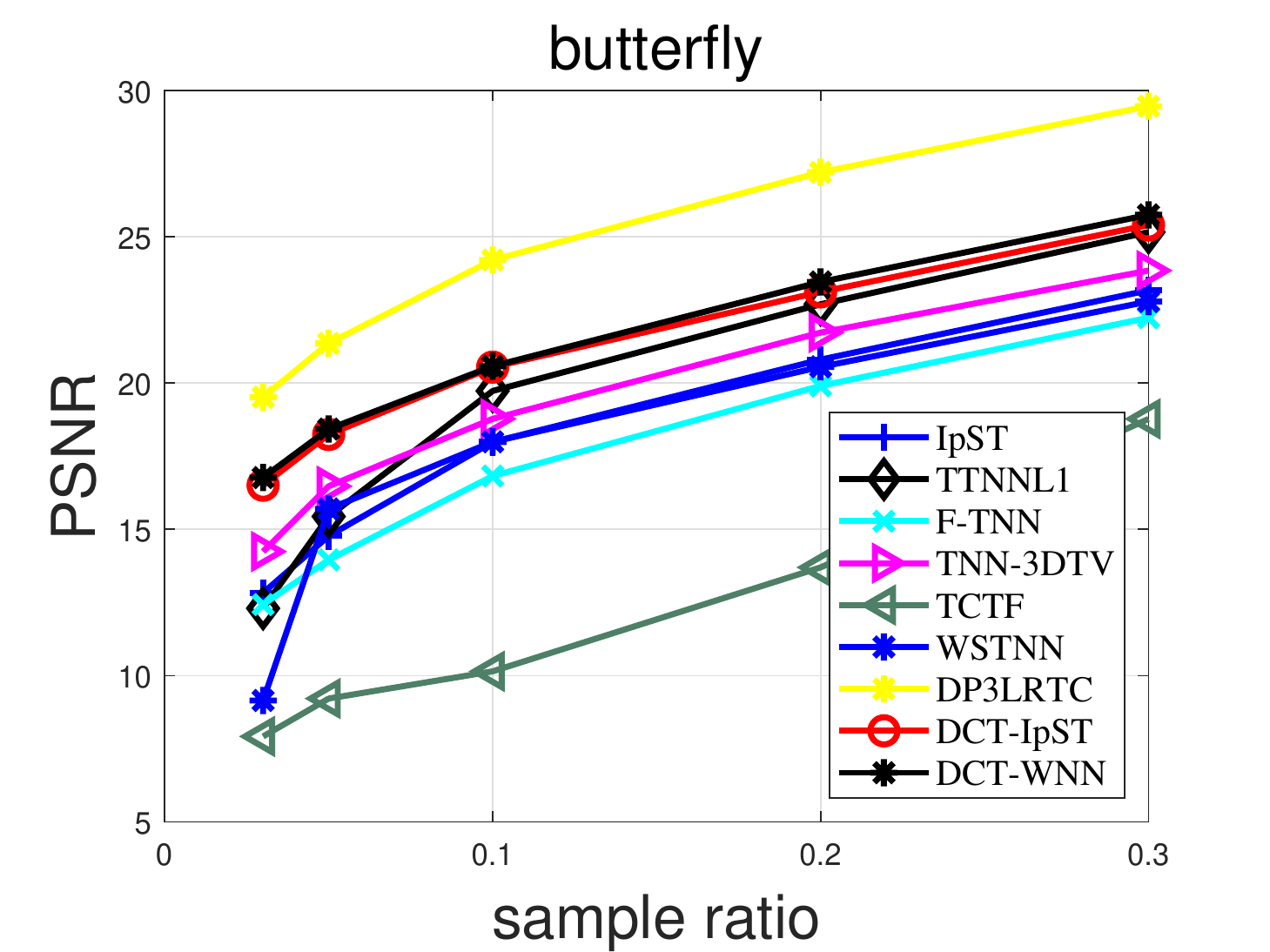}\\
		\includegraphics[width=0.24\textwidth]{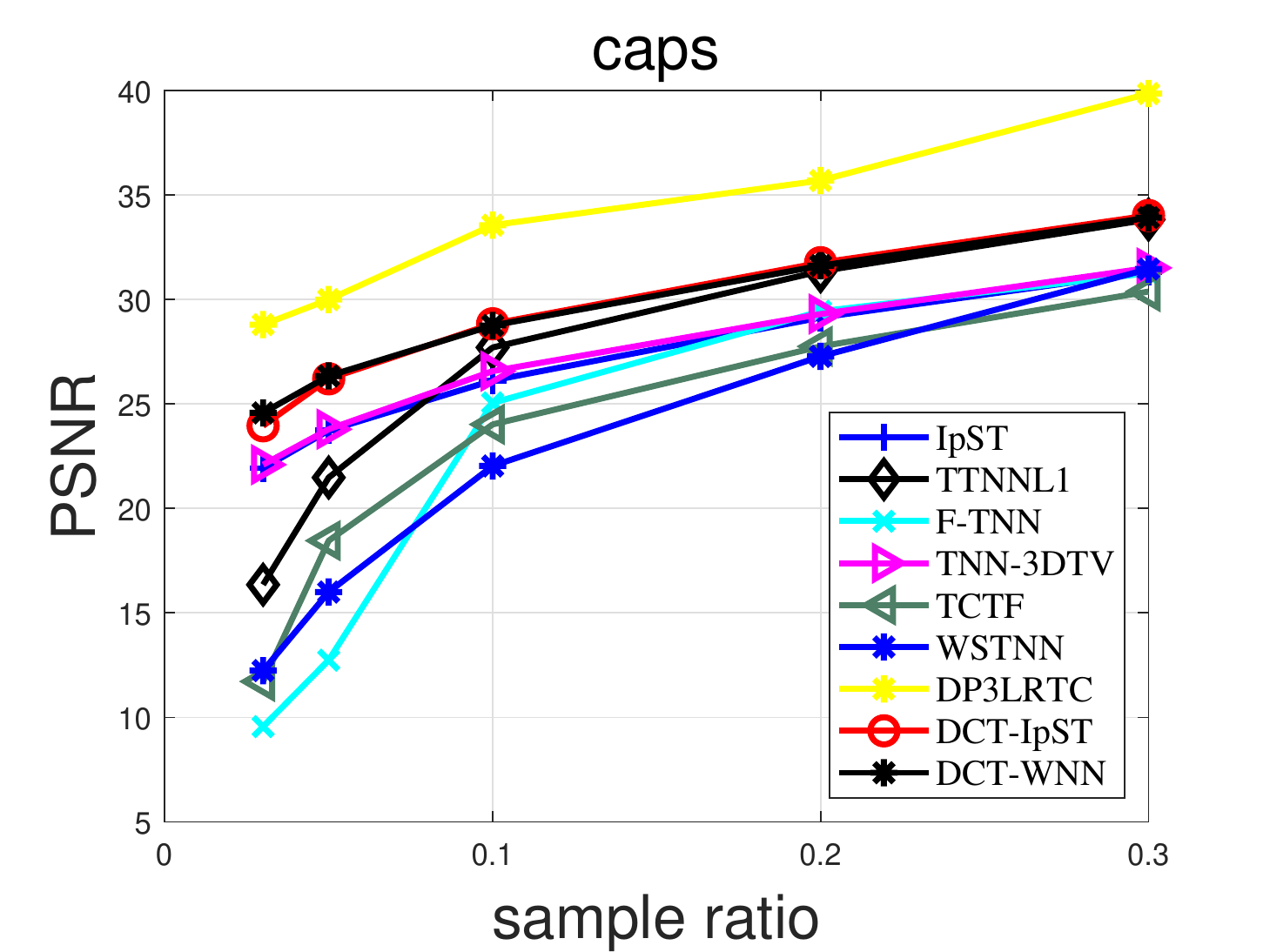}
	\includegraphics[width=0.24\textwidth]{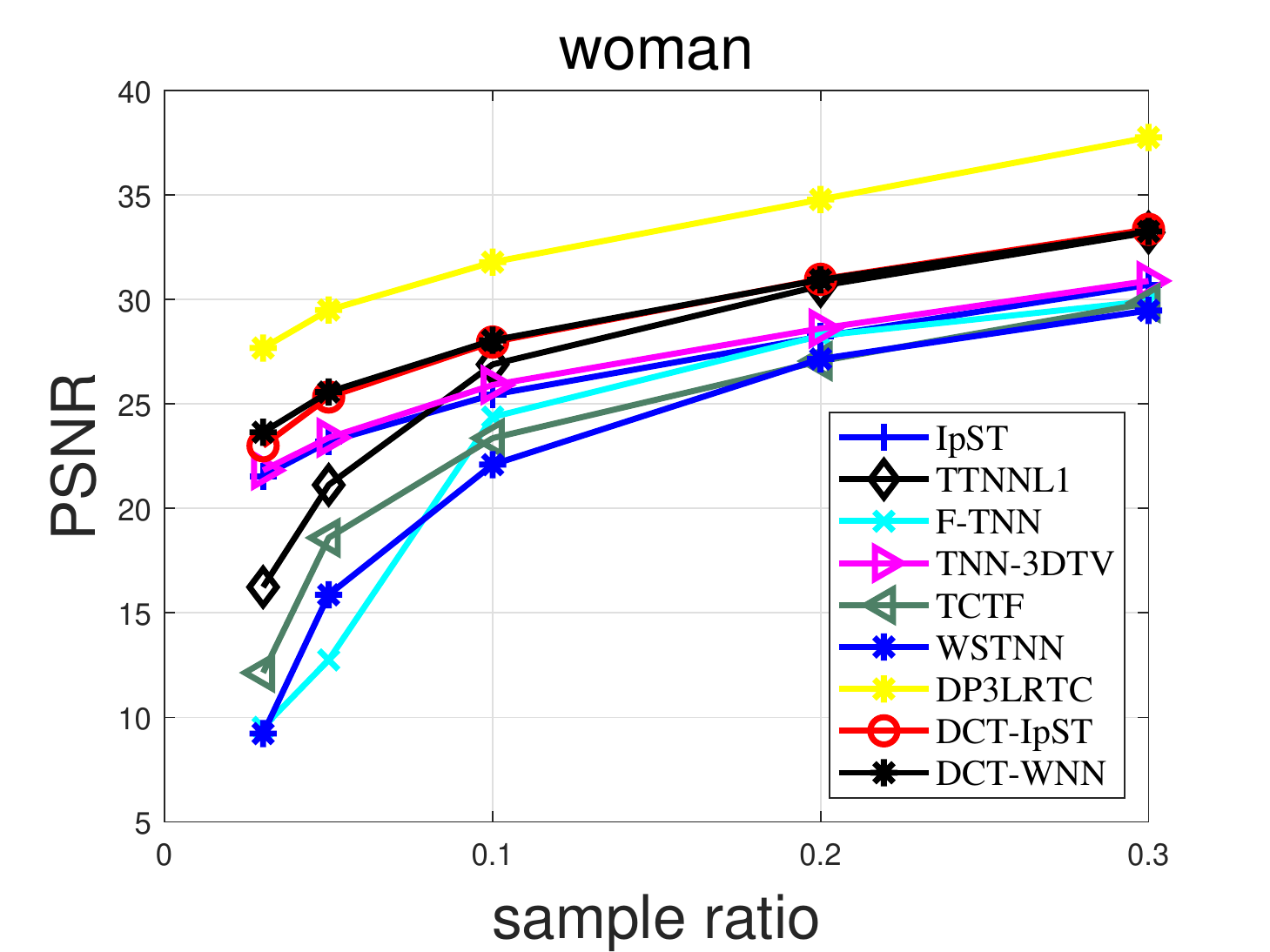}
	\includegraphics[width=0.24\textwidth]{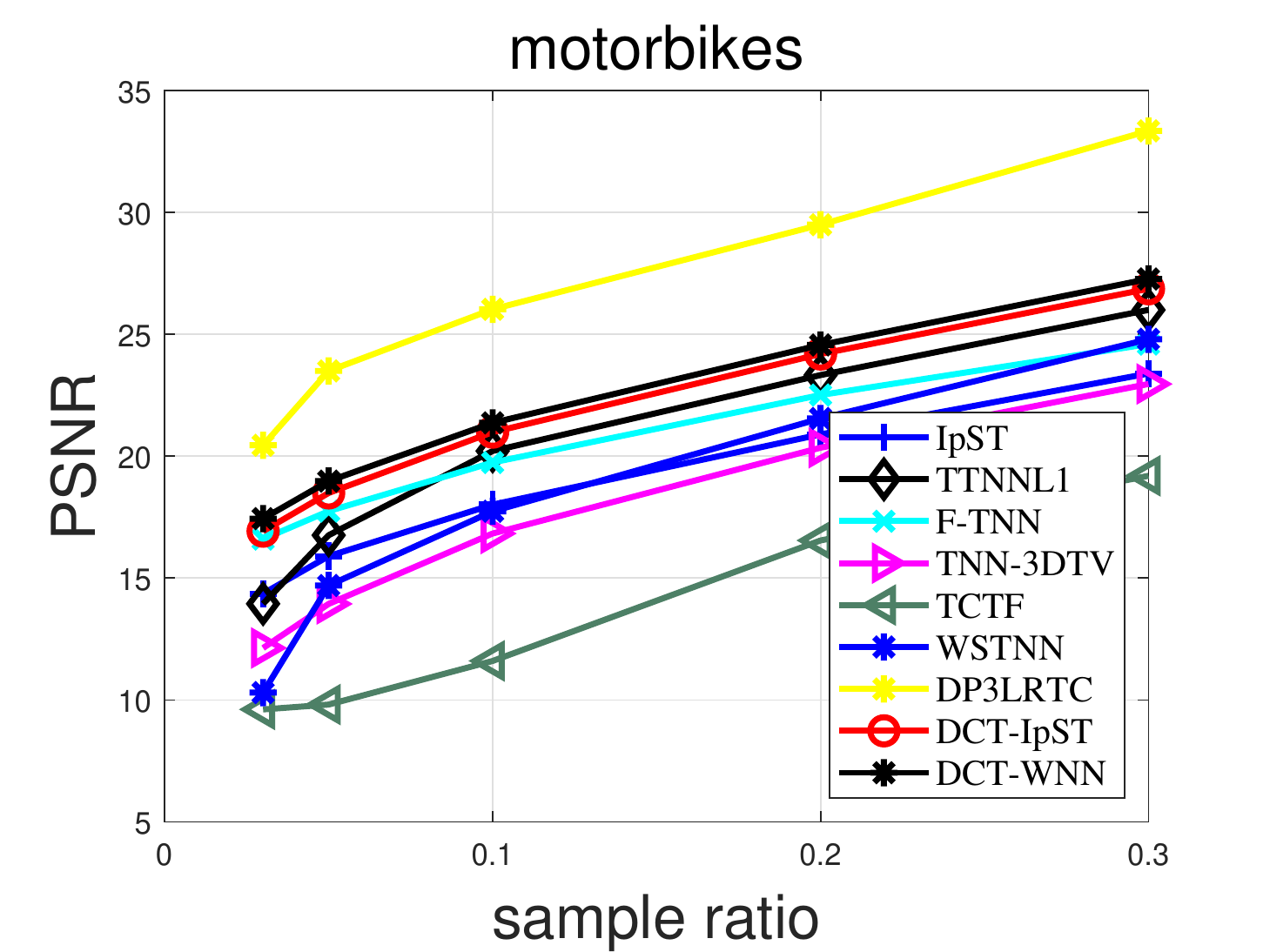}
	\includegraphics[width=0.24\textwidth]{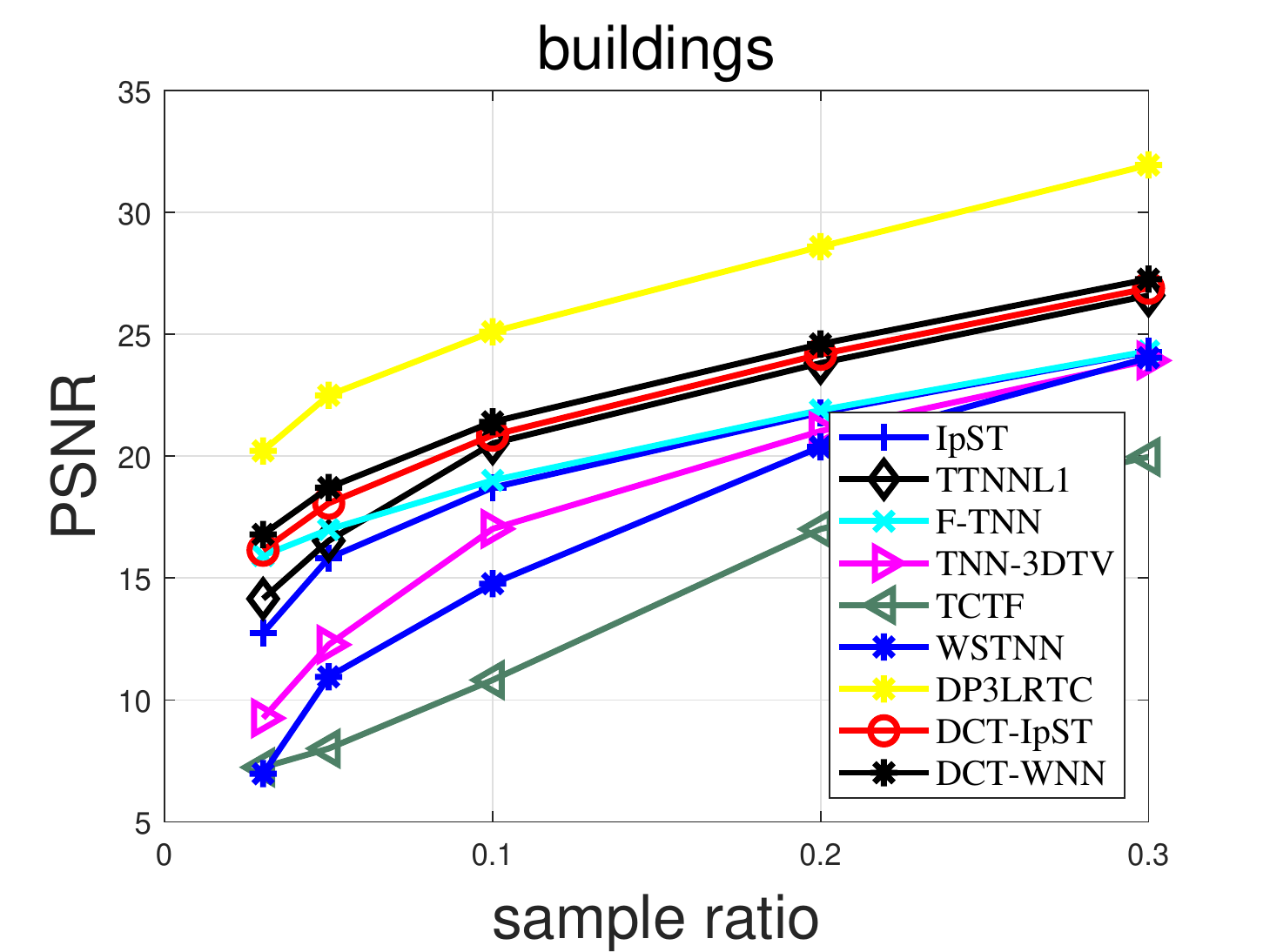}\\
	\includegraphics[width=0.24\textwidth]{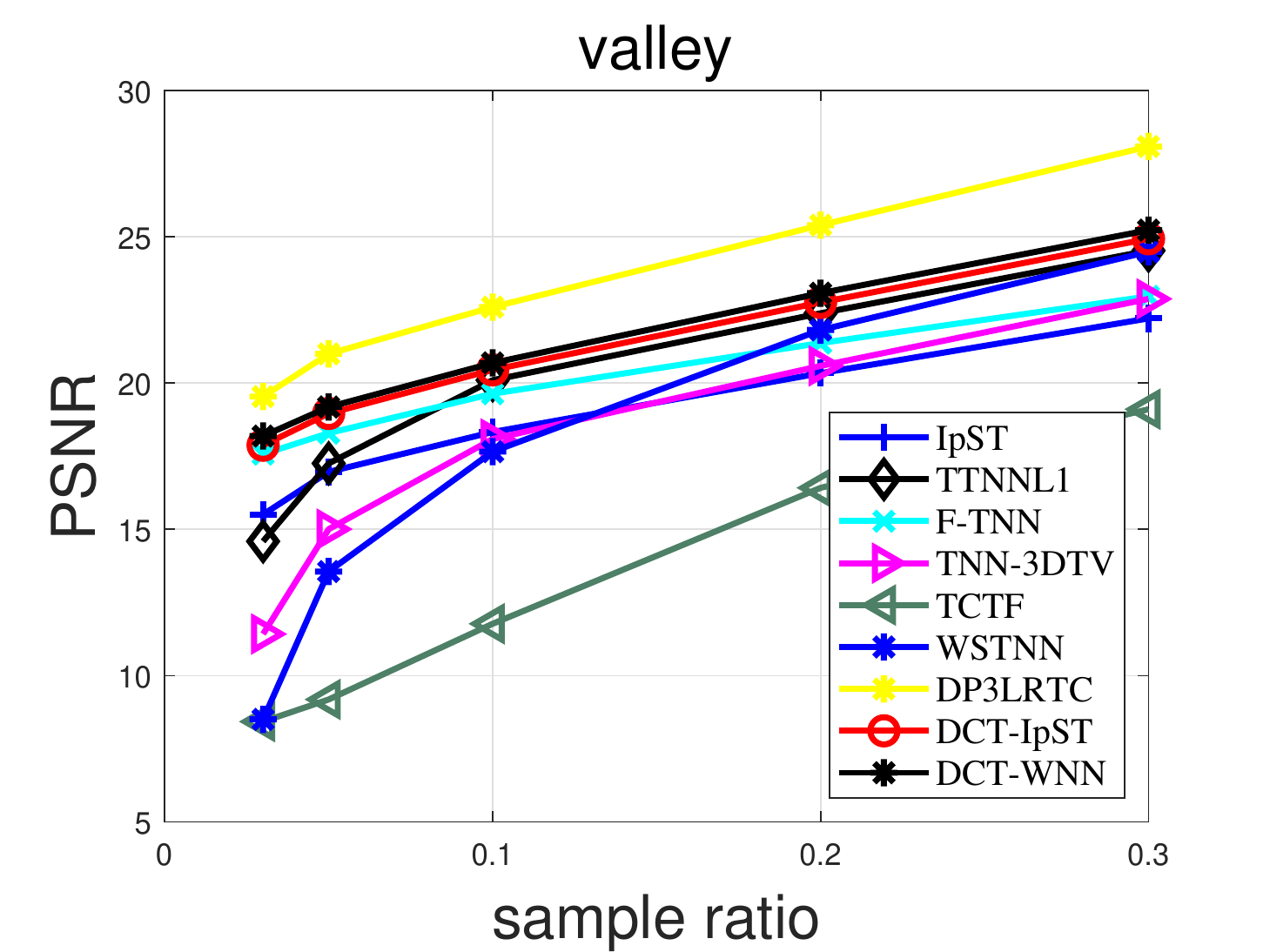}
	\includegraphics[width=0.24\textwidth]{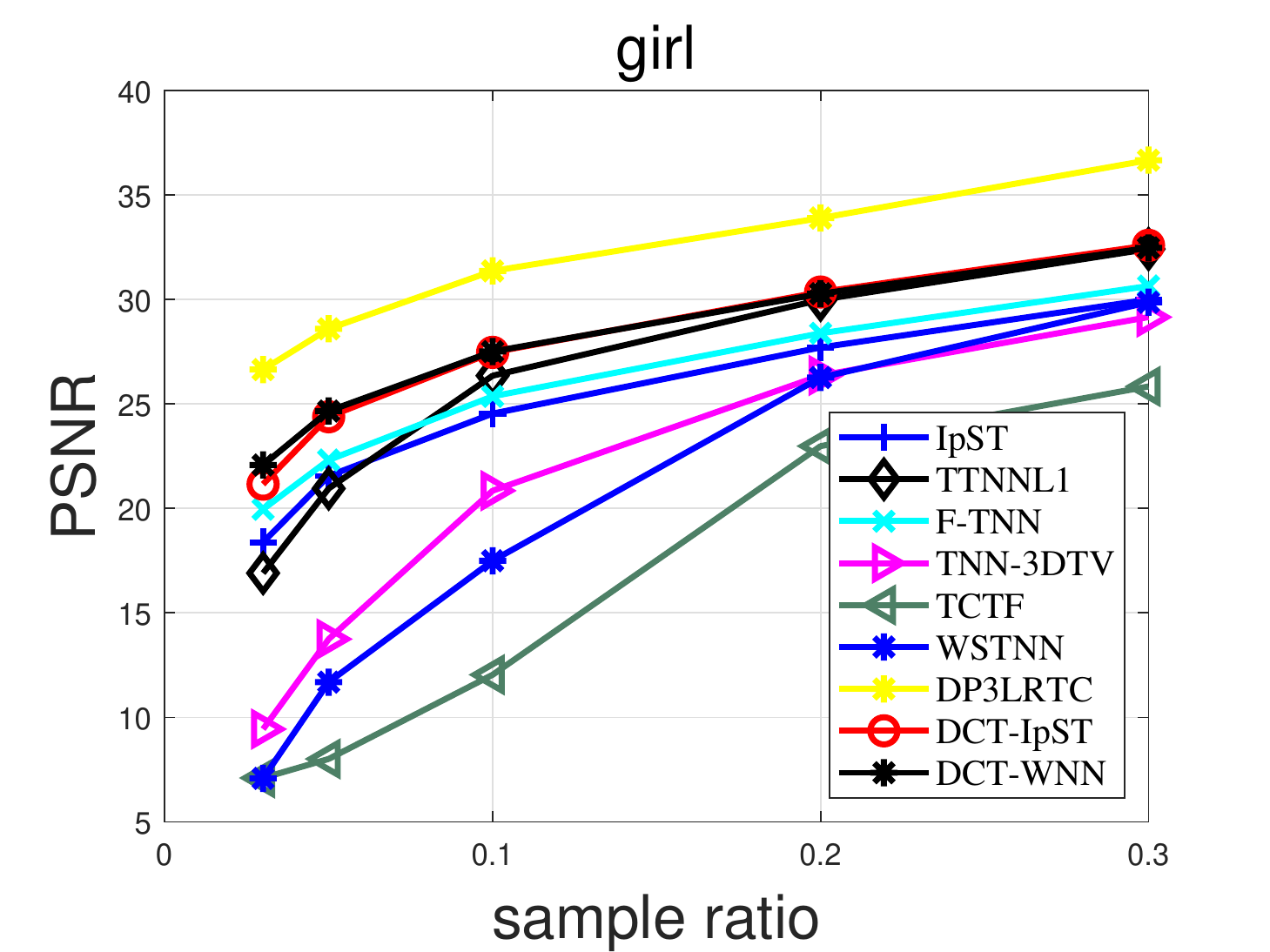}
	\includegraphics[width=0.24\textwidth]{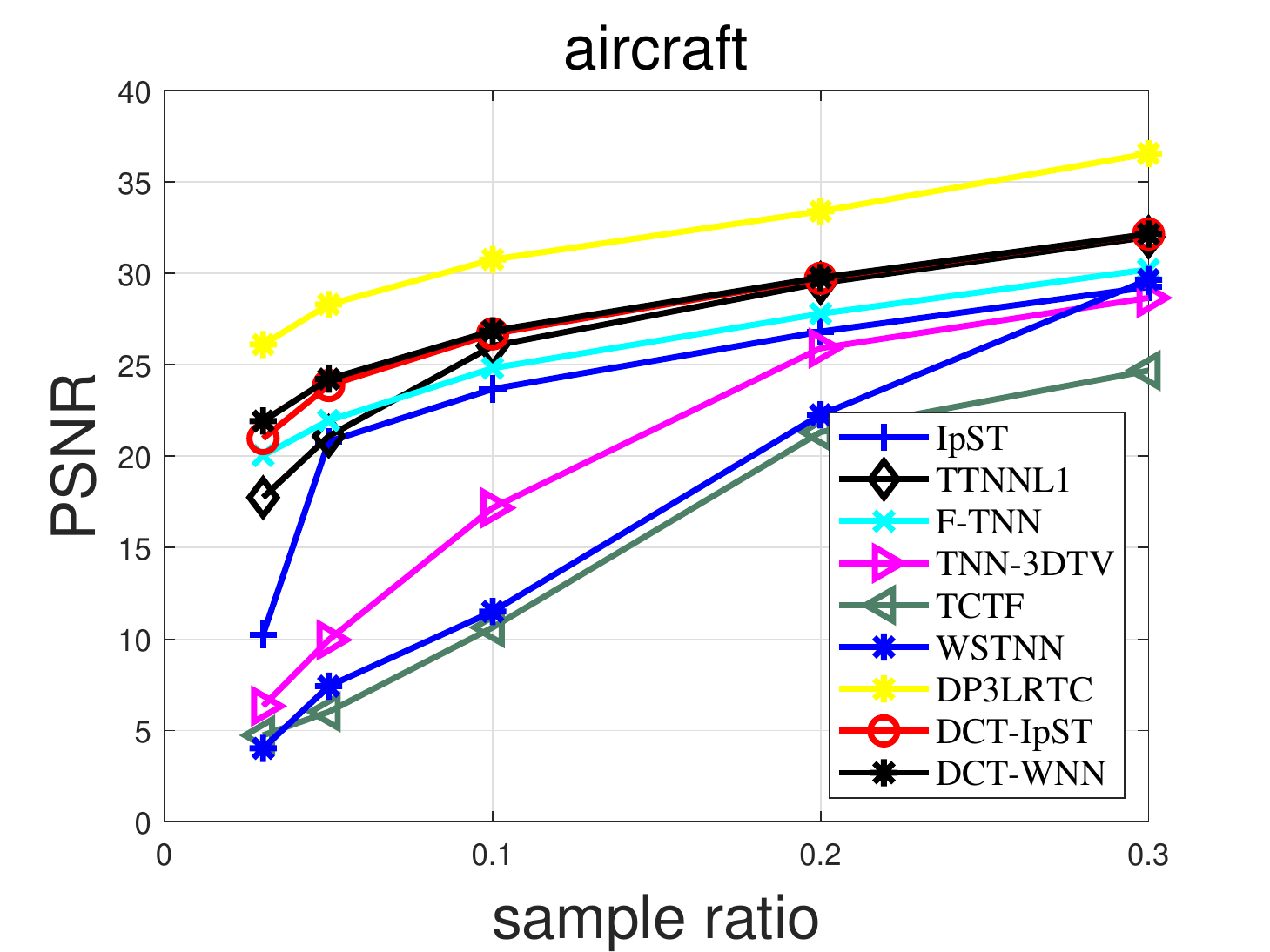}
	\includegraphics[width=0.24\textwidth]{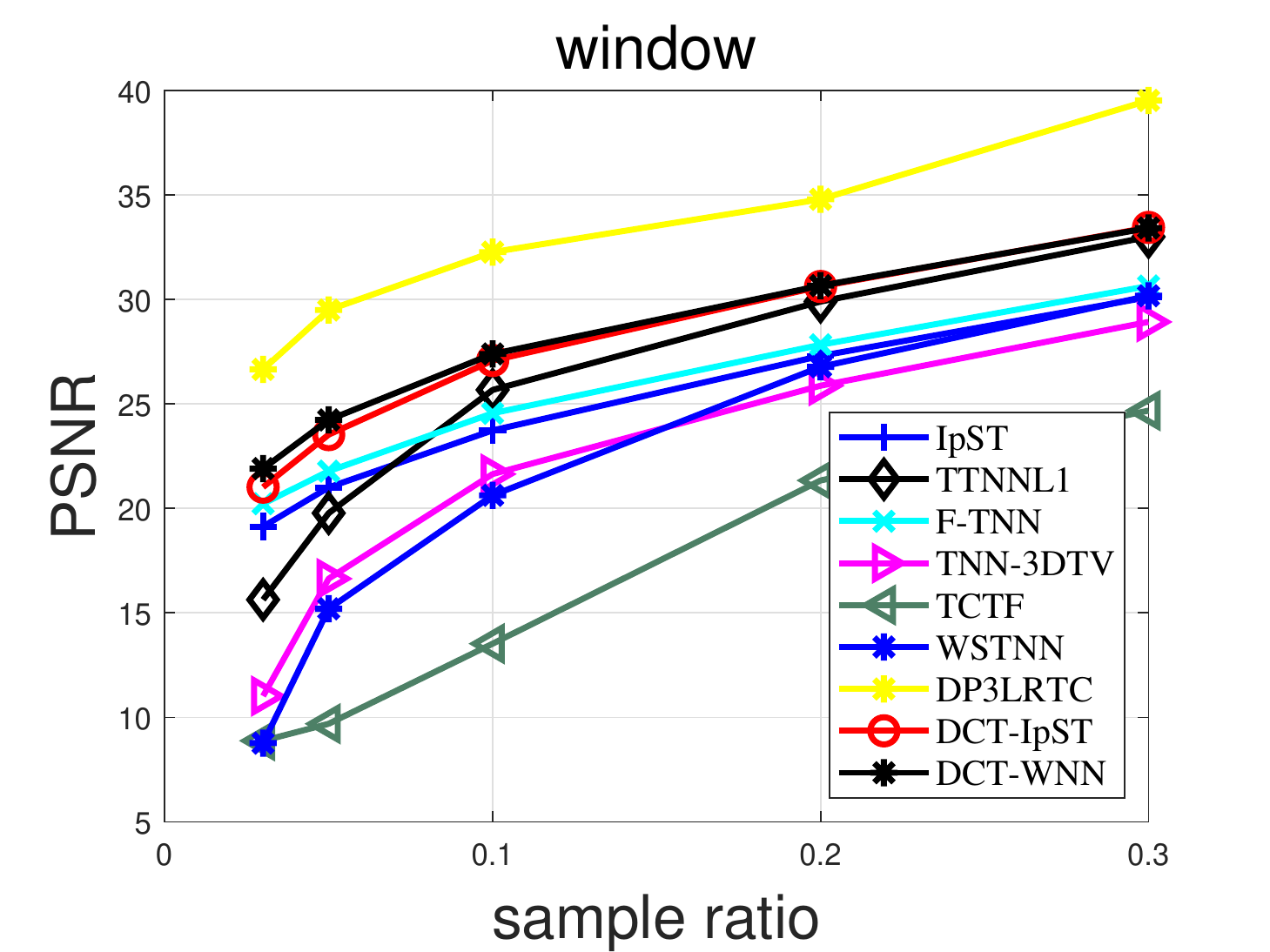}
	\caption{PSNR values with respect to different sampling rates for the sixteen RGB images in Fig. \ref{RGB_image}.}
	\label{RGB_image_psnr}
\end{figure*}


Below, we are interested in the numerical performance on structurally missing scenarios such as missing slices or shape/text mask (i.e., downsampling operator). Specifically, we consider five scenarios as shown by the first row of Fig. \ref{RGB_image_visual_stru}. Obviously, we see from the last two rows in Fig. \ref{RGB_image_visual_stru} that our DCT-WNN and DCT-IpST have promising ability to recover the structurally missing images.
 {Moreover, from the PSNR, SSIM,  and computing time in seconds (TIME(s) for short) reported in Table \ref{tab1}, we can see that the proposed DCT-WNN and DCT-IpST have better numerical performance than the compared methods except the deep learning method DP3LRTC. However, our DCT-WNN and DCT-IpST take less computing time than DP3LRTC to achieve comparable PSNR and SSIM values when all methods ran on a general personal computer without GPU acceleration. Those results efficiently demonstrate that the popular deep learning methods usually obtain better inpainting quality than the traditional model-driven approaches. However, model-driven approaches are still necessary for real-world applications due to their relatively low requirements on computer hardware.}

\begin{figure}[H]
	\centering
		\text{~~~~caps-1~~~~~~~caps-2~~~~~~~girl-1 ~~~~~~~girl-2~~~~~motorbikes~~~~~~}
	\includegraphics[width=.48\textwidth]{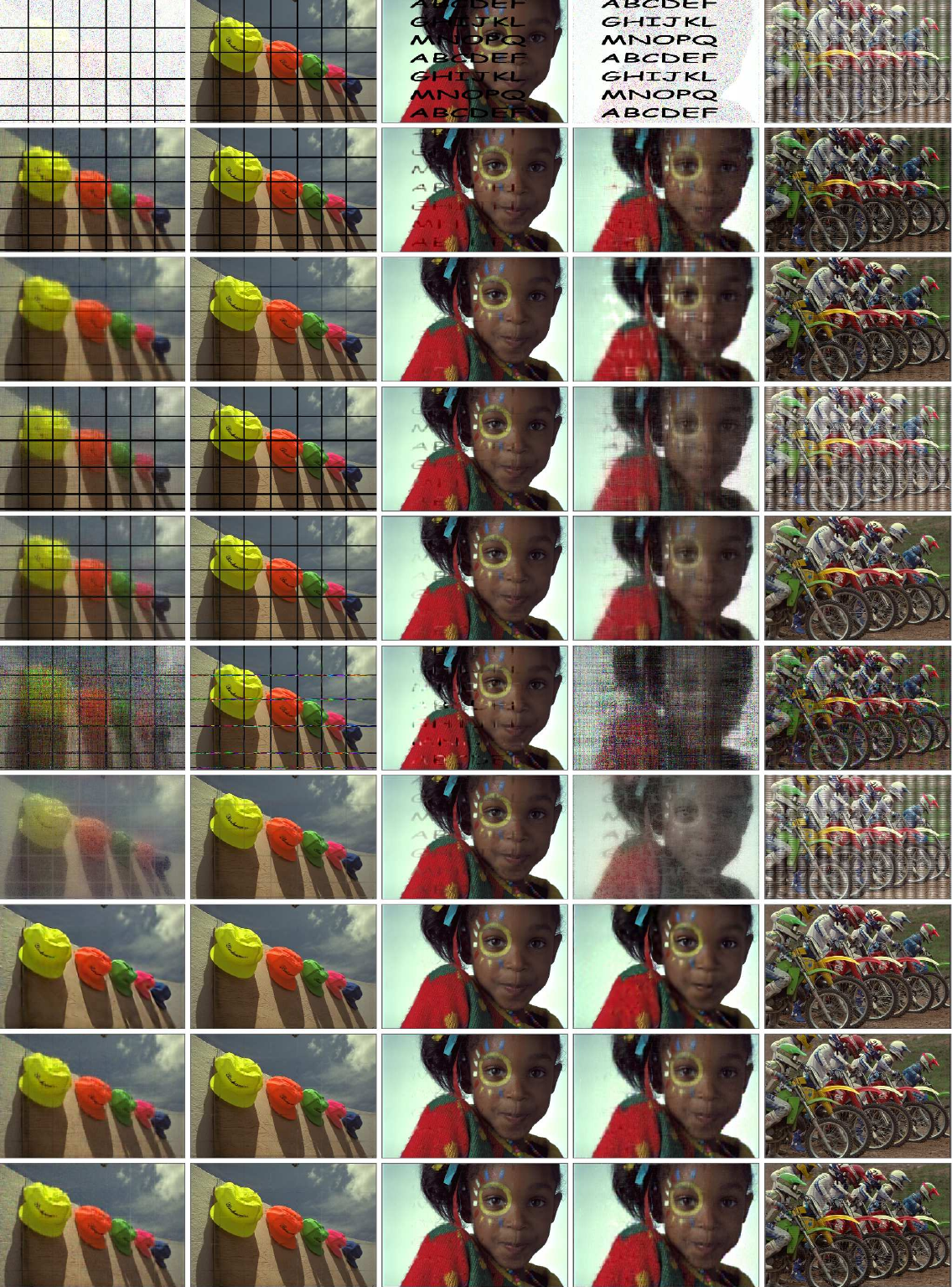}\\
	\caption{Images recovered by the nine algorithms for the cases with structurally missing information.  From the second row to bottom: Images recovered by IpST, TTNNL1, F-TNN, TNN-3DTV, TCTF, WSTNN, DP3LRTC, DCT-IpST and DCT-WNN, respectively.}
	\label{RGB_image_visual_stru}
\end{figure}

\begin{table*}
	\caption{Numerical results of the nine algorithms for recovering images with structurally missing pixels as tested in Fig. \ref{RGB_image_visual_stru}.}\label{tab1}
	\resizebox{\textwidth}{20mm}{
		\begin{tabular}{lllllllllllllllllllll}\toprule
			\multicolumn{1}{l}{\multirow{2}{*}{}}&\multicolumn{1}{l}{\multirow{2}{*}{Method}} &\multicolumn{3}{c}{caps-1} &&\multicolumn{3}{c}{caps-2}&&\multicolumn{3}{c}{girl-1}&&\multicolumn{3}{c}{girl-2}&&\multicolumn{3}{c}{motorbikes}\\
			\cline{3-5} \cline{7-9}\cline{11-13}\cline{15-17}\cline{19-21}
			&& PSNR & SSIM &  TIME && PSNR & SSIM& TIME && PSNR & SSIM & TIME&& PSNR & SSIM & TIME&& PSNR & SSIM & TIME\\\midrule		
			& IpST  &17.28  & 0.558 & 53.81 &&17.87  &0.749 & 52.21 && 25.01 & 0.892 & 55.29&& 23.24& 0.668 & 62.14&&16.01 &0.902 & 	29.03 \\
			
			& TTNNL  & 24.07& 0.711 & 55.37 && 26.86 & 0.857 & 55.31&& 31.22 & 0.942 & 53.25&&23.70& 0.724 & 56.93 && 21.62 &0.967&  46.87\\
			
			& F-TNN  & 17.00 & 0.540 & 135.08 && 17.87 & 0.749 & 25.13 && 27.61 & 0.916 & 97.64&&19.11&0.559 & 137.30 && 9.80&  0.861& 34.80\\
			
			& TNN-3DTV  & 18.96&0.602& 315.42 && 20.47  & 0.800 & 300.23 && 29.38 &0.934 & 361.91& & 22.47 & 0.712 & 323.01 && 31.96 & 0.990 & 322.12\\
			
			& TCTF  & 12.70 &0.130 &  26.43 && 18.22 & 0.753&30.23 && 25.59& 0.863 & 28.62& & 10.94& 0.135& 29.64 && 16.29 & 0.742 & 25.92\\
			
			& WSTNN  &19.27 &0.639 &  167.36 && 34.41 & 0.965& 154.31 && 29.05& 0.081 & 157.83& & 16.34& 0.576& 168.18 &&10.74& 0.884& 165.36\\
			
			& DP3LRTC  &29.04 &0.933 &  190.79 && 39.25 & 0.922& 218.53 && 33.93& 0.899 & 188.16& & 27.30 & 0.890 & 183.15 &&35.90 & 0.987& 182.67\\
			
			& DCT-IpST  &28.08& 0.842 & 102.59 &&35.84 & 0.973 & 100.13 &&31.79 & 0.946&99.67& &26.10 &0.821 & 101.77 &&32.90 & 0.991& 100.06\\
			
			& DCT-WNN  & 28.15 & 0.831 & 116.81&& 36.09 & 0.974& 102.33 && 31.76 & 0.943 &  100.07& & 26.22 &0.807 & 114.25 && 23.85 & 0.975& 101.55\\ \bottomrule	
	\end{tabular}}
\end{table*}

\subsection{Video recovery}
The above color images are actually low-dimensional third-order tensors. In this subsection, we consider higher dimensional cases, i.e., surveillance video data sets, which are natural third-order and fourth-order tensors corresponding to grayscale and color videos, respectively. Here, we choose six grayscale videos including  `\textbf{airport}', `\textbf{sidewalk}{\footnote{https://gr.xjtu.edu.cn/web/dymeng/3; the data can also be downloaded from: http://perception.i2r.a-star.edu.sg/bk\_model/bk\_index.html.}}', `\textbf{lunges}', `\textbf{biking}', `\textbf{baby}\footnote{http://crcv.ucf.edu/data/UCF101.php}' and `\textbf{basketball}\footnote{https://github.com/csyongdu/}' for experiments. Here, we add a `camera-shake' operation to  `\textbf{airport}' and `\textbf{sidewalk}'  so that their backgrounds are not static. Typical frames in these videos are depicted in Fig. \ref{video_depict}. 

We also first check the sparsity level of videos data sets when applying DCT to videos. The results in Fig. \ref{fig_videosparse} clearly demonstrate that videos also appear sparse structure under DCT, since the six video data sets have more than $60\%$ zeros, which means that DCT is an ideal technique for promoting sparsity of video data sets.
\begin{figure}[!htbp]
	\centering
	\text{airport}
	\vspace*{1 ex}
	
	\includegraphics[width=0.45\textwidth]{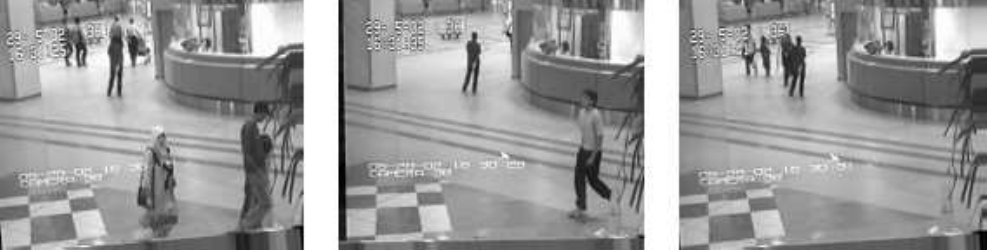}
	\vspace*{1 ex}
	\text{sidewalk}
	\vspace*{1 ex}
	\includegraphics[width=0.45\textwidth]{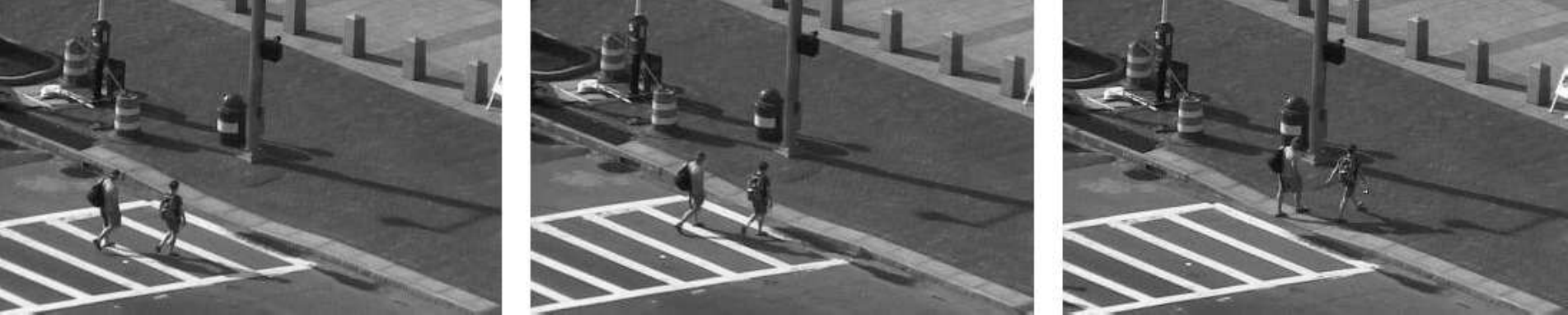}
	\vspace*{1 ex}
	\text{lunges}
	\vspace*{1 ex}
	\includegraphics[width=0.45\textwidth]{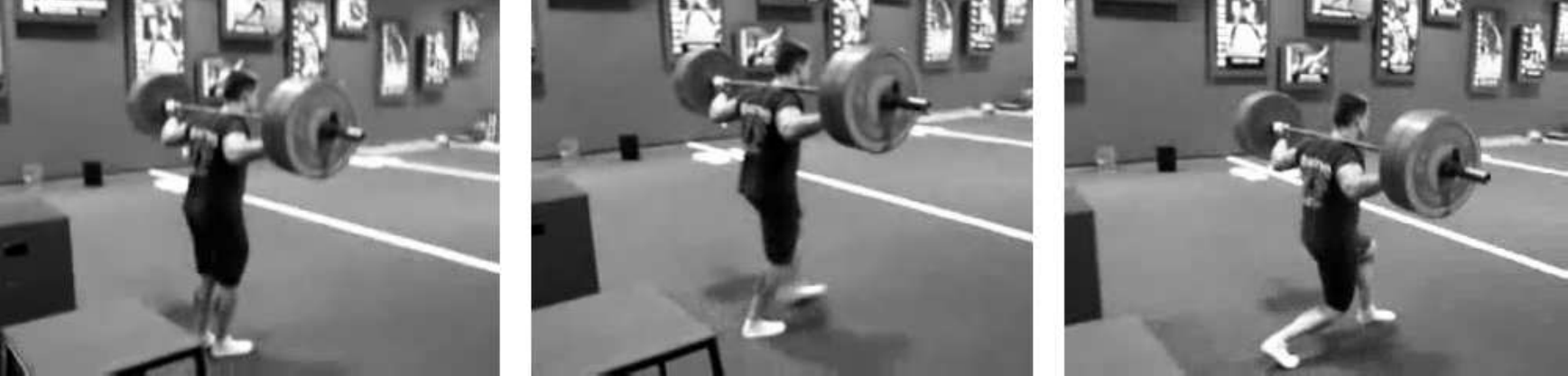}
\vspace*{1 ex}
	\text{biking}
	\vspace*{1 ex}
	\includegraphics[width=0.45\textwidth]{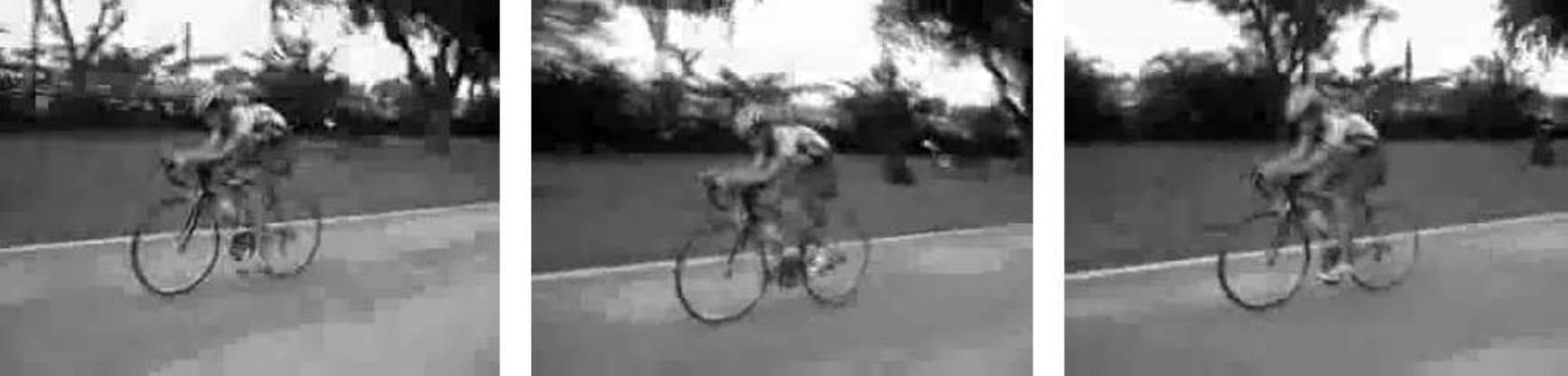}
\vspace*{1 ex}
	\text{baby}
	\vspace*{1 ex}
	\includegraphics[width=0.45\textwidth]{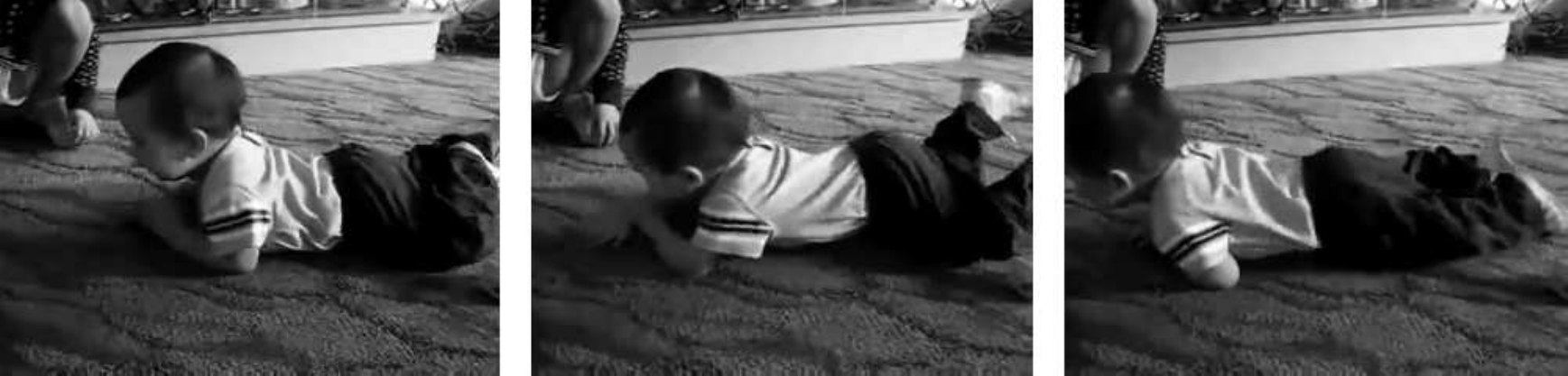}
	\vspace*{1 ex}
	\text{basketball}
	\vspace*{1 ex}
	\includegraphics[width=0.45\textwidth]{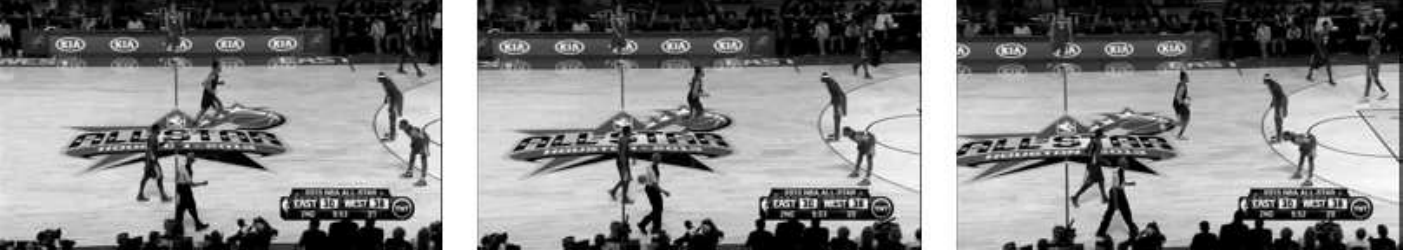}

	\caption{Some frames of six videos under test.}\label{video_depict}
\end{figure}

\begin{figure}[!htbp]
	\centering
	\includegraphics[width=0.45\textwidth]{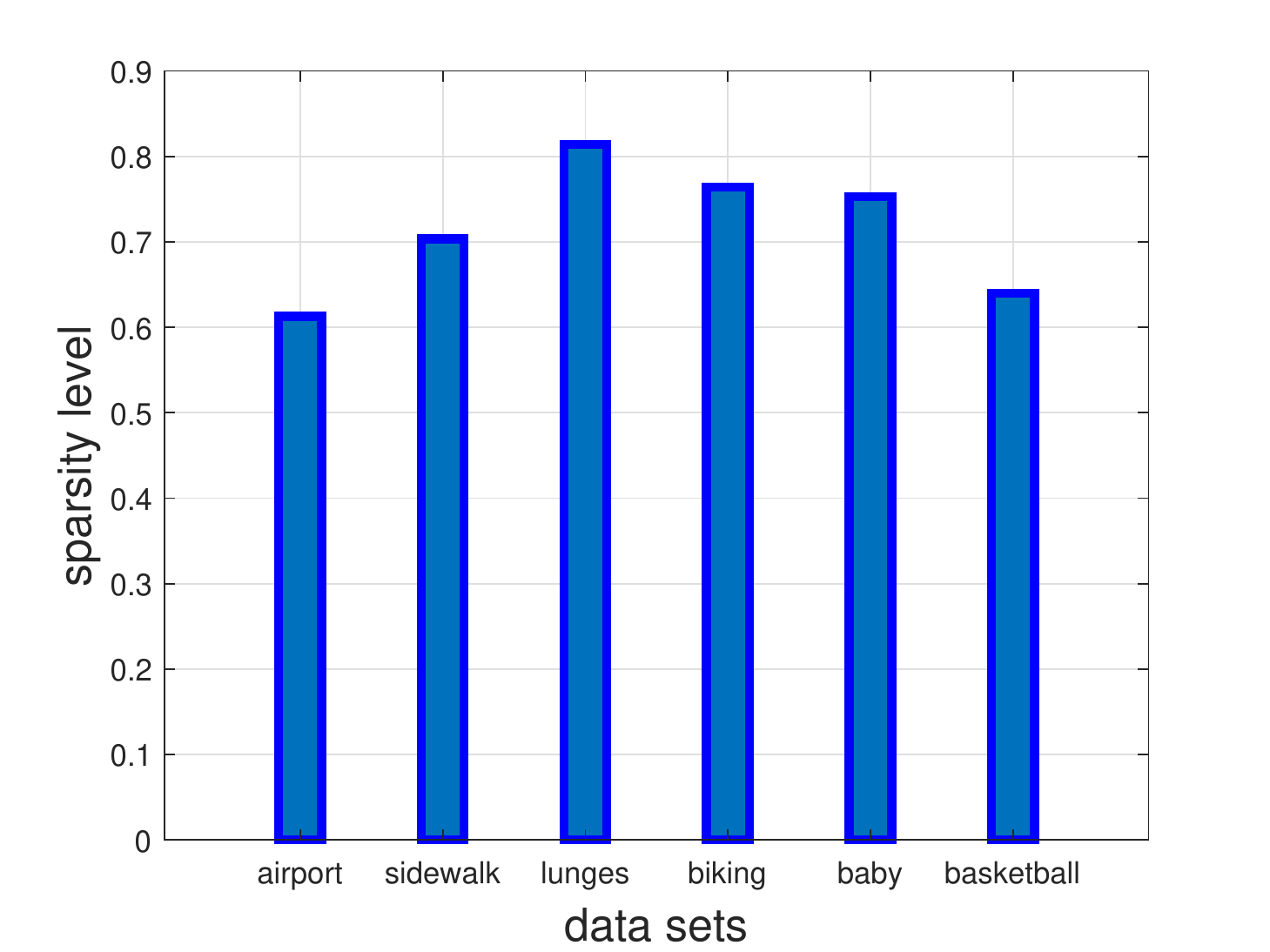}
	\caption{Sparsity level of video data sets under DCT.}\label{fig_videosparse}
\end{figure}

Now, we turn our attention to investigating the performance of our approach on incomplete videos data. In our experiments, we conduct four scenarios on the sampling rate, i.e., ${\sf sr}=\{5\%,10\%,20\%,30\%\}$, for the six video data sets. The PSNR, SSIM and TIME(s) values of the six videos obtained by the algorithms are summarized in Table \ref{video_result}, which shows that both DCT-IpST and DCT-WNN perform better than the other compared methods except the deep learning method DP3LRTC for the low sampling rate cases (i.e., ${\sf sr}\leq 20\%$). Here, it is noteworthy that the deep learning method DP3LRTC does not necessarily always achieve the best results. For example, our approach either DCT-WNN or DCT-IpST works better than DP3LRTC for ``\textbf{basketball}'' and ``\textbf{lunges}'' in some cases. 
More interestingly, our DCT-WNN and DCT-IpST always take much less computing time than DP3LRTC for video data sets to achieve comparable inpainting quality. In Fig. \ref{video_visual_sidewalk}, we first display  the destroyed (downsampled) frames (ten consecutive frames) are listed in the first row, and the recovered frames by different algorithms are displayed from the second row to the bottom. From the inpainted frames, it is obvious that both DCT-WNN and DCT-IpST can successfully obtain relatively ideal videos even for highly undersampled cases.

\begin{table*}[!htb]
	\caption{Numerical results of the ten algorithms for recovering video data sets.}\label{video_result}
	\centering
	\resizebox{\textwidth}{95mm}{
		\begin{tabular}{llllllllllllllllll}\toprule
			\multicolumn{1}{l}{\multirow{2}{*}{Videos Datasets}}&\multicolumn{1}{l}{\multirow{2}{*}{Method}} &\multicolumn{3}{c}{${\sf sr}=0.05$} &&\multicolumn{3}{c}{${\sf sr}=0.10$}&&\multicolumn{3}{c}{${\sf sr}=0.20$}&&\multicolumn{3}{c}{${\sf sr}=0.30$}\\
			\cline{3-5} \cline{7-9}\cline{11-13}	\cline{15-17}
			&&  PSNR& SSIM& TIME(s)  &&PSNR& SSIM& TIME(s) && PSNR& SSIM& TIME(s) && PSNR&SSIM&TIME(s)\\\midrule
			&IpST&17.62&0.371&16.68 &&19.6&0.485&14.01 &&22.23&0.661&12.41 &&24.32&0.778&10.64\\
			airport &TTNNL1&17.98&0.499&19.81 &&20.06&0.499&19.14 &&22.74&0.672&18.42 &&24.68&0.777&18.48\\
			$144\times176\times30$&F-TNN&17.94&0.372&111.83 &&20.68& 0.581&113.82 &&23.20&0.733&124.34 &&25.12&0.826&126.64\\
			&TNN-3DTV&17.44&0.297&169.11 &&19.15&0.426&136.12 &&21.47&0.593&124.34 &&23.46&0.713&120.98\\
			&TCTF&6.22&0.009&15.01 &&6.74&0.015&36.89 &&8.00&0.030& 38.51 &&11.51&0.091& 39.14\\
			&WSTNN&18.78&0.4403&35.34 &&20.28&0.533&27.91 &&22.29&0.669&19.64 &&23.99&0.761&16.71\\ 
			&SMF-LRTC&14.73&0.161&154.23 &&19.08&0.3967&183.62 &&23.25&0.685&196.20 &&24.93&0.763&210.60\\
			&DP3LRTC&21.14&0.629&394.33&&23.15&0.749& 384.42&&25.68&0.846& 371.67&& 27.82&0.911&364.87\\  
			&DCT-IpST&18.79&0.416&33.25 &&20.75&0.520&34.04 &&22.98&0.660&35.40 &&24.65&0.749&34.59\\ 
			&DCT-WNN&19.30& 0.445&27.41 &&21.31&0.559&31.09 &&23.38&  0.687&27.67 &&24.98&0.763&25.53\\ \midrule
			
			&IpST&19.71&0.537&64.84 &&23.30&0.684&62.70 &&27.22&0.823&50.37&&29.79&0.888&44.22\\
			sidewalk &TTNNL1&20.40&0.557&72.28 &&23.67&0.696&72.47 &&27.48&0.826&72.57 &&29.80&0.881&72.46\\
			$220\times352\times30$&F-TNN&20.12&0.551&478.43 &&21.47& 0.581& 475.44&&24.08&0.721&467.23&& 26.27& 0.800&462.11\\
			&TNN-3DTV&19.35&0.463&567.70 &&21.17&0.567&462.03&&23.76&0.691&307.24 &&26.00& 0.781&253.84\\
			&TCTF&6.22&0.013&121.53 &&9.36&0.019&121.11&&13.39&0.114& 129.77 &&18.58& 0.410& 121.26\\
			&WSTNN&23.30&0.730&100.96 &&25.72&0.813&81.01&&28.38&0.880&62.80&&30.32&0.915&48.75\\ 
			&SMF-LRTC&18.38&0.382&506.63 &&23.68&0.642&546.37 &&27.91&0.810&622.66 &&29.96&0.865&631.38\\ 
			&DP3LRTC&23.83&0.749&1230.33&&27.37&0.840& 1273.42&&30.57&0.906& 1272.67&& 32.81&0.937&1215.87\\ 
			&DCT-IpST&21.39&0.594&133.86 &&24.41& 0.711&127.81&&27.75&0.817&122.64&&29.85&  0.869&122.87\\ 
			&DCT-WNN&22.04& 0.624&106.72 &&25.14&0.740&105.36 &&28.31&  0.833&109.45&&30.29&0.880& 110.15\\ \midrule
			
			&IpST&15.16&0.324&29.98 &&17.08&0.451&27.26 &&19.81&0.632&23.11 &&21.90&0.751&19.97\\
			basketball &TTNNL1&15.95&0.405&31.87 &&18.32&0.570&28.33 &&21.21&0.704& 27.76 &&23.37&0.808&28.21\\
			$144\times 256\times30$&F-TNN&17.33&0.465&178.08&&19.09& 0.568&182.02 &&21.81&0.723&183.40 && 23.93&0.819&189.24\\
			&TNN-3DTV&16.12&0.351&290.96 &&17.86&0.494&217.19&&20.08&0.593&141.59&&21.95&0.749&116.23\\
			&TCTF&	5.87&0.015&26.65 &&6.56&0.029&36.89 && 8.34&0.083& 58.17 &&12.97&0.294& 59.13\\
			&WSTNN&16.49&0.430&45.76 &&19.02&0.566&46.02&&21.29&0.702&28.76&&23.31&0.797&23.99\\ 
			&SMF-LRTC&14.36&0.216&221.59 && 17.75&0.403&251.97 &&20.88&0.609&284.98&& 22.73&0.717&306.55\\ 
			&DP3LRTC&17.14&0.561&743.33&&19.17&0.688& 987.42&&22.01&0.817& 984.67&& 24.26&0.887&986.87\\ 
			&DCT-IpST&16.89&0.421&52.67&&20.75&0.559&53.45 &&22.10&0.708&53.15&&24.03&0.786&56.45\\ 
			&DCT-WNN&17.24& 0.438&41.45&&19.29&0.578&43.28 &&22.42&  0.717&44.86&& 24.32&0.790&46.26\\ \midrule
			
			&IpST& 21.30&0.663&95.15 &&25.38&0.812& 78.97 &&31.15&0.932&67.64 && 35.50&0.970&57.82\\
			lunges &TTNNL1&21.41&0.678&70.51 && 25.88&0.837&68.60 &&31.67&0.942&64.75&&35.42&0.971&64.12\\
			$240\times320\times30$&F-TNN&23.33&0.809&448.61&&25.60& 0.867&456.51&&28.91&0.922&460.82 &&31.70&0.950&462.13\\
			&TNN-3DTV&22.70&0.777&784.30 &&25.41&0.864& 565.78&&29.04&0.926&386.18&&32.19&0.957&310.20\\
			&TCTF&6.37&0.011&58.18 &&7.45&0.021&58.39 &&12.54&0.245& 59.79 &&20.92&0.511&  60.19\\
			&WSTNN&19.96&0.676&103.24&&23.77&0.808&78.69 && 27.82&0.669&68.75&& 31.25&0.903&83.01\\ 
			&SMF-LRTC&17.40&0.327&461.25&&22.98&0.675&532.40&&29.69&0.902&601.66 &&33.10&0.944&638.61\\ 
			&DP3LRTC&24.55&0.873&1226.33&&28.14&0.933& 1340.42&&32.71&0.971& 1378.67&& 35.94&0.984&1347.87\\ 
			&DCT-IpST&23.43&0.774&121.29 &&27.94&0.799&125.86&& 33.06&0.955&124.13 &&36.36&0.976&121.36\\ 
			&DCT-WNN&24.25& 0.799&105.47&&29.19&0.911&110.56&& 34.18&  0.963&115.50&&37.29&0.979&118.03\\ \midrule
			
			&IpST&19.54&0.588&78.98 &&23.25&0.714&70.72 &&29.26&0.886& 63.287 &&33.17&0.945&53.48\\
			biking &TTNNL1&20.57&0.631&70.81 &&24.16&0.750&68.96 &&29.16&0.885& 66.37 &&32.25&0.933&67.24\\
			$240\times320\times30$&F-TNN&20.81&0.640&455.86 &&22.83& 0.701&457.11 &&26.15&0.809&463.97&& 28.77&0.878&464.76\\
			&TNN-3DTV&19.06&0.529&741.98 &&22.08&0.659&575.65&&25.88&0.804& 412.79&&28.89&0.884&324.97\\
			&TCTF&	7.44&0.011&15.01 &&8.23&0.016&36.89 && 12.86&0.181& 38.51 &&17.18&0.500& 39.14\\
			&WSTNN&20.49&0.660&106.80&&24.13&0.781&80.386 &&27.85&0.873&68.03&&30.76&0.921& 72.38\\ 
			&SMF-LRTC&17.21&0.417&459.85 &&23.46&0.683&530.40 && 30.03&0.884&594.52&& 33.29&0.932&615.38\\ 
			&DP3LRTC&25.17&0.829&1219.33&&28.45&0.898& 1217.42&&31.80&0.946& 1222.67&& 34.17&0.966&1211.87\\ 
			&DCT-IpST&21.79&0.665&121.81 &&25.13&0.771&125.62 &&29.62&0.884&123.93 &&32.46&0.930&125.49\\ 
			&DCT-WNN&22.44& 0.693&104.37 &&26.17&0.810&108.59 &&30.55&  0.905&115.97 &&33.23&0.942&115.84\\ \midrule
			
			&IpST&20.46&0.478&81.32&&23.40&0.635&64.18 && 27.30&0.810&54.44 &&30.27&0.895& 45.72\\
			baby &TTNNL1&20.64&0.499&80.29 &&23.67&0.658&70.88&&27.43&0.826&65.43&& 30.16&0.887& 67.48\\
			$240\times320\times30$&F-TNN&21.32&0.542&503.42 &&23.13& 0.631&454.69&&25.96&0.757&463.50&&28.16&0.835&462.16\\
			&TNN-3DTV&20.50&0.467&758.46 && 22.83&0.610&463.90&&25.90&0.758&336.95 &&28.35&0.845&266.91\\
			&TCTF&	8.19&0.011&51.14&& 9.29&0.020&59.54&& 13.23&0.179& 60.43 &&19.95&0.477&  60.94\\
			&WSTNN&21.48&0.577&100.42&&23.66&0.695&80.13&&26.42&0.811&61.08&&28.73&0.879& 50.98\\ 
			&SMF-LRTC&18.46&0.347&489.33&&23.68&0.601& 540.42&&28.28&0.810& 596.67&& 30.51&0.873&625.87\\ 
			&DP3LRTC&24.60&0.723&1208.33&&27.52&0.822& 1213.42&&31.05&0.909& 1216.67&& 33.44&0.944&1237.87\\ 
			&DCT-IpST&21.88&0.551&121.29&&24.71&0.689&123.93&& 27.92&0.813&124.60&&30.39&0.881&125.18\\ 
			&DCT-WNN&22.63& 0.592&102.43&&25.49&0.729&105.54&&28.68&  0.839&112.11&&31.13&0.898&112.27\\ \midrule
	\end{tabular}	}
\end{table*}


\begin{figure*}[!htb]
	\centering
	\includegraphics[width=1\textwidth]{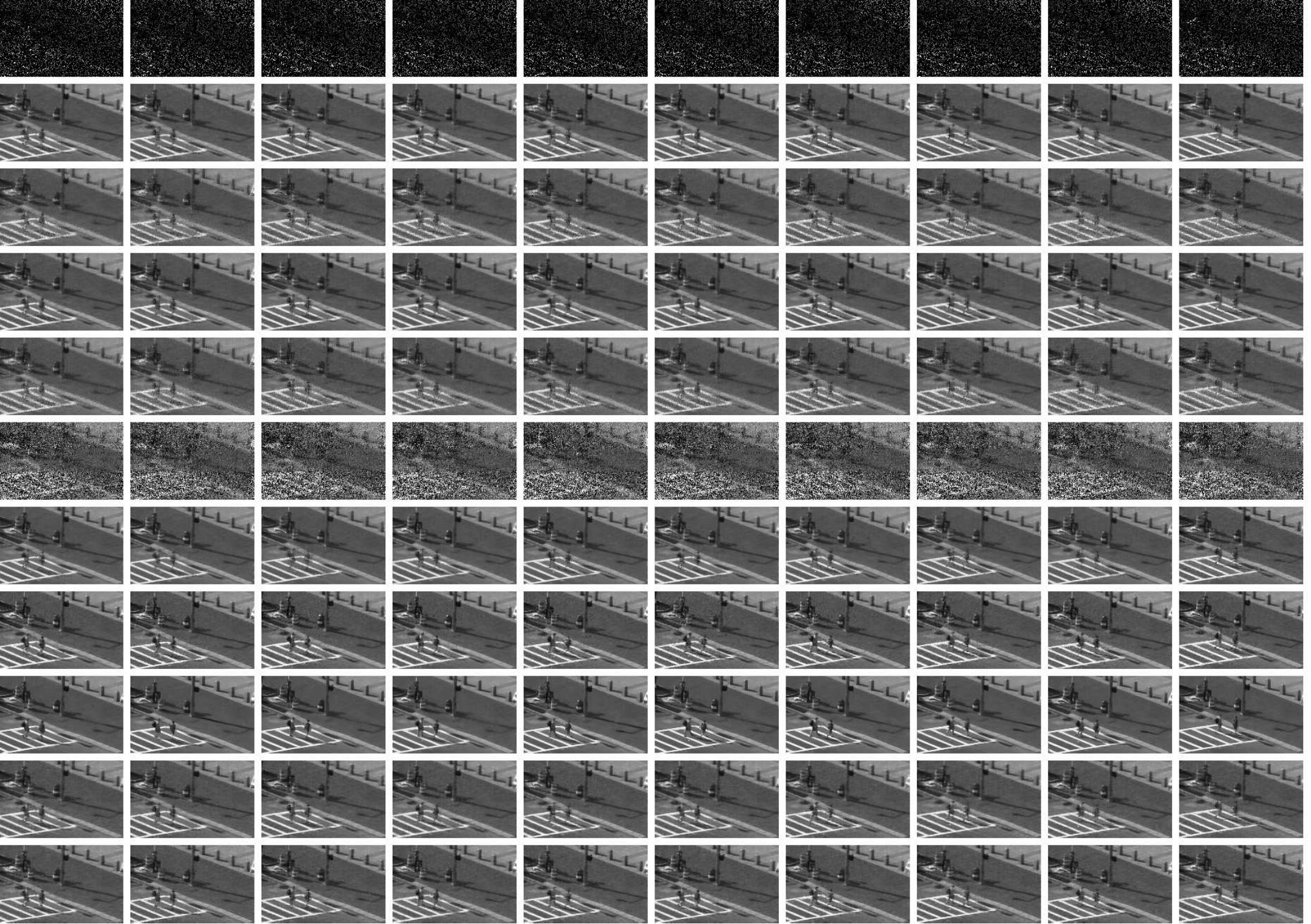}
	\caption{Visualizations of recovered `sidewalk' data sets by the ten algorithms. From left to right: ten consecutive observed frames with $20\%$ information. From top to bottom: observed frames, and frames recovered by IpST, TTNNL1, F-TNN, TNN-3DTV, TCTF, WSTNN, SMF-LRTC, DP3LRTC, DCT-IpST and DCT-WNN, respectively.}
	\label{video_visual_sidewalk}
\end{figure*}

{\subsection{Discussion}
\textbf{Parameters sensitivity}: Notice that there are some model parameters in our models, which are very important for the numerical performance of the models. However, it is well-known that setting model parameters is an incredibly difficult task in practice. Generally speaking, we can take constants empirically according to existing experiments in the literature. Here, we will conduct the numerical sensitivity of our models' parameters. Note that there is a triple of parameters in model \eqref{WTNN-question-1}, i.e., $\{(\alpha_1,\cdots,\alpha_N), \delta,\lambda\}$. According to \cite{SLH19,ZHZJM20}, we can easily determine the choice of $\alpha_i$'s.  So, we below are concerned with the sensitivity of $\delta$ and $\lambda$. Similarly, for model \eqref{question-1}, we focus on the sensitivity of $p$ and $\lambda$. In our experiments, we consider the ``\textbf{airport}'' video data set with ${\sf sr}=30\%$. The parameter sensitivity results are summarized in Fig. \ref{parameter_p}. It can be easily seen from Fig. \ref{parameter_p} that both models run relatively robust for $\lambda\in[0.01,1]$, which also provides us some suggestions on the settings for the above experiments.
\begin{figure}[!htb]
	\centering
	\includegraphics[width=.24\textwidth]{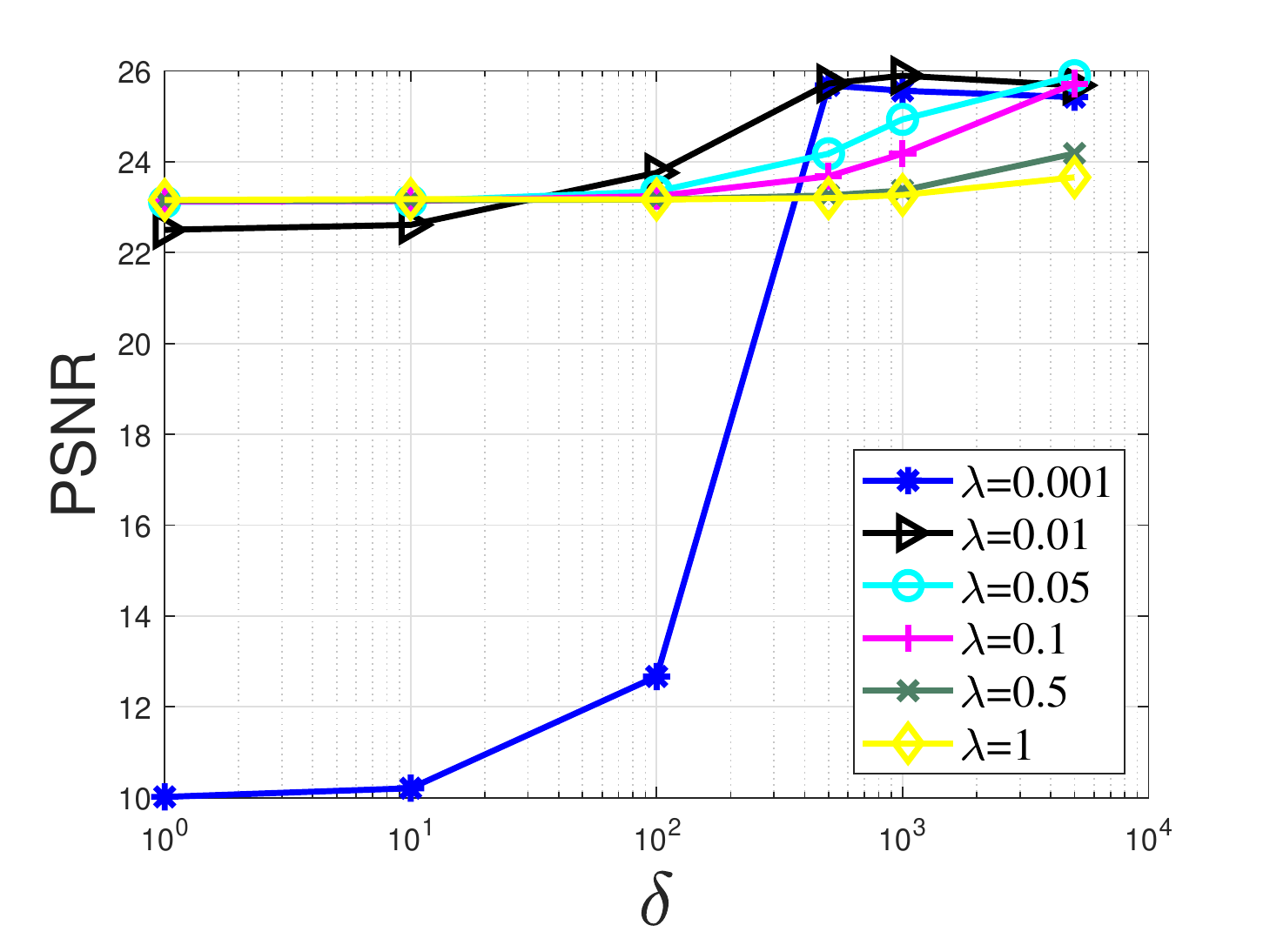}
	\includegraphics[width=.24\textwidth]{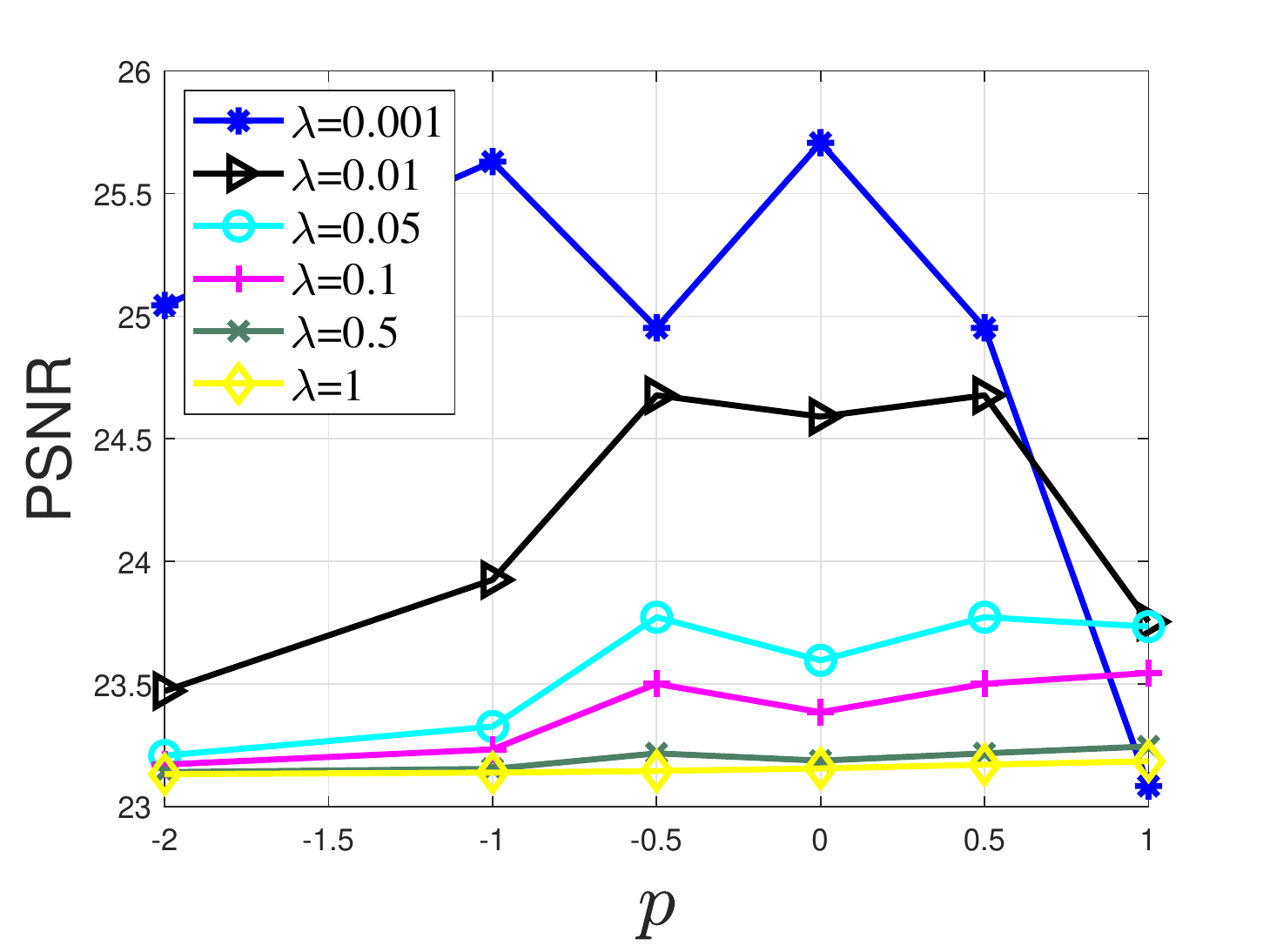}	
	\caption{Numerical sensitivity of model parameters $\lambda$, $\delta$, and $p$. The left figure corresponds to our first model \eqref{WTNN-question-1}. The right one corresponds to the second model \eqref{question-1}.}
	\label{parameter_p}
\end{figure}

\textbf{Model discussion}: Notice that the objective functions of our models consists of two nonsmooth parts. Ones will be interested in what will happen if we remove one of the nonsmooth parts. To answer this question, we shall investigate the numerical performance of our models when removing the low-rank regularization term or the sparse term. Here, we also focus on the ``\textbf{airport}''  video data set with ${\sf sr}=\{5\%,10\%,20\%,30\%\}$ in our experiments. In Fig. \ref{low_rank_sparse}, ``\textbf{only low-rank term}'' and ``\textbf{only sparse term}'' correspond to our models without sparse term and without low-rank regularization, respectively. From the plots in Fig. \ref{low_rank_sparse}, we observe that our models equipped with only one low-rank term works better than the cases with only one sparse term. However, the proposed models with sparse and low-rank terms have the best numerical performance, which also sufficiently supports the idea of this paper.
\begin{figure}[!htb]
	\centering
	\includegraphics[width=.24\textwidth]{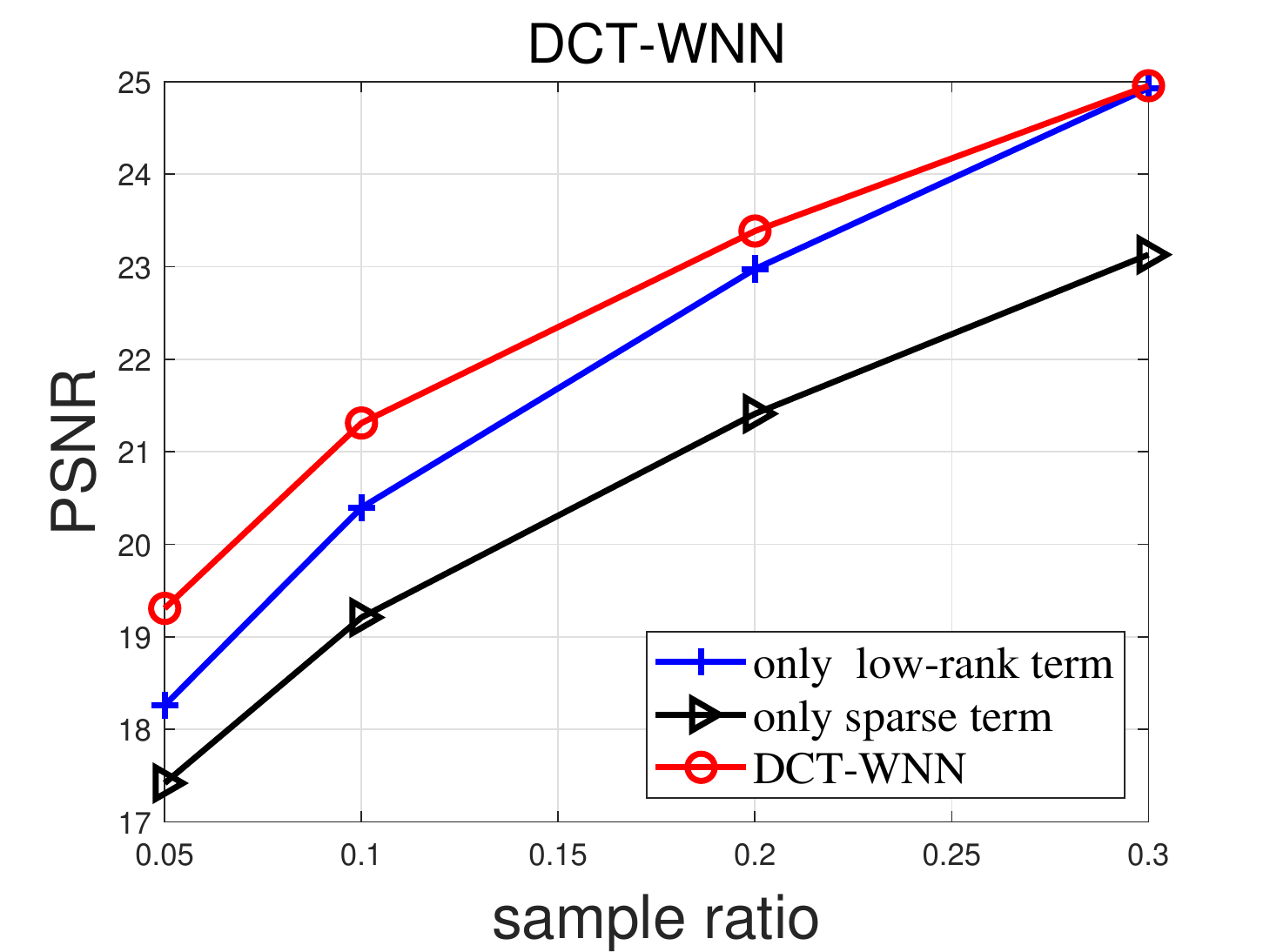}
	\includegraphics[width=.24\textwidth]{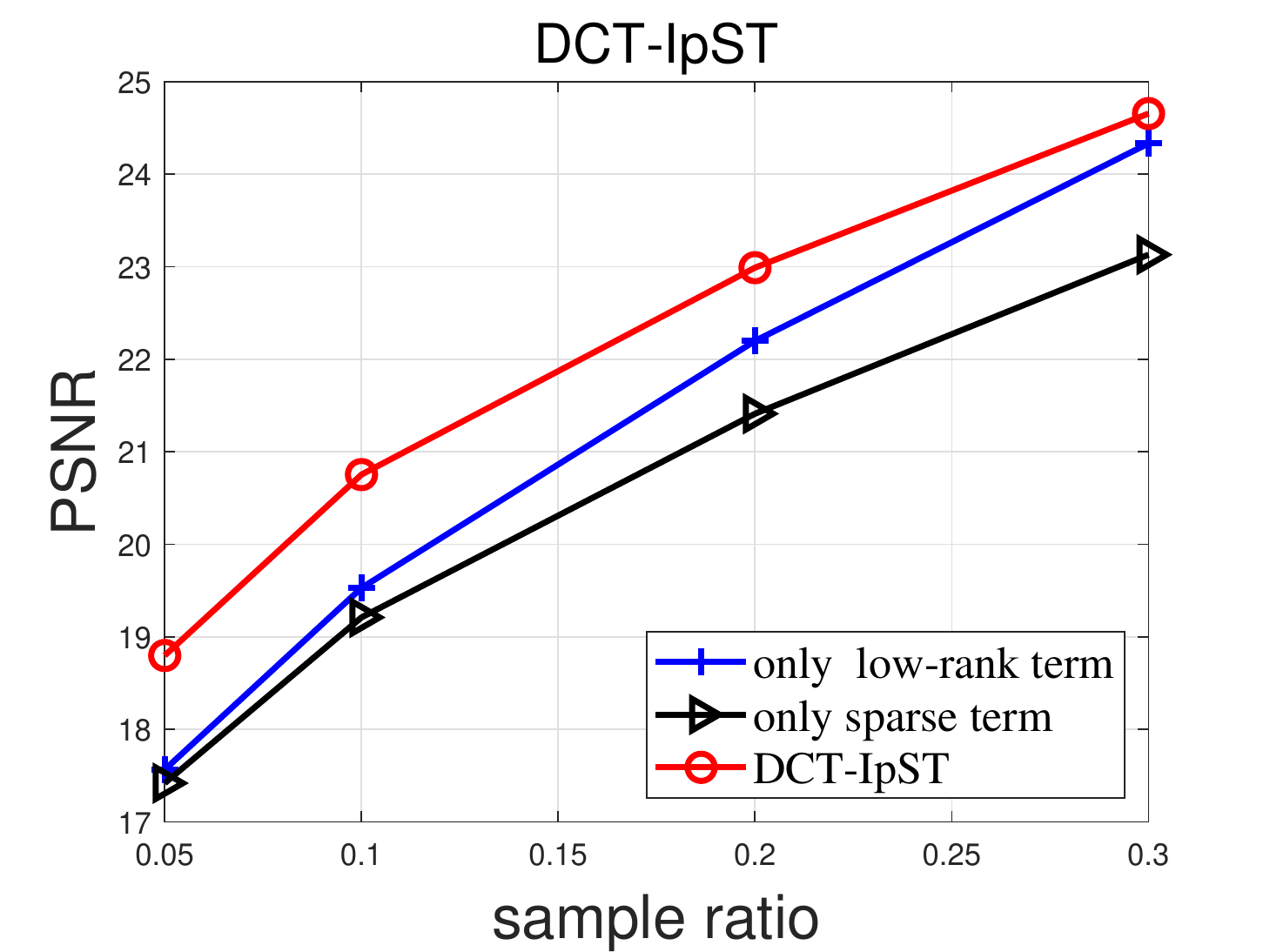}	
	\caption{Numerical performance of our models with only one low-rank objective or with only one sparse regularization term, respectively.}
	\label{low_rank_sparse}
\end{figure}
}

In summary, the above computational results show that the proposed DCT-WNN and DCT-IpST are powerful to complete different types of downsampled images and videos data sets. {Although our DCT-WNN and DCT-IpST cannot obtain the same inpainting quality of the state-of-the-art deep learning method DP3LRTC, DCT-WNN and DCT-IpST run much faster for large-scale data sets than DP3LRTC on a general personal computer. From the computing time and computer hardware perspectives, those model-driven approaches still have their own advantages. Moreover, we believe that our models could be further improved if some deep learning techniques are embedded in. We leave it as one of our future concerns.}

\section{Conclusion}\label{ConRemark}
In this paper, we introduced a unified {``sparse + low-rank''} tensor completion approach, which includes two structured optimization models for recovering images and videos from highly undersampled data. By introducing auxiliary variables, we gainfully separated the two nonsmooth terms appeared in the objective functions, thereby being of benefit for designing easily implementable algorithms for the underlying separable models. A series of computational experiments on RGB images and surveillance videos data sets demonstrated that our approach performs better than some state-of-the-art {model-driven} tensor completion approaches.

\bigskip
\noindent{\bf Declaration of Competing Interest}: The authors declared that they have no known competing financial interests or personal relationships that could have appeared to influence the work reported in this paper.

\section*{Acknowledgments}
{The authors are grateful to the editor and two anonymous referees for their close reading and valuable comments, which led to great improvements of the paper. They also would like to thank all authors of \cite{HLHS17,JLLS18,ZLLZ18,JNZH20,ZHZJM20,SLH19,ZHJZJM19,ZXJWN20} for sharing their code, especially thank Dr. Yufan Li and Mr. Zhiyuan Zhang for their kind help on our experiments.} C. Ling and H. He were supported in part by National Natural Science Foundation of China (Nos. 11971138 and 11771113) and Natural Science Foundation of Zhejiang Province (Nos. LY19A010019 and LD19A010002).


\begin{thebibliography}{10}
	\providecommand{\url}[1]{#1}
	\csname url@samestyle\endcsname
	\providecommand{\newblock}{\relax}
	\providecommand{\bibinfo}[2]{#2}
	\providecommand{\BIBentrySTDinterwordspacing}{\spaceskip=0pt\relax}
	\providecommand{\BIBentryALTinterwordstretchfactor}{4}
	\providecommand{\BIBentryALTinterwordspacing}{\spaceskip=\fontdimen2\font plus
		\BIBentryALTinterwordstretchfactor\fontdimen3\font minus
		\fontdimen4\font\relax}
	\providecommand{\BIBforeignlanguage}[2]{{%
			\expandafter\ifx\csname l@#1\endcsname\relax
			\typeout{** WARNING: IEEEtran.bst: No hyphenation pattern has been}%
			\typeout{** loaded for the language `#1'. Using the pattern for}%
			\typeout{** the default language instead.}%
			\else
			\language=\csname l@#1\endcsname
			\fi
			#2}}
	\providecommand{\BIBdecl}{\relax}
	\BIBdecl
	
	\bibitem{K06}
	N.~Komodakis, ``Image completion using global optimization,'' in \emph{Computer
		Vision and Pattern Recognition}, vol.~1, 2006, pp. 442--452.
	
	\bibitem{HKBH13}
	N.~Hao, M.~Kilmer, K.~Braman, and R.~Hoover, ``Facial recognition using
	tensor-tensor decompositions,'' \emph{SIAM J. Imaging Sci.}, vol.~6, no.~1,
	pp. 437--463, 2013.
	
	\bibitem{CPT04}
	A.~Criminisi, P.~Perez, and K.~Toyama, ``Region filling and object removal by
	exemplar-based image inpainting,'' \emph{IEEE Trans. Image Process.},
	vol.~13, no.~9, pp. 1200--1212, 2004.
	
	\bibitem{GSZW11}
	X.~Geng, K.~Smith-Miles, Z.~Zhou, and L.~Wang, ``Face image modeling by
	multilinear subspace analysis with missing values,'' \emph{Proceedings of the
		17th ACM international conference on Multimedia}, 2009.
	
	\bibitem{AK99}
	A.~Efros and T.~Leung, ``Texture synthesis by non-parametric sampling,'' in
	\emph{Computer Vision, 1999. The Proceedings of the Seventh IEEE
		International Conference on}, 1999, pp. 1033--1038.
	
	\bibitem{DAB99}
	D.~Coupier, A.~Desolneux, and B.~Ycart, ``Image denoising by statistical area
	thresholding.'' in \emph{Mustererkennung, 21 Dagm-symposium, Bonn,
		September}, 1999.
	
	\bibitem{GL13}
	G.~Golub and C.~{Van Loan}, \emph{Matrix Computations}, 4th~ed.\hskip 1em plus
	0.5em minus 0.4em\relax Baltimore: The Johns Hopkins University Press, 2013.
	
	\bibitem{SGCH19}
	Q.~Song, H.~Ge, J.~Caverlee, and X.~Hu, ``Tensor completion algorithms in big
	data analytics,'' \emph{ACM Trans. Knowl. Discovery Data}, vol.~13, pp.
	1--48, 2019.
	
	\bibitem{LMWY13}
	J.~Liu, P.~Musialski, P.~Wonka, and J.~Ye, ``Tensor completion for estimating
	missing values in visual data,'' \emph{IEEE Trans. Pattern Anal. Mach.
		Intell.}, vol.~35, no.~1, pp. 208--220, 2013.
	
	\bibitem{GRY11}
	S.~Gandy, B.~Recht, and I.~Yamada, ``Tensor completion and low-n-rank tensor
	recovery via convex optimization,'' \emph{Inverse Probl.}, vol.~27, p.
	025010, 2011.
	
	\bibitem{LFLY18}
	C.~Lu, J.~Feng, Z.~Lin, and S.~Yan, ``Exact low tubal rank tensor recovery from
	{G}aussian measurements,'' in \emph{Proceedings of the Twenty-Seventh
		International Joint Conference on Artificial Intelligence, {IJCAI-18}}.\hskip
	1em plus 0.5em minus 0.4em\relax International Joint Conferences on
	Artificial Intelligence Organization, 7 2018, pp. 2504--2510.
	
	\bibitem{ZLLZ18}
	P.~Zhou, C.~Lu, Z.~Lin, and C.~Zhang, ``Tensor factorization for low-rank
	tensor completion,'' \emph{IEEE Trans. Image Process.}, vol.~27, no.~3, pp.
	1152--1163, 2018.
	
	\bibitem{CHL14}
	Y.~Chen, C.~Hsu, and H.~Liao, ``Simultaneous tensor decomposition and
	completion using factor priors,'' \emph{IEEE Trans. Pattern Anal. Mach.
		Intell.}, vol.~36, no.~3, pp. 577--591, 2014.
	
	\bibitem{CR09}
	E.~Cand\`{e}s and B.~Recht, ``Exact matrix completion via convex
	optimization,'' \emph{Found. Comput. Math.}, vol.~9, no.~6, pp. 717--772,
	2009.
	
	\bibitem{HL13}
	C.~Hillar and L.~Lim, ``Most tensor problems are {NP}-hard,'' \emph{J.
		ACM}, vol.~6, pp. 1--39, 2013.
	
	\bibitem{KB09}
	T.~Kolda and B.~Bader, ``Tensor decompositions and applications,''
	\emph{SIAM Rev.}, vol.~51, pp. 455--500, 2009.
	
	\bibitem{Ose11}
	I.~Oseledets, ``Tensor-train decomposition,'' \emph{SIAM J. Sci. Comput.},
	vol.~33, no.~5, pp. 2295--2317, 2011.
	
	\bibitem{JHZJD18}
	T.~Jiang, T.~Huang, X.~Zhao, T.~Ji, and L.~Deng, ``Matrix
	factorization for low-rank tensor completion using framelet prior,''
	\emph{Inf. Sci.}, vol. 436-437, pp. 403--417, 2018.
	
	\bibitem{HOM16}
	K.~{Hosono}, S.~{Ono}, and T.~{Miyata}, ``Weighted tensor nuclear norm
	minimization for color image denoising,'' in \emph{2016 IEEE International
		Conference on Image Processing (ICIP)}, 2016, pp. 3081--3085.
	
	\bibitem{HSCW14}
	L.~Huang, H.~So, Y.~Chen, and W.~Wang, ``Truncated nuclear norm
	minimization for tensor completion,'' in \emph{IEEE Sensor Array and
		Multichannel Signal Processing Workshop}, 2014.
	
	\bibitem{BPTD17}
	J.~Bengua, H.~Phien, H.~D. Tuan, and M.~Do, ``Efficient tensor completion
	for color image and video recovery: Low-rank tensor train,'' \emph{IEEE
		Trans. Image Process.}, vol.~26, no.~5, pp. 2466--2479, 2017.
	
	\bibitem{JHZML16}
	T.~Y. Ji, T.~Z. Huang, X.~L. Zhao, T.~H. Ma, and G.~Liu, ``Tensor completion
	using total variation and low-rank matrix factorization,'' \emph{Inf. Sci.},
	vol. 326, pp. 243--257, 2016.
	
	\bibitem{XZLC19}
	J.~Xue, Y.~Zhao, W.~Liao, and J.~C. Chan, ``Nonconvex tensor rank minimization
	and its applications to tensor recovery,'' \emph{Inf. Sci.}, vol. 503, pp.
	109--128, 2019.
	
	\bibitem{ZA17}
	Z.~{Zhang} and S.~{Aeron}, ``Exact tensor completion using t-{SVD},''
	\emph{IEEE Trans. Signal Process.}, vol.~65, no.~6, pp. 1511--1526, 2017.
	
	\bibitem{LFCLLY16}
	C.~Lu, J.~Feng, Y.~Chen, W.~Liu, Z.~Lin, and S.~Yan, ``Tensor robust principal
	component analysis: Exact recovery of corrupted low-rank tensors via convex
	optimization,'' \emph{2016 IEEE Conference on Computer Vision and Pattern
		Recognition (CVPR)}, pp. 5249--5257, 2016.
	
	\bibitem{LFCLLY20}
	C.~Lu, J.~Feng, Y.~ Chen, W.~Liu, Z.~Lin, and S.~ Yan ``Tensor robust principal component analysis with a new tensor nuclear
	norm,'' \emph{IEEE Trans. Pattern Anal. Mach. Intell.}, vol.~42, pp.
	925--938, 2020.
	
	\bibitem{YZ16}
	M.~Yuan and C.~Zhang, ``On tensor completion via nuclear norm
	minimization,'' \emph{Found. Comput. Math.}, vol.~16, pp. 1031--1068, 2016.
	
	\bibitem{KM11}
	M.~Kilmer and C.~Martin, ``Factorization strategies for third-order tensors,''
	\emph{Linear Algebra Appl.}, vol. 435, no.~3, pp. 641--658, 2011.
	
	\bibitem{ZZC15}
	Q.~Zhao, L.~Zhang, and A.~Cichocki, ``Bayesian {CP} factorization of
	incomplete tensors with automatic rank determination,'' \emph{IEEE Trans.
		Pattern Anal. Mach. Intell.}, vol.~37, no.~9, pp. 1751--1763, 2015.
	
	\bibitem{FJ15}
	M.~Filipovi{\'{c}} and A.~Juki{\'{c}}, ``Tucker factorization with missing data
	with application to low-n-rank tensor completion,'' \emph{Multidim. Syst.
		Sign. Process.}, vol.~26, no.~3, pp. 677--692, 2015.
	
	\bibitem{YZC16}
	T.~Yokota, Q.~Zhao, and A.~Cichocki, ``Smooth {PARAFAC} decomposition for
	tensor completion,'' \emph{IEEE Trans. Signal Process.}, vol.~64, no.~20, pp.
	5423--5436, 2016.
	
	\bibitem{WTLZC19}
	Y.~Wu, H.~Tan, Y.~Li, J.~Zhang, and X.~Chen, ``A fused {CP} factorization
	method for incomplete tensors,'' \emph{IEEE Trans. Neural. Netw Learn.
		Syst.}, vol.~30, no.~3, pp. 751--764, 2019.
	
	\bibitem{LNZ20}
	X.~Lin, M.~Ng, and X.~Zhao, ``Tensor factorization with total variation and
	tikhonov regularization for low-rank tensor completion in imaging data,''
	\emph{J. Math. Imaging Vis.}, vol.~62, pp. 900--918, 2020.
	
	\bibitem{LSFCC16}
	Y.~Liu, F.~Shang, W.~Fan, J.~Cheng, and H.~Cheng, ``Generalized higher order
	orthogonal iteration for tensor learning and decomposition,'' \emph{IEEE
		Trans. Neural. Netw Learn. Syst.}, vol.~27, no.~12, pp. 2551--2563, 2016.
	
	\bibitem{ROF92}
	L.~Rudin, S.~Osher, and E.~Fatemi, ``Nonlinear total variation based noise
	removal algorithms,'' \emph{Physica D}, vol.~60, pp. 227--238, 1992.
	
	\bibitem{VMG12}
	V.~Varghees, M.~Manikandan, and R.~Gini, ``Adaptive mri image denoising
	using total-variation and local noise estimation,'' in \emph{Proceedings of
		the 2012 International Conference on Advances in Engineering, Science and
		Management (ICAESM)}, 2012, pp. 506--511.
	
	\bibitem{QLCGZ20}
	M.~{Qin}, Z.~{Li}, S.~{Chen}, Q.~{Guan}, and J.~{Zheng}, ``Low-rank tensor
	completion and total variation minimization for color image inpainting,''
	\emph{IEEE Access}, vol.~8, pp. 53\,049--53\,061, 2020.
	
	\bibitem{ZNZJH19}
	X.~Zhao, X.~Nie, Y.~Zheng, T.~Ji, and T.~Huang, ``Low-rank tensor
	completion via tensor nuclear norm with hybrid smooth regularization,''
	\emph{IEEE Access}, vol.~7, pp. 131\,888--131\,901, 2019.
	
	\bibitem{JLLS18}
	F.~Jiang, X.~Liu, H.~Lu, and R.~Shen, ``Anisotropic total variation regularized
	low-rank tensor completion based on tensor nuclear norm for color image
	inpainting,'' in \emph{2018 IEEE International Conference on Acoustics,
		Speech and Signal Processing (ICASSP)}, 2018, pp. 1363--1367.
	
	\bibitem{QBNZ21}
	D.~Qiu, M.~Bai, M.~Ng, and X.~Zhang, ``Robust low-rank tensor completion via
	transformed tensor nuclear norm with total variation regularization,''
	\emph{Neurocomputing}, vol. 435, pp. 197--215, 2021.
	
	\bibitem{WXYXT18}
	Y.~Wang, C.~Xu, S.~You, C.~Xu, and D.~Tao, ``{DCT} regularized extreme
	visual recovery,'' \emph{IEEE Trans. Image Process.}, vol.~26, pp.
	3360--3371, 2018.
	
	\bibitem{AI07}
	Y.~Abe and Y.~Iiguni, ``Image restoration from a downsampled image by using the
	{DCT},'' \emph{Signal Processing}, vol.~87, pp. 2370--2380, 2007.
	
	\bibitem{DHQYWCZ19}
	Y.~{Du}, G.~{Han}, Y.~{Quan}, Z.~{Yu}, H.~{Wong}, C.~L.~P. {Chen}, and
	J.~{Zhang}, ``Exploiting global low-rank structure and local sparsity nature
	for tensor completion,'' \emph{IEEE Trans. Cybern.}, vol.~49, no.~11, pp.
	3898--3910, 2019.
	
	\bibitem{HLHS17}
	Z.~Han, C.~Leung, L.~Huang, and H.~So, ``Sparse and truncated nuclear
	norm based tensor completion,'' \emph{Neural Process. Lett.}, vol.~45, pp.
	729--743, 2017.
	
	\bibitem{LSS13}
	Y.~Li, L.~Shen, and B.~Suter, ``Adaptive inpainting algorithm based on {DCT}
	induced wavelet regularization,'' \emph{IEEE Trans. Image Process.}, vol.~22,
	pp. 752--763, 2013.
	
	\bibitem{MG18}
	B.~Madathil and S.~George, ``{DCT} based weighted adaptive multi-linear data
	completion and denoising,'' \emph{Neurocomputing}, vol. 318, pp. 120--136,
	2018.
	
	\bibitem{ZW91}
	Y.~Wang and Q.~Zhu, ``Signal loss recovery in {DCT}-based image and video
	codecs,'' in \emph{Visual Communications and Image Processing '91: Visual
		Communication}, K.-H. Tzou and T.~Koga, Eds., vol. 1605, International
	Society for Optics and Photonics.\hskip 1em plus 0.5em minus 0.4em\relax
	SPIE, 1991, pp. 667--678.
	
	\bibitem{JNZH20}
	T.~Jiang, M.~Ng, X.~Zhao, and T.~Huang, ``Framelet representation
	of tensor nuclear norm for third-order tensor completion,'' \emph{IEEE Trans.
		Image Process.}, vol.~29, pp. 7233--7244, 2020.
	
	\bibitem{JZZN21}
	T.~Jiang, X.~Zhao, H.~Zhang, and M.~Ng, ``Dictionary learning with low-rank
	coding coefficients for tensor completion,'' \emph{IEEE Trans. Neural. Netw
		Learn. Syst.}, 2021.
	
	\bibitem{ZXJWN20}
	X.~Zhao, W.~Xu, T.~Jiang, Y.~Wang, and M.~Ng, ``Deep plug-and-play prior for
	low-rank tensor completion,'' \emph{Neurocomputing}, vol.~400, pp. 137--149,
	2020.
	
	\bibitem{Ch14}
	R.~{Chartrand}, ``Shrinkage mappings and their induced penalty functions,'' in
	\emph{2014 IEEE International Conference on Acoustics, Speech and Signal
		Processing (ICASSP)}, 2014, pp. 1026--1029.
	
	\bibitem{Char09}
	R.~Chartrand, ``Fast algorithms for nonconvex compressive sensing: {MRI}
	reconstruction from very few data,'' in \emph{2009 IEEE International
		Symposium on Biomedical Imaging: From Nano to Macro}, 2009, pp. 262--265.
	
	\bibitem{SLH19}
	K.~Shang, Y.~Li, and Z.~Huang, ``Iterative p-shrinkage thresholding
	algorithm for low {T}ucker rank tensor recovery,'' \emph{Inf. Sci.}, vol.~482, pp. 374--391, 2019.
	
	\bibitem{GL11}
	S.~Ga{\"i}ffas and G.~Lecu{\'{e}}, ``Weighted algorithms for compressed sensing
	and matrix completion,'' 2011, https://arxiv.org/abs/1107.1638.
	
	\bibitem{LTYL16}
	C.~Lu, J.~Tang, S.~Yan, and Z.~Lin, ``Nonconvex nonsmooth low rank minimization
	via ititerative reweighted nuclear norm,'' \emph{IEEE Trans. Image Process.},
	vol.~25, pp. 829--839, 2016.
	
	\bibitem{WC16}
	J.~Woodworth and R.~Chartrand, ``Compressed sensing recovery via nonconvex
	shrinkage penalties,'' \emph{Inverse Probl.}, vol.~32, pp. 075004, 2016.
	
	\bibitem{OTBKK16}
	T.~Oh, Y.~Tai, J.~Bazin, H.~Kim, and I.~Kweon, ``Partial sum minimization of
	singular values in robust {PCA}: Algorithm and applications,'' \emph{IEEE
		Trans. Image Process.}, vol.~38, pp. 744--758, 2016.
	
	\bibitem{ZHZJM20}
	Y.~Zheng, T.~Huang, X.~Zhao, T.~Jiang, and T.~Ma, ``Tensor 
	n-tubal rank and its convex relaxation for low-rank tensor recovery,''
	\emph{Information Sciences.}, vol.~532, pp. 170--189, 2020.
	
	\bibitem{ZHJZJM19}
	Y.~Zheng, T.~Huang, T.~Ji, X.~Zhao, T.~Jiang, and T.~Ma, ``Low-rank tensor
	completion via smooth matrix factorization,'' \emph{Appl. Math. Model.},
	vol.~70, pp. 677--695, 2019.
	
	\bibitem{WBSS04}
	Z.~Wang, A.~Bovik, H.~Sheikh, E.~Simoncelli, ``Image quality assessment: From
	error visibility to structural similarity'', \emph{IEEE Trans. Image Process.}, vol.~13,
	pp. 600--612, 2004.
	
\end{thebibliography}


\end{document}